\documentclass{article}

\usepackage{microtype}
\usepackage{graphicx}
\usepackage{subcaption}
\usepackage{booktabs} 
\usepackage{makecell}
\usepackage{multirow}
\usepackage{multicol}
\usepackage{lmodern}

\usepackage{hyperref}

\usepackage[accepted]{icml2026_arxiv}

\usepackage{amsmath}
\usepackage{amssymb}
\usepackage{mathtools}
\usepackage{amsthm}
\usepackage{graphicx}
\usepackage{mathrsfs}
\usepackage{bm}
\usepackage{soul}

\usepackage[capitalize,noabbrev]{cleveref}

\theoremstyle{plain}
\newtheorem{theorem}{Theorem}[section]
\newtheorem{proposition}[theorem]{Proposition}
\newtheorem{lemma}[theorem]{Lemma}

\theoremstyle{definition}
\newtheorem{definition}[theorem]{Definition}
\newtheorem{assumption}[theorem]{Assumption}
\theoremstyle{remark}
\newtheorem{remark}[theorem]{Remark}
\newcommand{\cX}{\mathcal{X}}

\newcommand{\cF}{\mathcal{F}}
\newcommand{\cP}{\mathcal{P}}
\newcommand{\cQ}{\mathbf{Q}}
\newcommand{\cN}{\mathcal{N}}

\newcommand{\cL}{\mathcal{L}}

\newcommand{\E}{\mathbb{E}}
\newcommand{\Prob}{\mathbb{P}}
\newcommand{\R}{\mathbb{R}}
\newcommand{\1}{\mathbf{1}}
\newcommand{\KL}{\mathrm{KL}}
\newcommand{\TV}{\mathrm{TV}}
\newcommand{\dd}{\mathrm{d}}

\DeclareMathOperator*{\argmin}{arg\,min}
\DeclareMathOperator*{\argmax}{arg\,max}
\DeclareMathOperator{\Var}{Var}

\usepackage[textsize=tiny]{todonotes}

\icmltitlerunning{A Distributional View for Visual Mechanistic Interpretability}

\begin{document}

\twocolumn[
  \icmltitle{A Distributional 
  View for Visual Mechanistic Interpretability: \\
  KL-Minimal Soft-Constraint Principle 
  }



  \icmlsetsymbol{equal}{*}

  \begin{icmlauthorlist}
    \icmlauthor{Guancheng Zhou}{xjtu,sii,openmoss}
    \icmlauthor{Yisi Luo}{xjtu}
    \icmlauthor{Zhengfu He}{openmoss,sii,fdu}
    \icmlauthor{Zhenyu Jin}{xjtu}
    \icmlauthor{Xuyang Ge}{openmoss,sii,fdu}
    \icmlauthor{Wentao Shu}{openmoss,sii,fdu}
    \icmlauthor{Deyu Meng}{xjtu,xjtu_}
    \icmlauthor{Xipeng Qiu}{openmoss,sii,fdu}
  \end{icmlauthorlist}

  \icmlaffiliation{xjtu}{School of Mathematics and Statistics, Xi'an Jiaotong University}
  \icmlaffiliation{xjtu_}{Ministry of Education Key Lab of Intelligent Networks and Network Security, Xi'an Jiaotong University}
  \icmlaffiliation{sii}{Shanghai Innovation Institute}
  \icmlaffiliation{fdu}{Fudan University}
  \icmlaffiliation{openmoss}{OpenMOSS Team}

  \icmlcorrespondingauthor{Yisi Luo}{yisiluo1221@foxmail.com}
  \icmlcorrespondingauthor{Xipeng Qiu}{xpqiu@fudan.edu.cn}

  \icmlkeywords{Machine Learning, ICML}

  \vskip 0.3in
]



\printAffiliationsAndNotice{}  

\begin{abstract}
Most current paradigms in visual mechanistic interpretability (MI) remain confined to interpreting internal units of the vision model via heuristic methods (e.g., top-$K$ activation retrieval or optimization with regularization). In this work, we establish a theoretical distributional view for visual MI, which models the influence of a feature activation on the natural image distribution, thereby formulating a Kullback-Leibler (KL)-minimal optimization problem to model the MI task. Under this framework, statistical biases are identified within previous MI paradigms, which reveal that they may either be perceptually uninterpretable to humans (i.e., deviate from the natural image distribution), or mechanistically unfaithful to the vision models (i.e., unable to activate model features). To resolve the biases under the distributional view, we propose a model with a KL-minimal soft-constraint principle for visual MI that theoretically balances interpretability and faithfulness. We realize this principle via energy-guided diffusion posterior sampling.
Extensive experiments validate the theoretical soundness of the proposed distributional view and demonstrate the practical effectiveness of our paradigm on the DINOv3 vision model.
\end{abstract}

\section{Introduction}
While neural networks have achieved remarkable success in numerous fields \cite{yolo,lrtfr}, their internal mechanisms execute as an elusive ``black box'', creating hidden dangers in critical fields \cite{safe_icml, safe_icml2}. 
The study of mechanistic interpretability (MI) aims to reverse-engineer models to explain their behaviors and internal mechanisms.
However, when applied to vision models, these methods face a fundamental challenge compared to language models.
Unlike language models, vision models process high-dimensional continuous signals (i.e. images), whereas language models process discrete artificial data (i.e. text tokens). 
The semantics in images are often entangled and lack canonical discrete anchors.
As a result, mapping internal units to human-interpretable concepts and validating such mappings can be substantially more challenging in the vision models.

\par

Neurons are the most direct intrinsic units of a model, and abundant research \cite{maco, cnn_disentang_cvpr} has focused on them to reveal the internal mechanisms of vision models.
Superposition makes single neurons encode multiple concepts simultaneously \cite{TMS}, which complicates concept-level attribution and interpretation.
Sparse autoencoders (SAEs) provide a promising approach to mitigate this issue by learning a monosemantic concept dictionary that can reduce polysemanticity and often yields more interpretable units \cite{usae,sae1,patchsae}. 
We refer to these monosemantic concepts as ``features''.

\par
Although SAEs have been successful in LLMs \cite{sae1,lorsa}, their application to vision models is limited.
Previous works typically rely on heuristic, sample-wise paradigms, such as retrieving
high-activation examples from a dataset \cite{dataset_search} or synthesizing inputs via optimization-based feature visualization \cite{feature_vis}.
These approaches face a recurring trade-off: retrieval is constrained by finite data and may yield unstable or weakly activating evidence, whereas optimization can produce highly activating but out-of-distribution artifacts. Consequently, the resulting explanations can be either \emph{perceptually uninterpretable} (i.e., deviating from the natural image distribution) or \emph{mechanistically unfaithful} (failing to reflect how the model behaves).
\begin{table*}[t]
    \scriptsize
    \centering
    \renewcommand{\arraystretch}{0.9}
    \setlength{\tabcolsep}{2pt}
    \caption{\textbf{Unified Taxonomy of Visual MI Paradigms.} Under the distributional view, we categorize existing methods based on three orthogonal dimensions: the \textbf{Object of study} ($\mathcal{O}$), the \textbf{Constraint formulation} ($\mathcal{C}$), and the \textbf{Sampling methodology} ($\mathcal{S}$). Our analysis reveals that prior biases (highlighted in \textcolor{gray}{gray}) stem from specific structural designs in these dimensions, while our approach aligns them to achieve the balance between faithfulness and interpretability.}
    \begin{tabular}{clcccccc}
        \toprule
        \textbf{Paradigm} & \textbf{Representative works} & \textbf{Object of study $\mathcal{O}$} & \textbf{Constraint formulation $\mathcal{C}$} & \textbf{Sampling methodology $\mathcal{S}$} & \textbf{Identified bias} &
        \\
        
        \midrule

        \makecell[c]{Activation\\maximization} & \makecell[l]{Feature vis. \cite{feature_vis}\\MACO \cite{maco} \\VITAL \cite{vital}} & 
        \makecell[c]{Neuron w/ task\\(e.g., classification)} & 
        \makecell[c]{Extreme hard-constraint\\(point-wise)\\(e.g., label confidence)} &
        \makecell[c]{Optimization with\\regularization} &
        \makecell[c]{{\color{gray}Typical set mismatch (Appendix \ref{appd:typical})}\\\textcolor{gray}{Regularization artifact}}
        \\
        
        \midrule
        
        \makecell[c]{Dataset\\search} & 
        \makecell[l]{PatchSAE \cite{patchsae}\\USAE \cite{usae}} & 
        \makecell[c]{SAE feature} & 
        \makecell[c]{Hard-constraint\\($s(x)>m$)} &
        \makecell[c]{Retrieval\\(e.g., Top-$K$ Sample)} &
        \makecell[c]{{\color{gray}Boundary bias (Lemma~\ref{lem:soft_lower_hazard})}\\{\color{gray}Threshold instability (Lemma~\ref{prop:hard_kl_infty})}\\{\color{gray}Interpretability gap (Theorem~\ref{prop:soft_dominates_hard})}}
        
        \\

        \midrule
        
        \makecell[c]{Proxy-guided\\generation} &
        \makecell[c]{DiffExplainer \cite{diffexplainer}\\DEXTER \cite{dexter}} &
        \makecell[c]{Task\\(e.g., classification)} &
        \makecell[c]{KLSC principle (soft)\\(e.g., label confidence)} &
        \makecell[c]{Prompt-guided\\diffusion sampling} &
        \makecell[c]{{\color{gray}Information bottleneck}\\{\color{gray}(Theorem \ref{thm:prompt_bottleneck})}}
        \\
        
        \midrule
        
        \makecell[c]{\bf SAE energy-\\\bf guided DPS} & 
        {EnergyDPS (ours)} & 
        SAE feature &
        \makecell{KLSC principle (soft)\\(SAE feature activation)} &
        \makecell[c]{Energy-guided diffusion\\posterior sampling} &
        \makecell[c]{Faithful and interpretable\\(KL-minimal)}
        \\
        
        \bottomrule
        
    \end{tabular}
    
    \vspace{-0.4cm}
    \label{tab:compare}
\end{table*}
To address these limitations, we propose a fundamental shift in perspective via a distributional view. We argue that understanding the semantics of a feature should be treated not only as sample-wise pattern recognition, but also as a Kullback-Leibler (KL)-minimal distributional inference problem.
Specifically, a faithful and interpretable explanation of a feature must characterize how this feature reshapes the natural image {distribution}.
Under the distributional view, we introduce a unified taxonomy and decompose the previous visual MI paradigms into three components: \emph{the object of study}, \emph{the constraint formulation}, and \emph{the sampling methodology} (Table \ref{tab:compare}). This taxonomy facilitates further analysis of the biases inherent in the previous paradigms.
\par
Guided by this distributional view, we propose the KL-minimal soft-constraint (KLSC) principle for the constraint formulation.
Unlike previous paradigms (which usually utilize hard-constraints), the KLSC seeks the induced distribution that more closely approximates the natural image distribution under identical faithfulness requirements to balance faithfulness and interpretability.
To realize this principle, we introduce energy-guided diffusion posterior sampling (EnergyDPS). By treating the SAE feature activation as an intrinsic energy function, our sampling method samples faithfully from the distribution induced by the KLSC principle.
Different from the previous prompt-guided diffusion sampling methods \cite{diffexplainer,dexter}, EnergyDPS avoids the information bottleneck associated with text prompts. The main contributions are:
\begin{itemize}
    \item We establish a novel distributional view for visual MI and therefore identify several statistical biases in previous visual MI paradigms.
    \item We introduce the KLSC principle to theoretically balance faithfulness and interpretability, and we rigorously realize the principle via EnergyDPS.
    \item Extensive experiments validate the theoretical soundness of our distributional view and demonstrate the practical effectiveness of our paradigm on DINOv3.
\end{itemize}
\paragraph{Roadmap.}
In Section~\ref{sec:overview}, we introduce a \emph{distributional view} of visual mechanistic interpretability and a unified taxonomy that decomposes existing paradigms along three orthogonal axes: \textit{the object of study}, \textit{the constraint formulation}, and \textit{the sampling methodology}.
Secs.~\ref{sec:constraint}--\ref{sec:sampler} formalize our framework by deriving the KLSC principle, distinguishing intrinsic scores from task-oriented objectives, and comparing retrieval, optimization-with-regularization, and prompt-guided diffusion as sampling methodologies.
Building on these analyses, Section~\ref{sec:energy_dps} realizes KLSC via \emph{EnergyDPS}, yielding a practical sampler that preserves intrinsic semantics while remaining close to the natural image prior.
Section~\ref{sec:exp} provides a practical evaluation on DINOv3 transcoder (an SAE variant) features.
Section~\ref{sec:discussion} discusses interpretability vs. realism, and clarifies the resulting limitations.

\par
Complete related work is deferred to Appendix~\ref{appd:relate}.
Additional preliminaries and SAE details are provided in Appendices~\ref{appd:preliminary}-\ref{appd:sae}.
Appendices~\ref{appd:proof_formulation}-\ref{appd:sampler} provide proofs and theoretical refinements for Secs.~\ref{sec:constraint}-\ref{sec:sampler}, and Appendix~\ref{appd:sec_energydps} proves EnergyDPS achieves the KLSC principle.
Experiments implementation details are given in Appendices~\ref{appd:exp1}-\ref{appd:exp3}.
Appendix \ref{appd:exp_dinov3} gives more experiments in DINOv3.

\section{Distributional View for Visual MI}

\subsection{Preliminaries}
We work on a measurable space $(\cX,\cF)$, where $\cX$ denotes the image space and $\cF$ is a
$\sigma$-algebra. Let $\cP(\cX)$ be the set of probability measures on $(\cX,\cF)$.
We use $p\in\cP(\cX)$ to denote a reference distribution representing the natural image prior, and
$s:\cX\to\R$ to denote an intrinsic score (typically differentiable), e.g., a neuron or SAE feature
activation. An induced distribution $q\in\cP(\cX)$ is produced by imposing constraints on $s$ while
staying close to $p$. We write $q\ll p$ to indicate absolute continuity: for all $A\in\cF$,
$p(A)=0$ implies $q(A)=0$.

Assume $p$ and $q$ admit densities (still denoted by $p(x)$ and $q(x)$) with respect to a common
dominating measure $\mu$. The KL divergence and total variation (TV) distance are
\begin{equation}\small
\KL(q\|p)\coloneqq \E_{X\sim q}\!\left[\log\frac{q(X)}{p(X)}\right]
=\int_{\cX} q(x)\log\frac{q(x)}{p(x)}\,\mu(\dd x),
\end{equation}
\begin{equation}\small
\TV(q,p)\coloneqq \sup_{A\in\cF}|q(A)-p(A)|
=\frac12\int_{\cX}\big|q(x)-p(x)\big|\,\mu(\dd x).
\end{equation}
By Pinsker's inequality (Lemma \ref{lem:pinsker}), $\TV(q,p)\le \sqrt{\tfrac12\,\KL(q\|p)}$; in experiments, we report TV as an
interpretable distributional discrepancy. For $x\in\cX$, the Dirac measure $\delta_x\in\cP(\cX)$ is $\delta_x(A)=\mathbf 1_A(x)$ for all $A\in\cF$.
Detailed preliminaries and SAE formulation are deferred to
Appendices~\ref{appd:preliminary}-\ref{appd:sae}; key notation is summarized in
Table~\ref{tab:notation}.

    
\subsection{Analysis from Distributional View}
\subsubsection{Overview}
\label{sec:overview}
To systematically analyze the previous visual MI paradigms, we formalize feature interpretation not as a heuristic visualization task, but as a \textbf{KL-minimal optimization problem for distributions}.
Understanding the intrinsic score $s$ involves obtaining an \textit{induced distribution} $q$ that reflects the influence of $s$ on the natural image distribution $p$.
For an ideal induced distribution $q$, we argue that $q$ must balance two competing objectives:
\begin{itemize}
    \item \textbf{Faithfulness:} $q$ should reflect the feature's semantics by concentrating on inputs that strongly activate the score. This can be expressed either as a hard-constraint, $q(\{x:\, s(x)\ge m\})=1$, or as a soft constraint, $\E_{x\sim q}[s(x)]\ge m$.
    \item \textbf{Interpretability:} samples from $q$ should remain human-interpretable, hence $q$ should stay close to $p$, quantified by minimizing the KL divergence $\KL(q\|p)$.
\end{itemize}

Crucially, these two objectives are often conflicting. 
Maximizing faithfulness may drive samples towards high-frequency noise \cite{feature_vis}, while maximizing interpretability produces generic images that ignore feature semantics. Addressing this trade-off requires a rigorous distributional view.
Here, we define the abstract optimization problem for visual MI as:
\begin{equation}
    \text{Interpretation} \sim \mathcal{S}_{\text{sampler}} \Bigg( \mathcal{C}_{\text{constraint}} \Big( \underbrace{\mathcal{O}_{\text{object}}(\mathcal{M})}_{\text{Score } s}, \ p \Big) \Bigg),
    \label{eq:ocs_pipeline}
\end{equation}
where $\mathcal{O}$ specifies \textit{what} intrinsic score is being interpreted, $\mathcal{C}$
specifies \textit{how} faithfulness-interpretability balance is formulated as a KL-minimal problem, and $\mathcal{S}$ specifies \textit{how} the induced distribution is instantiated (typically via sampling). Previous visual MI paradigms can be systematically reformulated under
$(\mathcal{O},\mathcal{C},\mathcal{S})$, as summarized in Table~\ref{tab:compare}.

\paragraph{Related works and their taxonomy.}
We list \emph{representative} works for each paradigm below to anchor our taxonomy. A more
comprehensive discussion of related visual MI literature is deferred to Appendix~\ref{appd:relate}.

\begin{itemize}
\item{Paradigm I: Activation maximization~\cite{feature_vis,maco,vital}.}
Activation maximization optimizes an input (with regularization) to maximize the target activation,
yielding a maximum a posteriori (MAP)-like, mode-seeking induced distribution.
The induced distribution from regularization can mismatch $p$, leading to {regularization artifacts}.
\item{Paradigm II: Dataset search~\cite{usae,patchsae}.}
Dataset search retrieves high-activation examples from a finite dataset, preserving perceptual plausibility. Yet, its hard thresholding/top-$K$ instantiation can incur {boundary bias} and {threshold instability} (detailed in Section \ref{sec:constraint}).
\item{Paradigm III: Proxy paradigms (semantic gap)~\cite{diffexplainer,dexter}.}
Proxy paradigms steer diffusion via text prompts as external controls to activate features.
The cross-modal mapping introduces a {semantic gap} and an {information bottleneck} (detailed in Section \ref{sec:sampler}).
\item{Paradigm IV: SAE EnergyDPS (ours).}
We follow the KLSC principle and instantiate it with EnergyDPS, directly sampling a faithful and interpretable induced
distribution without proxy semantics (e.g., text prompts).\end{itemize}
\par

\subsubsection{Constraint Formulation: hard-constraints and The KLSC Principle} \label{sec:constraint}
    The constraint formulation $\mathcal C$ defines the specific formulation of the KL-minimal optimization problem, determining how the intrinsic score $s(x)$ influences the natural image distribution $p$.
    In this section, we compare the hard-constraints against our proposed KLSC principle.
    \paragraph{Hard- and soft-constraint principle} For a hard-constraint optimization problem, the goal can be formulated by minimizing KL in the hard-constraint set:
    \begin{definition} [Constraint formulation for hard-constraints] \label{prop:hard_kl_strong}
        Fix $m\in\mathbb{R}$ and let $\mathbf{E}_m:=\{x\in\mathcal{X}:s(x)>m\}$ with $p({\bf E}_m)>0$. Define the feasible set
        \begin{equation}
            {\bf Q}^{\mathrm{hard}}_m:=\{q\in \mathcal{P}(\mathcal{X}): q\ll p,\ q({\bf E}_m)=1\},
        \end{equation}
        where $q\ll p$ denotes that $q$ is absolutely continuous with respect to $p$.
        The formulation of a hard-constraint problem is
        \begin{equation}
            \argmin_{q\in \mathbf{Q}^{\mathrm{hard}}_m}\ \KL(q\|p).
            \label{eq:hard_kl_strong}
        \end{equation}
    \end{definition}
    The unique solution to Eq. \eqref{eq:hard_kl_strong} is the truncated distribution (Lemma \ref{lem:solve_hard}):
    \begin{equation}
        \label{eq:hard_main}
        q_m^{\mathrm{hard}}(x) = \frac{p(x) {\bf 1}_{\mathbf{E}_m}(x)}{p(\mathbf{E}_m)}.
    \end{equation}
    In effect, hard-constraints impose a sharp cutoff and treat only the surviving high-activation region as valid evidence, which can distort the induced distribution and undermine faithfulness.
    To mitigate this, we propose the soft-constraint (namely KLSC) principle: instead of enforcing a hard constraint, we
    require achieving a target level \emph{in expectation}:
    \begin{definition}[Constraint formulation for KLSC principle]
    \label{def:klsc}
        Fix baseline $p$ and intrinsic score $s$.
        For a target level $m\in\R$, define the feasible set
        \begin{equation}
        {\bf Q}_m:=\{q\in\cP(\cX):\ q\ll p,\ \E_q[s(X)]\ge m\}.
        \end{equation}
        KLSC Principle defines the induced distribution as the KL-minimal distribution in $\cQ_m$:
        \begin{equation}
        q^\mathrm{KLSC}_\beta \in \argmin_{q\in {\bf Q}_m}\ \KL(q\|p).
        \label{eq:faithA}
        \end{equation}
    \end{definition}
    Lemma~\ref{thm:tilt} establishes that the unique solution to~\eqref{eq:faithA} takes the exponentially tilted form
    \begin{equation}
        q^{\mathrm{KLSC}}_\beta(x) = \frac{p(x) \exp(\beta s(x))}{Z(\beta)},
        \label{eq:tilt_solution}
    \end{equation}
    where $Z(\beta) = \mathbb{E}_{X\sim p}[\exp(\beta s(X))]$.
    Here, $\beta \ge 0$ is the Lagrange multiplier corresponding to the constraint $m$, governing the strength of the guidance, and can be determined by $m$ (Remark \ref{rem:beta_from_m}).
    In the following analysis, we contrast these two constraint formulations (Definition \ref{prop:hard_kl_strong} and \ref{def:klsc}) through three aspects: {boundary bias}, {threshold stability}, and {interpretability}.
    
    \paragraph{Bias I: Boundary bias.} We quantify how probability concentrates near the threshold via the \emph{boundary mass}.
    \begin{definition}[Boundary mass]\label{def:boundary_mass}
        Fix a threshold $m$ and a margin $\delta>0$. Define
        ${\bf E}_m:=\{x\in\mathcal{X}: s(x)>m\}$ and the boundary shell
        ${\bf B}_{m,\delta}:=\{x\in\mathcal{X}: m<s(x)<m+\delta\}$.
        For a distribution $q$ with $q({\bf E}_m)>0$, define
        \begin{equation} \small
            \mathrm{BM}_{m,\delta}(q)
            :=
            \Prob_{X\sim q}\!\big[X\in{\bf B}_{m,\delta}\,\big|\,X\in{\bf E}_m\big]
            =
            \frac{q({\bf B}_{m,\delta})}{q({\bf E}_m)}.
        \end{equation}
    \end{definition}
    Intuitively, samples from $q_m^{\mathrm{hard}}$ are often treated as representative of a feature's high-activation region. However, for rare events, the boundary mass tends to concentrate in a thin shell near the truncation boundary rather than deep inside the representative samples.

    As derived in Lemma \ref{thm:boundary_bias}, the boundary mass of the hard-constraint distribution $q_m^{\mathrm{hard}}$ is asymptotically governed by the hazard rate of the intrinsic score $s$:
    \begin{equation}
        \label{eq:bm_hazard_main}
        \mathrm{BM}_{m,\delta}(q_m^{\mathrm{hard}})=h^{\mathrm{hard}}_S(m)\delta + o(\delta),\quad \delta\rightarrow 0,
    \end{equation}
    where $h^{\mathrm{hard}}_S(m):=\frac{f_S(m)}{\int_m^\infty f_S(u)\dd u}$ is the \textit{hazard rate}, $S:=s(X)$ is a random variable, $f_S$ is the probability density function of $S$.

    In contrast, Lemma~\ref{lem:soft_lower_hazard} yields the effective hazard rate for the tilted distribution $q^{\mathrm{KLSC}}_{\beta}$:
    \begin{equation}
        h^{\mathrm{KLSC}}_{S,\beta}(m)
        \;:=\;
        \frac{f_S(m)e^{\beta m}}{\int_m^{\infty} f_S(u)e^{\beta u}\,\dd u}.
    \end{equation}
    Since $e^{\beta u}$ is strictly increasing for $\beta>0$, we have $h^{\mathrm{KLSC}}_{S,\beta}(m)\le h^{\mathrm{hard}}_{S}(m)$. 
    {Hard-constraint methods tend to pick ``just-barely-activated'' examples. As a result, the
    visualization is brittle: changing the threshold slightly can swap out many samples and make the
    feature appear to mean something different. The KLSC path mitigates this threshold-driven effect in the regimes analyzed below, making interpretations less sensitive to small target-level changes.}

    \paragraph{Bias II: Threshold instability.}
    A desirable constraint formulation should vary smoothly with the target faithfulness level.
    Hard-constraints induce an abrupt support change by truncating all mass below the threshold,
    whereas the KLSC principle reweights the entire distribution smoothly.
    Lemma~\ref{prop:hard_kl_infty} shows that this discontinuity can make hard-constraints highly
    unstable: the KL divergence between two truncated distributions can become $+\infty$ under
    arbitrarily small threshold perturbations.
    In contrast, the KLSC induced distribution $q^{\mathrm{KLSC}}_{\beta}$ forms a smooth exponential
    family (Lemma~\ref{prop:soft_kl_cont}); in particular, as $\beta'\to\beta$, $
        \KL\!\big(q^{\mathrm{KLSC}}_{\beta}\,\|\,q^{\mathrm{KLSC}}_{\beta'}\big)\to 0.
    $
    This gives a precise KL-level distinction: hard thresholds are non-regular because their support can jump, whereas KLSC is locally continuous along the exponential-family path.
    \par
    We further provide a complementary TV-level analysis in Appendix~\ref{appd:avg_tv_instability}. There, we derive the local TV speeds of hard and KLSC paths when both are indexed by matched faithfulness levels, and prove a strict hard-over-KLSC TV-instability ordering in a canonical Gaussian score model. In the controlled experiments below, we report the finite-step analogue $\mathrm{TV}(q_m,q_{m+\epsilon})$.

\begin{figure}[t]
    \scriptsize
    
	\setlength{\tabcolsep}{0pt}
    \begin{center}
	\begin{tabular}{cccc}

        \includegraphics[width=0.12\textwidth]{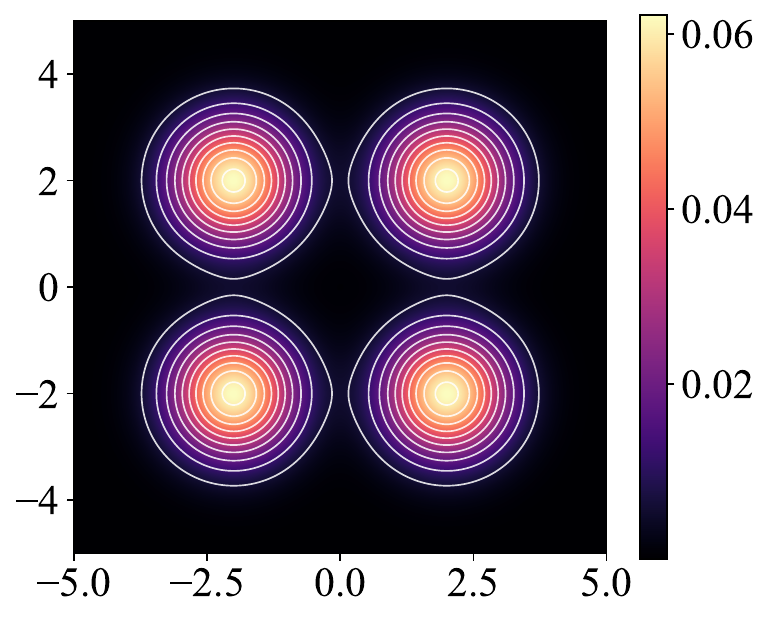}&
        \includegraphics[width=0.11\textwidth]{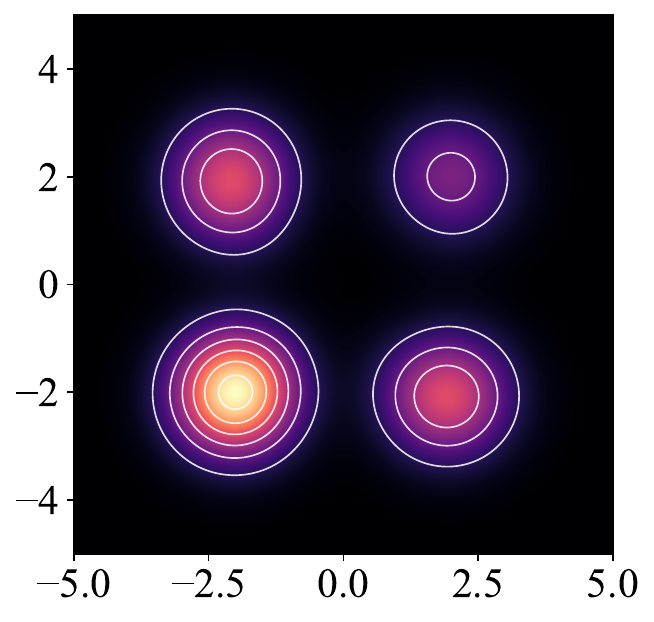} &
        \includegraphics[width=0.11\textwidth]{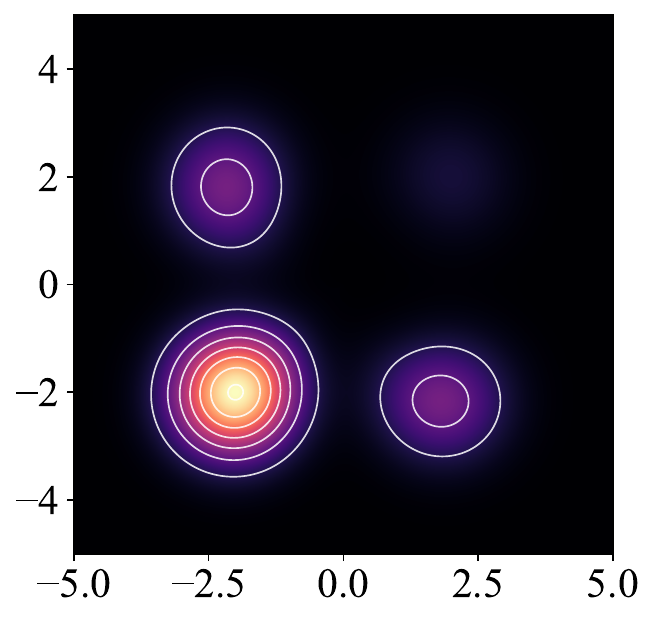} &
        \includegraphics[width=0.11\textwidth]{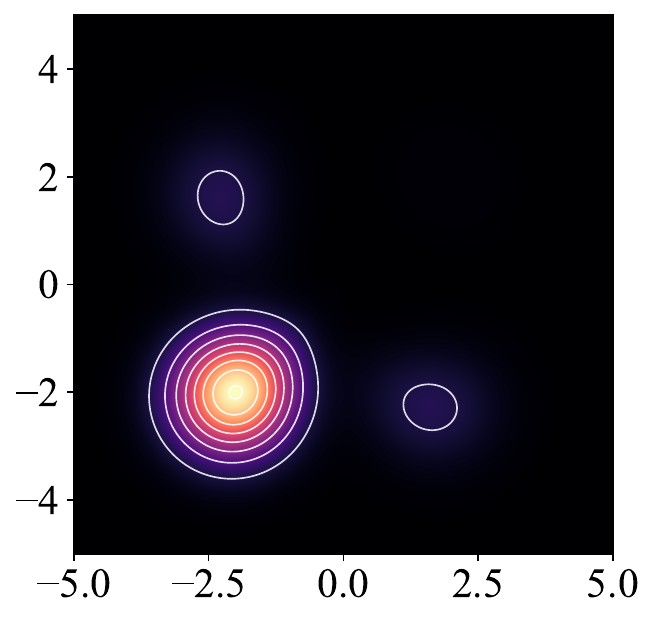} \\
        Baseline $p(x)$ & KLSC, $\E[s]=5$ & KLSC, $\E[s]=6$ & KLSC, $\E[s]=7$\\

        \includegraphics[width=0.12\textwidth]{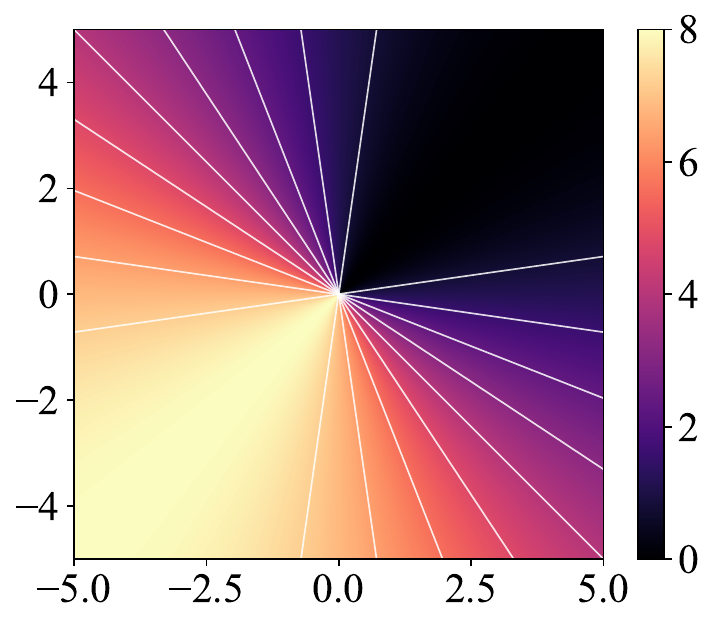}&
        \includegraphics[width=0.11\textwidth]{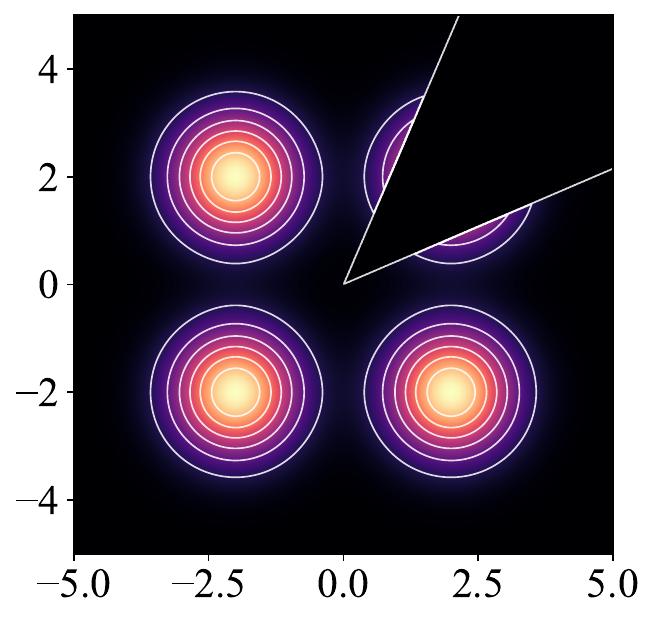} &
        \includegraphics[width=0.11\textwidth]{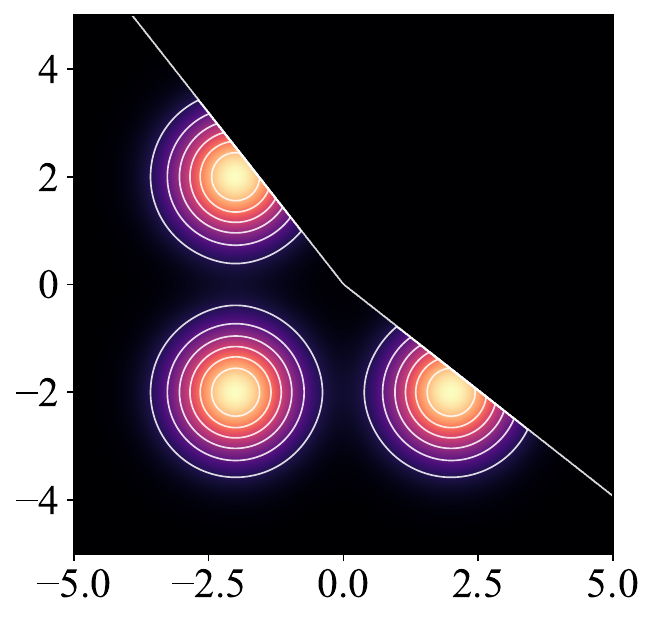} &
        \includegraphics[width=0.11\textwidth]{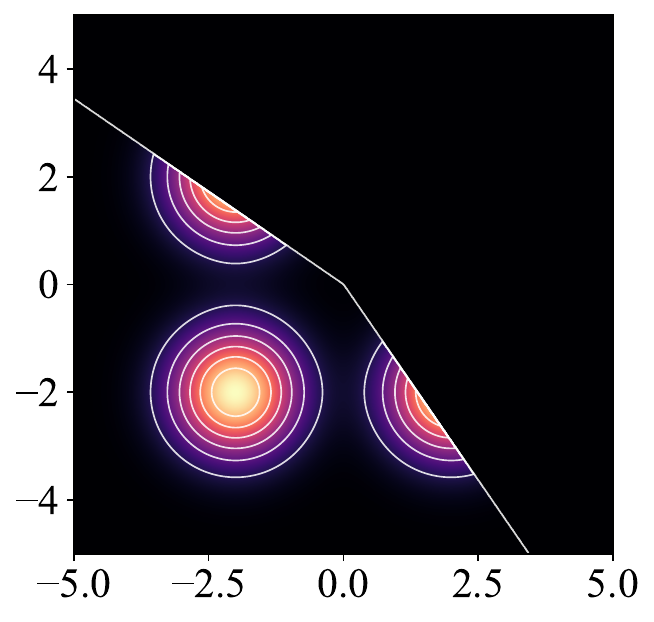} \\
        Intrinsic score $s(x)$&Hard, $\E[s]=5$ & Hard, $\E[s]=6$ & Hard, $\E[s]=7$
	\end{tabular}
    \vspace{-0.2cm}
    \caption{\textbf{Induced distributions under matched moments.}
    Top: baseline $p(x)$ and KLSC; bottom: score field $s(x)$ and hard truncation.
    For $m\in\{5,6,7\}$, both are calibrated to satisfy $\E_q[s]=m$; white curves are density contours.
    See Appendix~\ref{appd:exp1}.}

    \label{fig:exp1} 
    \end{center}
    \vspace{-0.4cm}
\end{figure}

\begin{table}[t]
    \small
    \caption{\textbf{Quantitative diagnosis of constraint behaviors.} We evaluate (1) \textbf{Boundary bias} ($\mathrm{BM}_{m',\delta} \downarrow$), (2) \textbf{Faithfulness} ($\mathrm{TV}(\cdot,p)\downarrow$), and (3) \textbf{Instability} ($\mathrm{TV}(q_m, q_{m+\epsilon})\downarrow$). We use $\delta=0.05$ and $\epsilon=0.08$. Details are given in Appendix \ref{appd:exp1}.}
    \label{tab:exp_1}
    \setlength{\tabcolsep}{2.5pt}
    \renewcommand{\arraystretch}{0.9}
    \centering
    \begin{tabular}{clccc}
        \toprule
        $\E[s]$ & Constraints & $\mathrm{BM}_{m',\delta}(\cdot)$ & $\mathrm{TV}(\cdot,p)$ & $\mathrm{TV}(q_m,q_{m+\epsilon})$\\
        \midrule
        
        \multirow{2}{*}{5} & Hard & 0.0103 & 0.2030 & 0.0168\\
        ~ & KLSC & \textbf{0.0101} & \textbf{0.1499} & \textbf{0.0126}\\
        
        \midrule

        \multirow{2}{*}{6} & Hard & 0.0140 & 0.4183 & 0.0319\\
        ~ & KLSC & \textbf{0.0104} & \textbf{0.3137} & \textbf{0.0147}\\
        
        \midrule

        \multirow{2}{*}{7} & Hard & 0.0186 & 0.6198 & 0.0346\\
        ~ & KLSC & \textbf{0.0083} & \textbf{0.5095} & \textbf{0.0187}\\
        
        \bottomrule
    \end{tabular}
    \vspace{-0.4cm}
\end{table}
    
    \paragraph{Bias III: Interpretability gap.}
    We compare interpretability under \emph{matched faithfulness}, measured by the expected activation
    $\E_q[s(X)]$.
    Let $q_m^{\mathrm{hard}}(x)=p(x\mid E_m)$ denote the truncated distribution induced by a hard
    threshold $m$, and choose $\beta$ such that $\E_{q_\beta^{\mathrm{KLSC}}}[s(X)]=\E_{q_m^{\mathrm{hard}}}[s(X)]<\infty$.
    Under this matching, Theorem~\ref{prop:soft_dominates_hard} guarantees that the KLSC solution is at
    least as interpretable as the hard-constraint solution in KL, i.e.,  $\KL\!\big(q_\beta^{\mathrm{KLSC}}\|p\big)
        \;\le\;
        \KL\!\big(q_m^{\mathrm{hard}}\|p\big)$, where equality holds if and only if $q^\mathrm{KLSC}_\beta=q_m^\mathrm{hard}$.
    This inequality reveals an \emph{interpretability gap}: even when both induced distributions
    achieve the same faithfulness level in expectation, the KLSC principle attains it with a smaller
    deviation from the natural image distribution $p$.

\paragraph{Experimental validation.}
We validate the predictions in a 2D Gaussian Mixture Model (GMM) toy setting with an angular score $s(x)$.
We compare hard truncation $q_{m'}^{\mathrm{hard}}$ and KLSC $q_\beta^{\mathrm{KLSC}}$ under matched faithfulness by calibrating $(m',\beta)$ so that
$\E_{q^{\mathrm{hard}}_{m'}}[s]=\E_{q_\beta^{\mathrm{KLSC}}}[s]=m$ for $m\in\{5,6,7\}$.
We report boundary mass $\mathrm{BM}_{m',\delta}$, the finite-step TV sensitivity $\mathrm{TV}(q_m,q_{m+\epsilon})$, and prior shift $\mathrm{TV}(q,p)$.
Table~\ref{tab:exp_1} and Fig.~\ref{fig:exp1} are consistent with the theoretical picture: in this controlled GMM setting, hard truncation concentrates near the threshold and exhibits larger finite-step TV sensitivity, whereas KLSC changes more smoothly and stays closer to $p$ (experimental details in Appendix~\ref{appd:exp1}).
In practice, hard constraints make explanations unstable: tiny changes in $m$ can flip the selected samples and yield qualitatively different visualizations.

\subsubsection{Object of Study: Object Drift from Task Injection}\label{sec:object}

Intrinsic scores $s$ are intended to probe a model's \emph{internal} mechanisms. Yet many visual MI
pipelines~\cite{diffexplainer,dexter} inject an external task objective (e.g., classification
confidence) as additional guidance. In our taxonomy, this changes the \emph{object of study}: the
effective objective becomes task-entangled, so the resulting
explanation may primarily reflect the task rather than the internal mechanisms.

\paragraph{Task-injected formulation under soft constraints.}
Let $c$ denote a task target (e.g., a label or prompt concept) and let $f_c:\cX\to\R$ be a measurable
task score, e.g., $f_c(x)=\log p_\phi(c\mid x)$. Compared to intrinsic-only KLSC
(Definition~\ref{def:klsc}), task injection augments the feasible set:
\begin{definition}[Task-injected formulation]\label{def:task_soft}
Fix $m\in\R$ and $r\in\R$. Define
\begin{equation}
\begin{aligned}
    &q^{\mathrm{task}}_{m,r,c}
\in \arg\min_{q\ll p}\ \KL(q\|p)
\\ \text{s.t.}\quad &
\E_q[s(X)]\ge m,\ \E_q[f_c(X)]\ge r .
\end{aligned}
\label{eq:task_soft_opt}
\end{equation}
\end{definition}
Under standard feasibility and integrability (Lemma~\ref{prop:task_soft_proof}), any optimizer takes
an exponential-family form
\begin{equation}
\small
q_{m,r,c}^{\mathrm{task}}(x)
=
\frac{p(x)\exp\!\big(\beta^\star s(x)+\eta^\star f_c(x)\big)}{Z(\beta^\star,\eta^\star,c)},
\; \exists\,\beta^\star,\eta^\star\ge 0,
\label{eq:task_soft_sol_main}
\end{equation}
with $Z(\beta,\eta,c)=\E_{X\sim p}[\exp(\beta s(X)+\eta f_c(X))]$ and KKT complementary slackness.

\paragraph{Task-injection distortion from the intrinsic induced distribution.}
Let $q_\beta^\mathrm{KLSC}$ be the intrinsic-only KLSC solution with $\E_{q_\beta^\mathrm{KLSC}}[s]=m$
(when active). We measure the minimal KL deformation required to meet a task demand while preserving
the same intrinsic level:
\begin{definition}[Task-injection distortion]\label{def:delta_task}
Assume there exists at least one $q\ll q_\beta^\mathrm{KLSC}$ such that
$\E_q[s(X)]=m$ and $\E_q[f_c(X)]\ge r$. 
Define
\begin{equation}
\begin{aligned}
    &\delta(m,r;c)
:=
\inf_{q\ll q_\beta^\mathrm{KLSC}}
\KL\!\left(q\ \big\|\ q_\beta^\mathrm{KLSC}\right)
\\ \text{s.t.}\quad&
\E_q[s(X)]=m,\ \E_q[f_c(X)]\ge r .
\end{aligned}
\label{eq:delta_def}
\end{equation}
\end{definition}

From Theorem \ref{prop_appd:delta_legendre}, $\delta(m,r;c)$ is bounded by
\begin{equation}
    \delta(m,r;c)
    \ \ge\
    \sup_{\eta\ge 0}\left\{\eta r-\log \E_{q_\beta^\mathrm{KLSC}}\!\left[e^{\eta f_c(X)}\right]\right\},
\end{equation}
in terms of the log-moment generating function of $f_c$
under $q_\beta^\mathrm{KLSC}$.
Theorem~\ref{prop_appd:delta_legendre} shows an intrinsic-task trade-off: once $r$ exceeds the baseline
$\E_{q_\beta^\mathrm{KLSC}}[f_c(X)]$, achieving the task demand incurs a strictly positive KL cost,
i.e., unavoidable task-driven distortion of the intrinsic induced distribution.

\begin{figure}[t]
    \scriptsize
    
	\setlength{\tabcolsep}{0pt}
    \begin{center}
	\begin{tabular}{cccc}

        \includegraphics[width=0.12\textwidth]{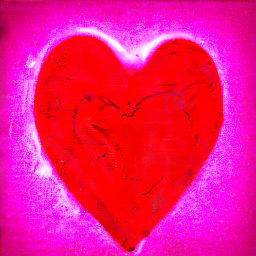} &
        \includegraphics[width=0.12\textwidth]{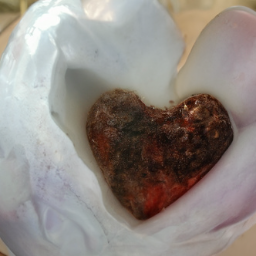} &
        \includegraphics[width=0.12\textwidth]{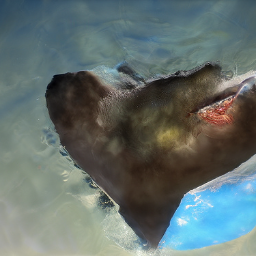}&
        \includegraphics[width=0.12\textwidth]{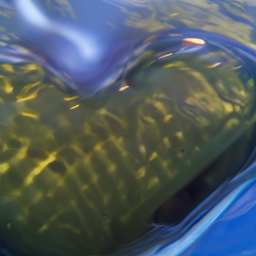}
         \\

        \makecell[c]{$s(x)=16.8034$\\$f_c(x)=4.2130$} & 
        \makecell[c]{$s(x)=16.6693$\\$f_c(x)=10.1400$} & 
        \makecell[c]{$s(x)=17.7417$\\$f_c(x)=31.3358$} & 
        \makecell[c]{$s(x)=16.0728$\\$f_c(x)=40.8028$}\\

	\end{tabular}

    \caption{\textbf{Object drift under CLIP task injection.}
    EnergyDPS is guided by an intrinsic SAE score $s(x)$ and a CLIP alignment score $f_c(x)$ (fixed prompt $c$). While $s(x)$ is matched, larger $f_c(x)$ yields qualitatively different samples, showing task dominance over the intrinsic feature. See Appendix~\ref{appd:exp3_dinov3_task_injection_control}.}
    \vspace{-0.8cm} 
    \label{fig:mix_main} 
    \end{center}
\end{figure}

\paragraph{Experimental validation.}
Recent pipelines \cite{dtN} discover SAE features but explain them via CLIP prompt matching, introducing
a language-alignment interface. This is suitable for task-target concepts, but can bottleneck
\emph{intrinsic} MI by restricting explanations to what text can express. Consistently,
Fig.~\ref{fig:mix_main} shows that with intrinsic activation $s(x)$ nearly matched, increasing a CLIP
alignment score $f_c(x)$ still yields large semantic shifts, indicating task-dominated object drift.
Details are in Appendix~\ref{appd:exp3_dinov3_task_injection_control}.

\subsubsection{Sampling Methodology} \label{sec:sampler}
Having discussed the constraint formulation and object of study, the final component is the \textbf{sampling methodology}. 
This operator is responsible for drawing samples from the induced distribution. 
In this section, we analyze three dominant sampling paradigms, which include \textbf{retrieval}, \textbf{optimization with regularization}, and \textbf{proxy-guided diffusion sampling}.

\paragraph{Methodology I: Retrieval}
The retrieval sampling methodology approximates the induced distribution $q$ via a two-step procedure: (i) first drawing a finite \textit{candidate pool} (i.e., the dataset) $\mathbf{D}_n:=\{x_i\}_{i=1}^n$ from the natural image distribution $p$, and (ii) subsequently selecting (via hard-constraints) or reweighting (via soft constraints) these candidates to enforce a target intrinsic score. 
In the following analysis, we formalize this sampling paradigm and prove that it would introduce a winner-takes-most behavior.

First, we define the empirical distribution by the candidate pool $\hat{p}_n:=\frac{1}{n}\sum_{i=1}^n{\delta}_{x_i}$.
For hard-constraints, this sampling methodology suffers from the \textit{boundary bias} (Lemma \ref{thm:boundary_bias}).
For KLSC principle, the sampling distribution is 
\begin{equation}\small
    \label{eq:qhat_beta}
    \hat{q}_{n,\beta}:=\sum_{i=1}^n\omega_i(\beta){\delta}_{x_i},
    \;
    \omega_i(\beta):=\frac{\exp(\beta s(x_i))}{\sum_{j=1}^n \exp(\beta s(x_j))}.
\end{equation}
Crucially, even without an explicit hard threshold, finite-pool tilting can still degenerate into an
extreme selector: as $\beta\to\infty$, the weights concentrate on the maximizers of $s$ within
${\bf D}_n$, and $\hat q_{n,\beta}$ converges to the uniform distribution over
$\mathop{\arg\max}_{x_i\in{\bf D}_n} s(x_i)$ (Lemma~\ref{prop:beta_to_inf}).
This yields a quasi-hard, winner-takes-most behavior on the observed finite set, revealing an intrinsic selection bias induced purely by support restriction.
In practice, this often collapses to a few extreme prototypes: retrieval repeatedly returns a small subset of activating patterns, making the resulting ``explanations'' unrepresentative of the
feature under the natural prior.
\paragraph{Methodology II: Optimization with regularization.}
Traditional feature visualization methods~\citep{feature_vis} construct an explanation by
{optimizing a single input} to increase the intrinsic score activation while penalizing undesirable
image statistics (e.g., total variation~\citep{feature_vis}):
\begin{equation}
    x^{\mathrm{opt}}
    =
    \argmax_{x \in \cX}\ \bigl(s(x)-\lambda R(x)\bigr),
    \label{eq:opt_obj}
\end{equation}
where $R$ is the regularization term, and $\lambda$ is the trade-off parameter.
From our distributional view, this paradigm can be understood as a \textbf{MAP (mode-seeking)
sampler}.
Indeed, instantiating Lemma~\ref{prop:map_gibbs_strong} shows that
Eq. \eqref{eq:opt_obj} is exactly the mode of the (unnormalized) Gibbs law
\begin{equation}
    \pi(x)\ \propto\ \exp\!\big(s(x)-\lambda R(x)\big),
    \label{eq:gibbs_strong_main}
\end{equation}
equivalently interpreting the regularizer as a proxy prior
$p_{\mathrm{reg}}(x)\propto \exp(-\lambda R(x))$.

\paragraph{Extreme hardness and typical-set mismatch.}
Although Eq. \eqref{eq:opt_obj} is continuous in $x$, MAP inference is intrinsically \emph{mode-seeking}.
Consider the tempered family $\pi_\beta(x)\propto \exp\!\big(\beta(s(x)-\lambda R(x))\big)$.
As $\beta\to\infty$, $\pi_\beta$ concentrates on the maximizers of
$s(x)-\lambda R(x)$; in particular, if the maximizer is unique, then
$\pi_\beta \Rightarrow \delta_{x^{\mathrm{opt}}}$.
Hence, the resulting induced distribution collapses to a point-wise hard-constraint with vanishing entropy.
In high dimensions, such modes can be exponentially unrepresentative of typical samples
(Appendix~\ref{appd:typical}), leading to a \emph{typical-set mismatch}: 
one may obtain ``super-stimuli'' with extreme activation that are nevertheless atypical under the natural image distribution, and hence are not semantically representative and perceptually interpretable.

Moreover, since regularization term $R$ defines an implicit prior $p_{\mathrm{reg}}$, the optimized visualization would entangle the intrinsic score with handcrafted inductive bias.
As a result, the output reflects a mixture of the feature semantics and the regularization's preferences (e.g., oversmoothing under TV regularization), rather than the intrinsic semantics alone.

\begin{figure}[t]
    \scriptsize
    
	\setlength{\tabcolsep}{0pt}
    \begin{center}
	\begin{tabular}{cccc}

        \includegraphics[width=0.12\textwidth]{figures/exp_1/suiteA1_baseline_p.pdf}&
        \includegraphics[width=0.11\textwidth]{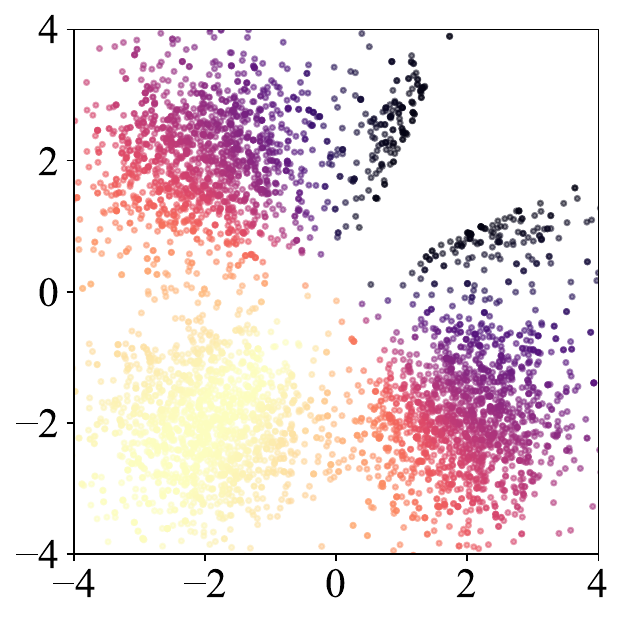} &
        \includegraphics[width=0.11\textwidth]{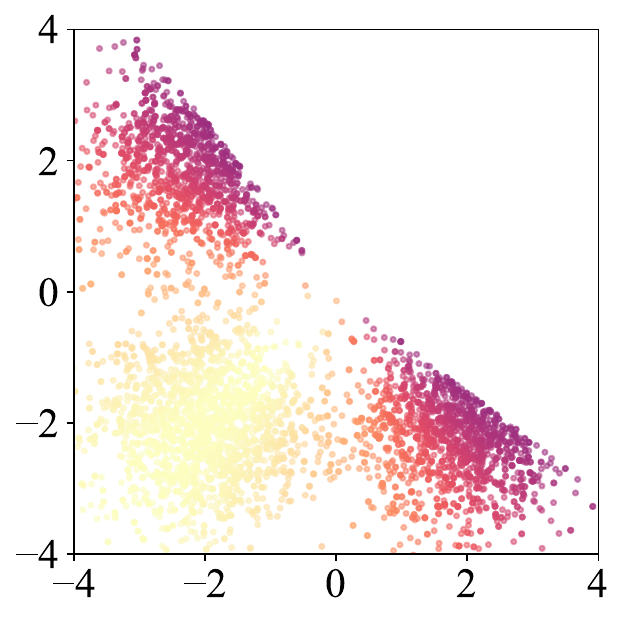} &
        \includegraphics[width=0.11\textwidth]{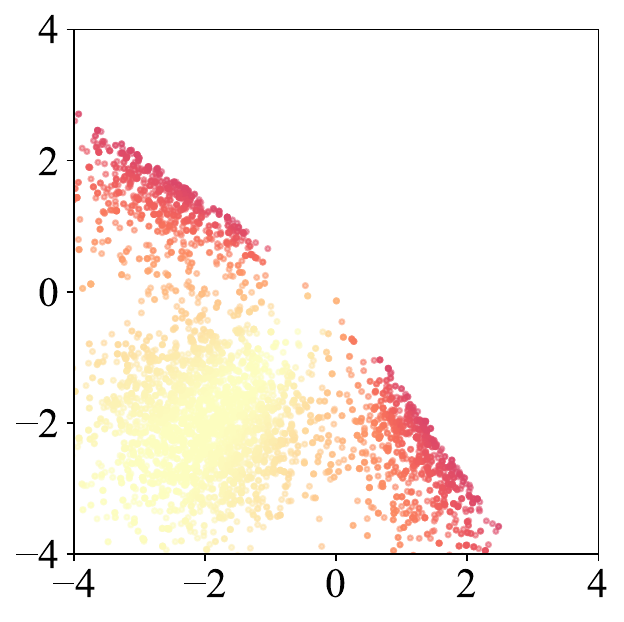} \\
        Baseline $p(x)$ & \makecell[c]{Re., $\E[s]=5$\\ $\KL=0.1737$} & \makecell[c]{Re., $\E[s]=6$\\$\KL=0.2901$} & \makecell[c]{Re., $\E[s]=7$\\$\KL=0.7487$}\\

        \includegraphics[width=0.12\textwidth]{figures/exp_1/suiteA1_energy_field.pdf}&
        \includegraphics[width=0.11\textwidth]{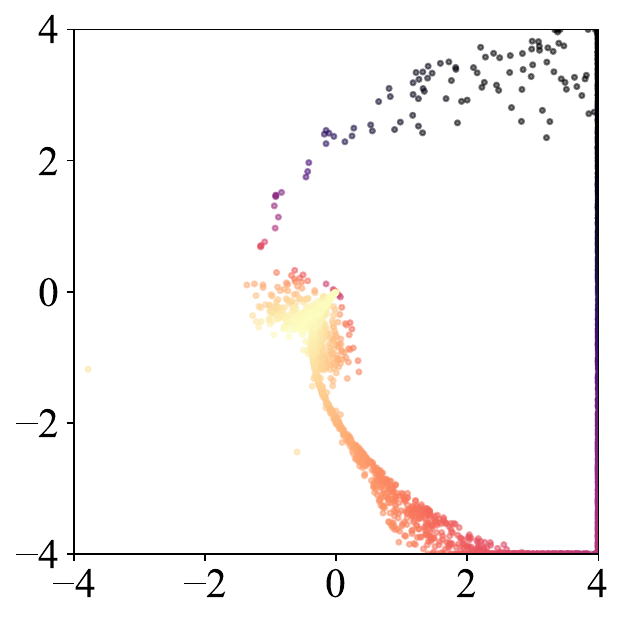} &
        \includegraphics[width=0.11\textwidth]{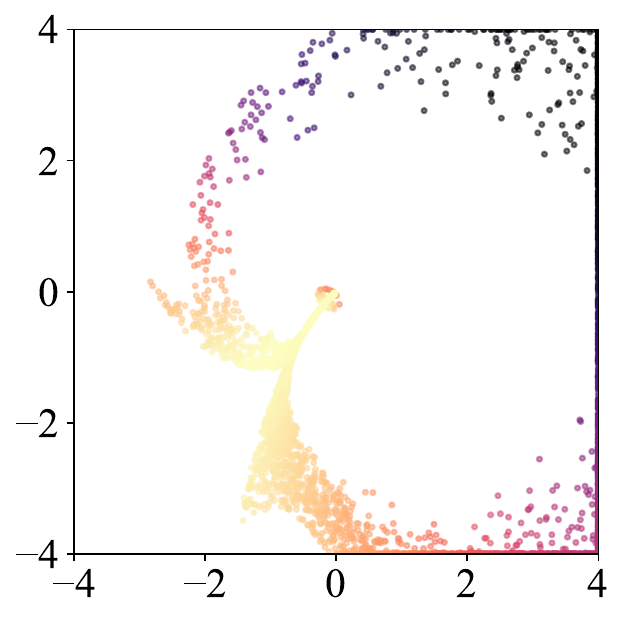} &
        \includegraphics[width=0.11\textwidth]{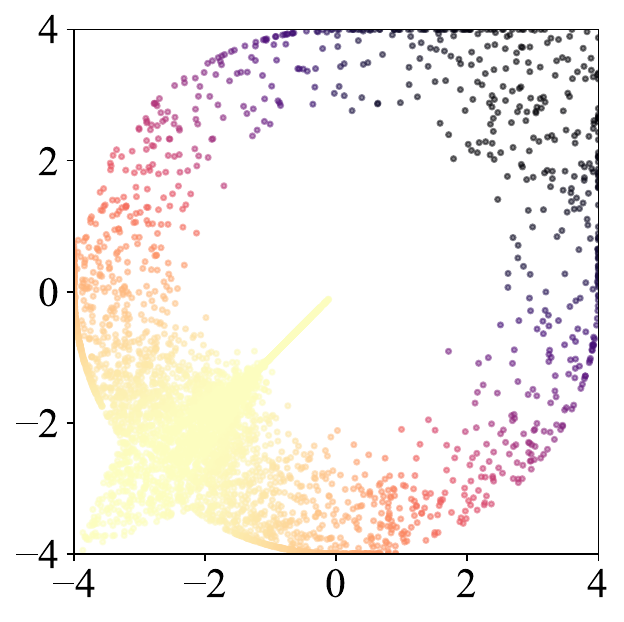} \\
        Intrinsic score $s(x)$& \makecell[c]{OR, $\E[s]=5$\\$\KL=3.1045$} & \makecell[c]{OR, $\E[s]=6$\\$\KL=0.3083$} & \makecell[c]{OR, $\E[s]=7$\\$\KL=1.0377$} \\

        \includegraphics[width=0.12\textwidth]{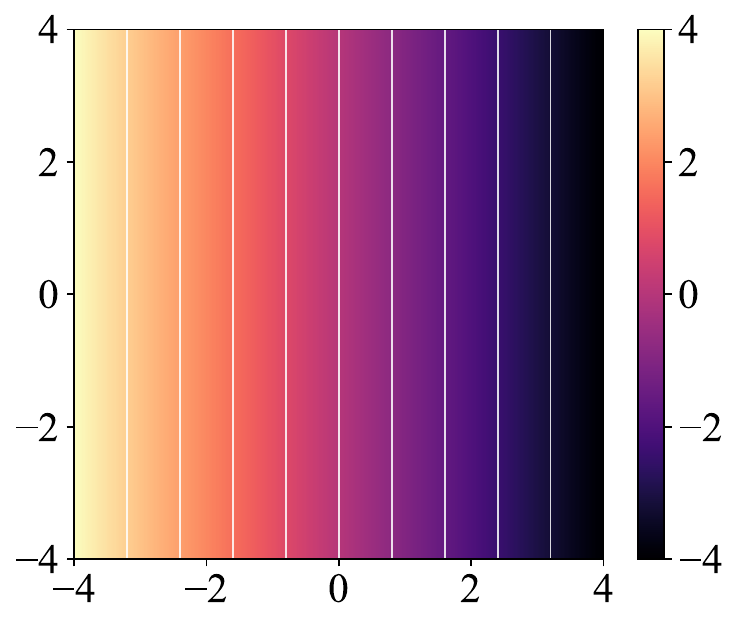}&
        \includegraphics[width=0.11\textwidth]{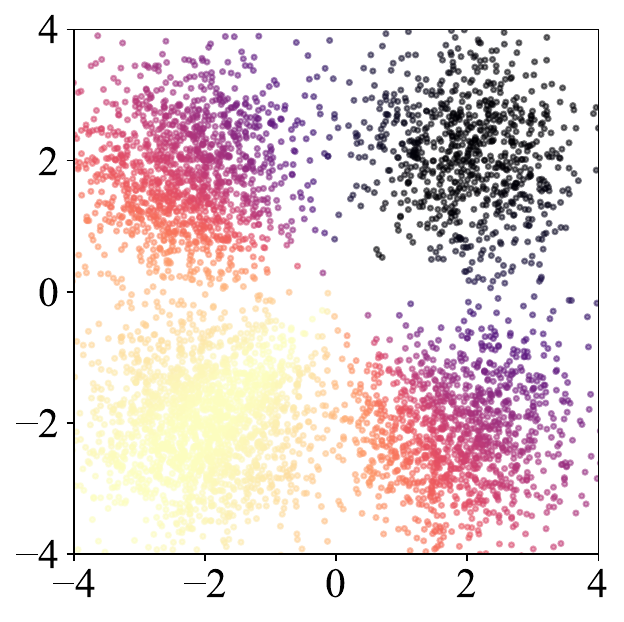} &
        \includegraphics[width=0.11\textwidth]{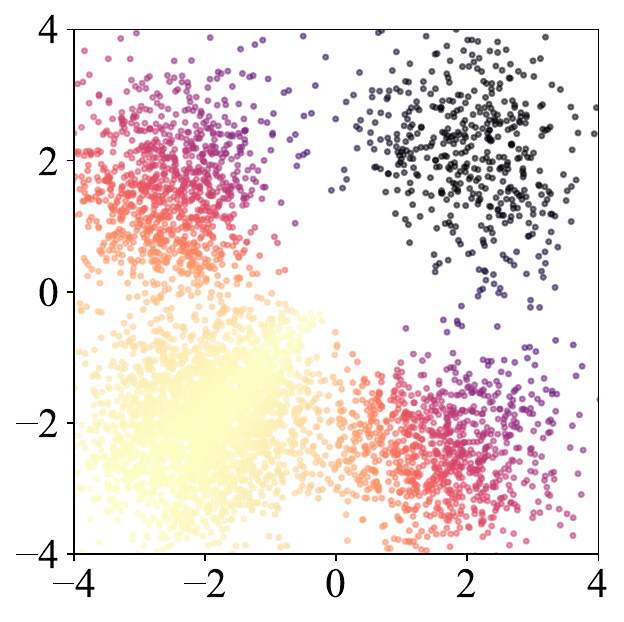} &
        \includegraphics[width=0.11\textwidth]{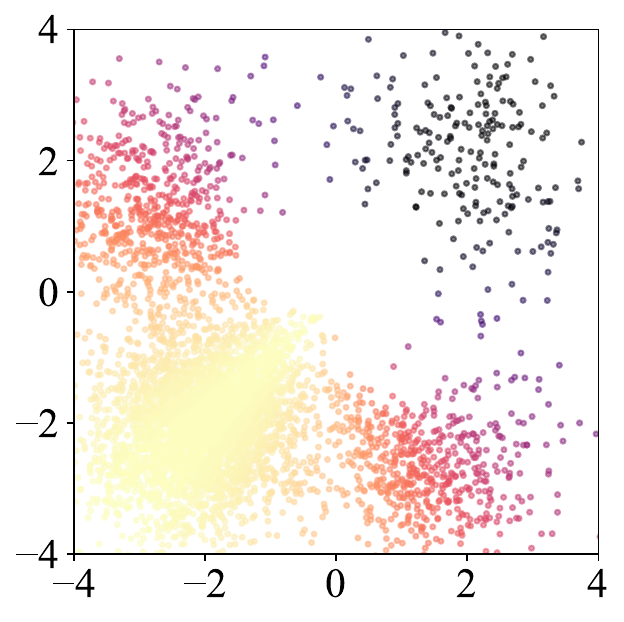} \\
         Regularization $r(x)$ & \makecell[c]{ours, $\E[s]=5$\\$\KL=0.0577$} & \makecell[c]{ours, $\E[s]=6$\\$\KL=0.1812$} & \makecell[c]{ours, $\E[s]=7$\\$\KL=0.6574$} \\
	\end{tabular}
    \caption{\textbf{Sampling comparisons under matched faithfulness in a toy GMM.}
    Left: prior $p(x)$, intrinsic score $s(x)$, and regularization $r(x)$. Columns vary the target $\E[s]\in\{5,6,7\}$ (calibrated per method). Rows show retrieval (Re.), optimization w/ regularization (OR), and EnergyDPS (Section~\ref{sec:energy_dps}). Numbers report $\KL(q\|p)$ (lower is closer to $p$). Experimental details in Appendix~\ref{appd:exp2}.}

    \vspace{-0.8cm} 
    \label{fig:exp2} 
    \end{center}
\end{figure}

\paragraph{Methodology III: Prompt-guided diffusion sampling.}
A fundamental limitation shared by both \textit{retrieval} and \textit{optimization} methodologies is
their mismatch with the natural image distribution.
In contrast, modern diffusion models~\citep{diffusion1,diffusion2} provide a scalable approximation
to this prior; we denote the resulting reference distribution by $p_\theta(x)\approx p(x)$.
Recent interpretability visualization methods, such as DiffExplainer~\cite{diffexplainer} and DEXTER~\citep{dexter}, do not directly sample from the desired induced distribution. Instead, they pose interpretation as a \textbf{proxy optimization} in the prompt space.
Let $p_\theta(x\mid c)$ denote a pre-trained text-conditioned diffusion model (e.g., stable diffusion~\citep{stable_diffusion}) with prompt embedding $c\in\mathbf{C}$.
These methods seek an embedding $c^\star$ that increases the expected intrinsic score:
\begin{equation}
    c^\star \;=\; \arg\max_{c \in \mathbf{C}}\ \E_{x \sim p_\theta(\cdot\mid c)} \big[s(x)\big].
\end{equation}
Here, the optimization variable is the embedding $c$, updated via gradient ascent through the
diffusion process.

Consequently, prompt-guided sampling does not search over the full probability space $\cP(\cX)$,
but is confined to the \emph{prompt-parameterized} family
\begin{equation}
    \cQ_{\mathrm{prompt}}
    \;:=\;
    \bigl\{\, p_\theta(\cdot\mid c)\ :\ c \in \mathbf{C} \,\bigr\}
    \ \subset\ \cP(\cX).
    \label{eq:prompt_family}
\end{equation}
From a distributional view, this can be interpreted as seeking a feasible approximation to the
KLSC induced distribution \emph{within} $\cQ_{\mathrm{prompt}}$, e.g.,
\begin{equation}
    q_m^{\mathrm{proxy}}
    \in
    \mathop{\arg\min}_{{q \in \cQ_{\mathrm{prompt}}}} \KL(q \| p_\theta)
    \; \text{s.t.} \;
    \E_q[s(X)] \ge m.
    \label{eq:klsc_prompt_restricted_m}
\end{equation}
This restriction creates a representational bottleneck: the prompt family may be insufficient to
capture fine-grained, polysemantic, or non-verbal feature semantics.
We formalize this limitation as a \textbf{prompt information bottleneck}
(Theorem~\ref{thm:prompt_bottleneck}). In particular, letting $q^{\mathrm{KLSC}}_\beta$ be the
(unrestricted) KLSC solution satisfying $\E_{q^{\mathrm{KLSC}}_\beta}[s(X)]=m$, we have
\begin{equation}
\small
\label{eq:delta_prompt_lower_main}
\begin{aligned}
    &\min_{\substack{q\in\cQ_{\mathrm{prompt}}\\ \E_q[s]\ge m}}\KL(q\|p_\theta)
    -
    \KL\!\big(q^{\mathrm{KLSC}}_\beta\|p_\theta\big)\\
    \ge&
    \inf_{\substack{q\in\cQ_{\mathrm{prompt}}\\ \E_q[s]\ge m}}
    \KL\!\bigl(q\|q^{\mathrm{KLSC}}_\beta\bigr).
\end{aligned}
\end{equation}
{Practically, this helps explain why prompt-guided methods can succeed on output logits yet fail on
internal features; we empirically illustrate this behavior in subsequent experiments (Fig.~\ref{fig:exp3}). Prompts mainly
steer \emph{linguistically describable} concepts, whereas many SAE features are fine-grained,
non-verbal, or polysemantic, so the desired $q_\beta^{\mathrm{KLSC}}$ may lie far outside
$\cQ_{\mathrm{prompt}}$, leading to unfaithful samples.
}


\paragraph{Experimental validation.}
Fig.~\ref{fig:exp2} validates the predicted sampling pathologies under matched $\E[s]$.
Retrieval concentrates near the selection boundary, whereas optimization with regularization
departs substantially from $p$ due to prior mismatch.
We additionally include EnergyDPS (Section~\ref{sec:energy_dps}) as a reference for an ideal sampler.
Implementation details are deferred to Appendix~\ref{appd:exp2}.

        
        
        
        
        

\subsection{The Proposed Paradigm: SAE Energy-Guided Diffusion Posterior Sampling} \label{sec:energy_dps}

Our analysis identifies systematic biases in previous visual MI paradigms along three orthogonal axes:
\textit{the object of study}, \textit{the constraint formulation}, and \textit{the sampling methodology}.
To mitigate these issues, we instantiate the KLSC principle with a practical sampler and propose EnergyDPS. Let $p_\theta$ denote a diffusion prior over images, and let $s:\cX\to\R$ be an intrinsic score derived from internal model features (here, SAE feature activations). We target the KLSC-induced tilted distribution
\begin{equation}
q_\beta(x)\ \propto\ p_\theta(x)\exp\!\big(\beta s(x)\big),
\label{eq:energydps_target}
\end{equation}
where $\beta\ge 0$ controls the strength of intrinsic guidance.
Concretely, we leverage diffusion posterior sampling (DPS) and reinterpret the intrinsic score
$s(x)$ as an energy over images. 
We then replace the measurement log-likelihood guidance in DPS \citep{dps} with energy guidance computed on the predicted clean image, i.e., injecting $\nabla_{x_i} s(\hat x_0)$ during the reverse process, yielding a training-free guided sampler (Algorithm~\ref{alg:eg_dps_gaussian}).

In Appendix~\ref{appd:sec_energydps}, we prove that EnergyDPS is a faithful sampler that targets
Eq. \eqref{eq:energydps_target} under mild conditions.
We show how EnergyDPS resolves the three axes.

\begin{algorithm}[t]
\caption{Energy-guided DPS (EnergyDPS)}
\label{alg:eg_dps_gaussian}
\begin{algorithmic}[1]
    \REQUIRE Diffusion steps $N$; DDPM noise schedule $\{\beta_i^{\mathrm{ddpm}}\}_{i=1}^{N}$ (with $\alpha_i=1-\beta_i^{\mathrm{ddpm}}$, $\bar\alpha_0=1$, $\bar\alpha_i=\prod_{j=1}^{i}\alpha_j$); reverse std $\{\tilde\sigma_i\}_{i=1}^{N}$; guidance step sizes $\{\zeta_i\}_{i=1}^{N}$ and guidance strength $\beta$; diffusion score model $u_\theta(\cdot,i)$; intrinsic score $s$ (also as energy function here).

    \STATE Sample $x_N \sim \mathcal N(0,I)$
    \FOR{$i = N$ \textbf{down to} $1$}
      \STATE $\hat u_i \gets u_\theta(x_i,i)$
      \STATE $\hat x_0 \gets \frac{1}{\sqrt{\bar\alpha_i}}\Big(x_i + (1-\bar\alpha_i)\hat u_i\Big)$
      \STATE Sample $z \sim \mathcal N(0,I)$
      \STATE $x'_{i-1} \gets 
      \frac{\sqrt{\bar\alpha_i(1-\bar\alpha_{i-1})}}{1-\bar\alpha_i}\,x_i
      + \frac{\sqrt{\bar\alpha_{i-1}}\,\beta_i}{1-\bar\alpha_i}\,\hat x_0
      + \tilde\sigma_i z$
      \STATE $x_{i-1} \gets x'_{i-1} + \zeta_i\,\beta\, \nabla_{x_i}\,s(\hat x_0)$
    \ENDFOR
    \STATE \textbf{return} $x_0$
\end{algorithmic}
\end{algorithm}

\begin{itemize}
    \item \textbf{Object of study.} We adopt SAE features as intrinsic units defining $s(x)$.
    Compared to single neurons under superposition, SAE features provide sparser and more
    interpretable units while avoiding task-injected semantics.

    \item \textbf{Constraint formulation.} We employ the KLSC principle, which induces a smooth
    exponential tilting of $p_\theta$ and avoids the boundary bias and threshold instability inherent
    to hard truncation.

    \item \textbf{Sampling methodology.} We instantiate the KLSC tilt via DPS-style gradient guidance,
    injecting $\nabla_{x_i}s(\hat x_0)$ at each reverse step.
    This avoids finite-support bias (retrieval), mode collapse under MAP (optimization), and the
    prompt-family bottleneck (proxy-guided diffusion).
\end{itemize}

{\paragraph{Experimental validation.}
Returning to Fig.~\ref{fig:exp2}, we now instantiate the diffusion-based reference sampler as
EnergyDPS and evaluate it under the matched faithfulness level.
Fig.~\ref{fig:exp2} shows that EnergyDPS consistently attains the smallest $\KL(q\|p)$ across different target levels, and shows that its samples vary smoothly as $\E[s]$
changes. These observations are empirically consistent with EnergyDPS approximating the KLSC induced distribution under the diffusion prior. Experimental details are deferred to Appendix~\ref{appd:exp2}.}





\section{Experiments}\label{sec:exp}

\subsection{Experiment Settings}
We instantiate {KLSC} with {EnergyDPS} on DINOv3 ResNeXt Large~\citep{dinov3}.
We train a transcoder (an SAE variant; Appendix~\ref{appd:sae}) on the stage-$2$, layer-$20$ using ImageNet-1K dataset and treat transcoder feature activations as intrinsic scores via a mix strategy.
We compare EnergyDPS with: (i) dataset search (top-$4$ ImageNet-1K samples), (ii) MACO~\citep{maco} (activation maximization), and (iii) DiffExplainer~\citep{diffexplainer} (proxy-guided generation paradigm).
Details are in Appendix~\ref{appd:exp3}.

We evaluate the generated visualizations using no-reference proxy metrics (Sec.~\ref{sec:exp_noref}) and a blinded interpretability study with human and AI raters (Sec.~\ref{sec:exp_blinded_study}). Additional experiments are provided in Appendix~\ref{appd:exp_dinov3}, including neuron-level superposition analysis (Appendix~\ref{appd:exp_dinov3_neuron}), more ablation experiments on diffusion priors (Appendix \ref{appd:exp_ablation}), 
semantic shift across activation levels (Appendix~\ref{appd:exp_dinov3_shift}),
efficiency comparisons (Appendix~\ref{appd:exp_dinov3_cost}), 
object drift under task injection (Appendix~\ref{appd:exp3_dinov3_task_injection_control}), and more qualitative comparisons (Appendix~\ref{appd:exp_dinov3_demo}).



\begin{figure}[t]
    \scriptsize
    
	\setlength{\tabcolsep}{0pt}
    \begin{center}
	\begin{tabular}{cccc}

        \includegraphics[width=0.12\textwidth]{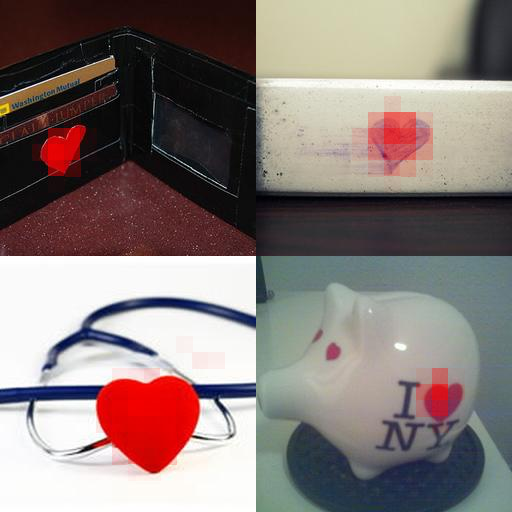} &
        \includegraphics[width=0.12\textwidth]{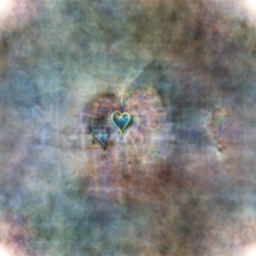}&
        \includegraphics[width=0.12\textwidth]{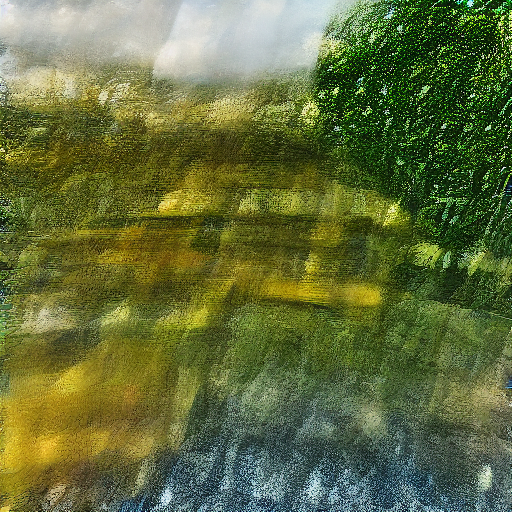} &
        \includegraphics[width=0.12\textwidth]{figures/exp3/additional/81/dps.png} \\
        \includegraphics[width=0.12\textwidth]{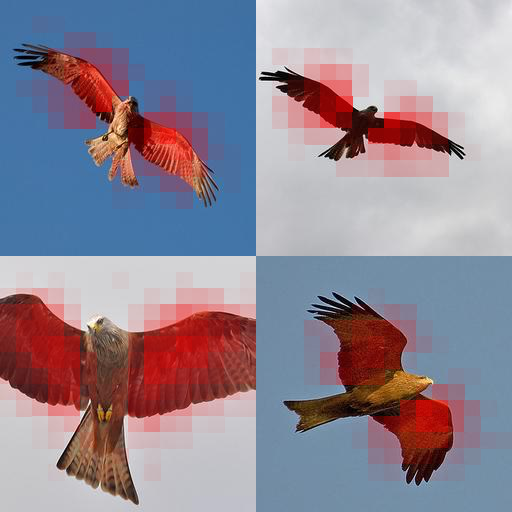} &
        \includegraphics[width=0.12\textwidth]{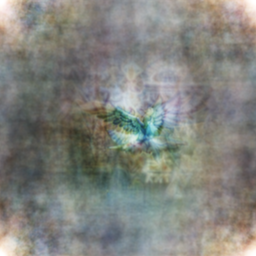}&
        \includegraphics[width=0.12\textwidth]{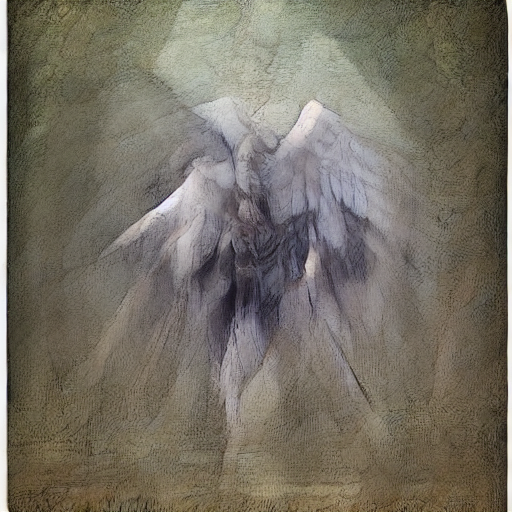} &
        \includegraphics[width=0.12\textwidth]{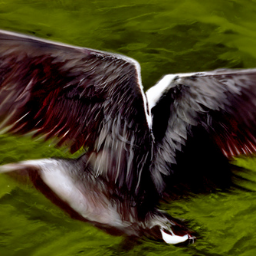} \\


        Top-$4$ samples & 
        \makecell[c]{MACO} & 
        \makecell[c]{DiffExplainer} & 
        \makecell[c]{EnergyDPS\\(ours)}\\

	\end{tabular}
    \vspace{-0.2cm}
    \caption{\textbf{Qualitative comparison on DINOv3 features.} Columns show Top-$4$ dataset samples, MACO, DiffExplainer, and EnergyDPS (ours). Rows show feature $81$ (top) and feature $9863$ (bottom), which visually suggest heart-shaped and wing-like patterns, respectively. These labels are post-hoc summaries for reader guidance, not ground-truth feature labels. Implementation details are provided in Appendix~\ref{appd:exp3}.}
    \vspace{-0.5cm} 
    \label{fig:exp3} 
    \end{center}
\end{figure}

\begin{table}[t]
    \small
    \caption{Quantitative evaluation on DINOv3 (QualiCLIP $\uparrow$, NRQM $\uparrow$) by MACO, DiffExplainer and EnergyDPS.}
    \label{tab:exp_3}
    \setlength{\tabcolsep}{5 pt}
    \centering
    \begin{tabular}{clccc}
        \toprule
        Feature & Methods & $s(x)$ & QualiCLIP & NRQM\\
        \midrule
        \multirow{3}{*}{9863} & MACO & 14.65 & 0.3846 & 8.1563\\
        ~ & DiffExplainer & 16.53 & 0.3974 & 3.8901 \\
        ~ & EnergyDPS & 15.81 & \textbf{0.7286} & \textbf{8.8636}\\
        
        \midrule

        \multirow{3}{*}{4831} & MACO & 16.58 & 0.2946 & 3.7578\\
        ~ & DiffExplainer & 14.66 & 0.3410 & 5.1814\\
        ~ & EnergyDPS & 15.53 & \textbf{0.7015} & \textbf{8.6307}\\

        \midrule

        \multirow{3}{*}{\makecell[c]{Random\\100 features}} & MACO & 16.51 & 0.4228 & 4.4869\\
        ~ & DiffExplainer & 15.34 & 0.3259 & 6.7460\\
        ~ & EnergyDPS & 16.20 & \textbf{0.5791} & \textbf{6.7821}\\
        \bottomrule
    \end{tabular}
    \vspace{-0.5cm}
\end{table}

\subsection{No-reference Proxies Results} \label{sec:exp_noref}
Since SAE features have no ground-truth label, we match \emph{faithfulness} by tuning each method to reach a comparable activation level (aligned to the top-$4$ retrieval baseline), and evaluate \emph{interpretability} using NRQM~\citep{nrqm} and QualiCLIP \cite{qualiclip}, which provides no-reference proxies for perceptual interpretability and aesthetic plausibility.

From Fig.~\ref{fig:exp3}, EnergyDPS produces human-interpretable samples for two features ($81$ and $9863$), while activation maximization often exhibits
high-frequency artifacts and proxy-guided generation is less interpretable.
From Table~\ref{tab:exp_3}, EnergyDPS achieves the best QualiCLIP and NRQM on $100$ random additional high-activation features. It consistently improves QualiCLIP, while achieving NRQM comparable to MACO and higher than proxy-guided generation.

\subsection{Blinded Interpretability Study}
\label{sec:exp_blinded_study}

We further conduct a blinded pilot study to evaluate whether the generated visualization sets convey clear shared semantics to independent raters. For each of the $100$ random SAE features mentioned in Table~\ref{tab:exp_3}, raters are shown, for each method, a set of four generated images without method labels and additional text labels. They are asked to provide: (i) a short free-form description of the shared semantic meaning, and (ii) an interpretability score on a 1--4 scale, where higher scores indicate
clearer shared concepts and more visually informative images.

We collect ratings from six raters, including four human raters and two
AI raters, GPT-5.4 and Gemini-3-flash-lite. All raters follow the same evaluation protocol and scoring rubric. To measure the agreement among semantic descriptions, we also compute the weighted variance of textual descriptions in CLIP text-embedding space, using the interpretability score as the weight. Lower variance indicates that raters converge to more
similar semantic descriptions.
The detailed scoring rubric and the prompts used for AI raters are provided in Appendix~\ref{appd:exp_human}.

From Table~\ref{tab:exp_4}, EnergyDPS achieves the best blinded
interpretability results across all raters. It obtains the highest scores from human, GPT, and Gemini raters, indicating that its generated image sets convey clearer shared semantics than MACO and DiffExplainer. 
Moreover, EnergyDPS yields the lowest weighted variance of descriptions, suggesting more consistent semantic interpretations across raters. 

\begin{table}[t]
    \small
    \caption{Quantitative evaluation on DINOv3 by the blinded interpretability study (interpretability score $\uparrow$, weighted variance $\downarrow$). Detailed results are reported in Table~\ref{tab:appd_exp_4} in Appendix~\ref{appd:exp_human}.}
    \vspace{-0.1cm}
    \label{tab:exp_4}
    \setlength{\tabcolsep}{5 pt}
    \centering
    \begin{tabular}{lcccc}
        \toprule
        \multirow{2}{*}{Methods} & \multicolumn{3}{c}{Interpretability score} & \multirow{2}{*}{Weighted var.}\\
         & Human & GPT & Gemini & \\
        \midrule
        MACO & 1.92 & 2.09 & 2.57 & 0.2917\\
        DiffExplainer & 1.95 & 2.14 & 1.90 & 0.2643\\
        EnergyDPS & \textbf{2.92} & \textbf{2.90} & \textbf{3.09} & \textbf{0.2505}\\
        \bottomrule
    \end{tabular}
    \vspace{-0.7cm}
\end{table}

\section{Discussions} \label{sec:discussion}
\subsection{Interpretability vs. Realism}
We emphasize that interpretability is not equivalent to visual realism. Rather, interpretability concerns whether the model's internal behavior is expressed in a form that humans can understand. In our setting, the natural image distribution is a suitable reference space for two reasons. First, natural images are directly perceptible and carry semantic structures that humans can inspect. Second, the model under study is itself a natural-image representation model, so mapping the feature-induced distribution back to the natural image space provides a minimally lossy way to expose its intrinsic visual semantics. In contrast, further projecting the feature into an external prompt or text space may discard visual information that is not
easily verbalized, consistent with the prompt information bottleneck discussed in Sec.~\ref{sec:sampler}.

\subsection{Reference Space Beyond Natural Images}
More generally, KLSC should be understood as a distributional principle relative to a human-interpretable and minimally lossy reference space, rather than as a claim that raw-data realism always implies interpretability. In domains where raw observations are not themselves human-interpretable, the reference prior should instead be defined over an associated representation that is more directly understandable to humans. For example, in full-waveform inversion, one may define the prior over reconstructed velocity models rather
than raw waveform measurements.

\subsection{Limitations}
Although the natural image distribution provides a useful and minimally lossy reference space for vision models, there remains a gap between natural images and truly human-understandable concepts. A natural-image prior may still bias explanations toward visually plausible or naturalistic modes, while missing non-natural but model-relevant evidence. Therefore, our current approach should be understood as an approximation to human-understandable interpretation rather than a complete solution to the gap between natural images and human concepts.

\section{Conclusion}
We proposed a \emph{distributional view} of visual MI: interpreting an intrinsic score $s$ amounts to identifying an induced distribution $q$ that balances \emph{faithfulness} to $s$ and \emph{interpretability} via closeness to a natural image distribution $p$.
This view yields a unified taxonomy along three axes---\textit{object of study}, \textit{constraint
formulation}, and \textit{sampling methodology}---and explains common failure modes of previous
paradigms.

To address these issues, we introduced the KLSC principle, which defines interpretation as a KL-minimal optimization problem under a soft constraint, and achieved it with {EnergyDPS}.
Experiments on GMMs and DINOv3 SAE features validate the effectiveness of our proposed view and show that {EnergyDPS} more closely matches the natural image distribution $p$ under matched
faithfulness than previous paradigms.
EnergyDPS incurs the cost of diffusion sampling, requiring many reverse steps, making it slower and more memory-intensive than optimization-based baselines. 
Improving efficiency and scalability (fewer-step samplers or distillation) is left for future work.







\bibliography{main_ref}

@inproceedings{dexter,
    title={{DEXTER}: Diffusion-Guided {EX}planations with {TE}xtual Reasoning for Vision Models},
    author={Simone Carnemolla and Matteo Pennisi and Sarinda Samarasinghe and Giovanni Bellitto and Simone Palazzo and Daniela Giordano and Mubarak Shah and Concetto Spampinato},
    booktitle={Annual Conference on Neural Information Processing Systems (NeurIPS)},
    year={2025},
}

@article{diffexplainer,
  author       = {Matteo Pennisi and
                  Giovanni Bellitto and
                  Simone Palazzo and
                  Isaak Kavasidis and
                  Mubarak Shah and
                  Concetto Spampinato},
  title        = {DiffExplainer: Towards cross-modal global explanations with diffusion
                  models},
  journal      = {Computer Vision and Image Understanding (CVIU)},
  volume       = {262},
  pages        = {104559},
  year         = {2025},
}

@inproceedings{usae,
  author       = {Harrish Thasarathan and
                  Julian Forsyth and
                  Thomas Fel and
                  Matthew Kowal and
                  Konstantinos G. Derpanis},
  title        = {Universal Sparse Autoencoders: Interpretable Cross-Model Concept Alignment},
  booktitle    = {42th International Conference on Machine Learning (ICML)},
  year         = {2025},
}

@inproceedings{
  patchsae,
  title={Sparse autoencoders reveal selective remapping of visual concepts during adaptation},
  author={Hyesu Lim and Jinho Choi and Jaegul Choo and Steffen Schneider},
  booktitle={International Conference on Learning Representations (ICLR)},
  year={2025},
}

@misc{IRH,
      title={Into the Rabbit Hull: From Task-Relevant Concepts in DINO to Minkowski Geometry}, 
      author={Thomas Fel and Binxu Wang and Michael A. Lepori and Matthew Kowal and Andrew Lee and Randall Balestriero and Sonia Joseph and Ekdeep S. Lubana and Talia Konkle and Demba Ba and Martin Wattenberg},
      year={2025},
      eprint={2510.08638},
      archivePrefix={arXiv},
      primaryClass={cs.CV},
      url={https://arxiv.org/abs/2510.08638}, 
}

@misc{TMS,
      title={Toy Models of Superposition}, 
      author={Nelson Elhage and Tristan Hume and Catherine Olsson and Nicholas Schiefer and Tom Henighan and Shauna Kravec and Zac Hatfield-Dodds and Robert Lasenby and Dawn Drain and Carol Chen and Roger Grosse and Sam McCandlish and Jared Kaplan and Dario Amodei and Martin Wattenberg and Christopher Olah},
      year={2022},
      eprint={2209.10652},
      archivePrefix={arXiv},
      primaryClass={cs.LG},
      url={https://arxiv.org/abs/2209.10652}, 
}

@article{feature_vis,
  title   = {Feature Visualization},
  author  = {Olah, Chris and Mordvintsev, Alexander and Schubert, Ludwig},
  journal = {Distill},
  year    = {2017},
  doi     = {10.23915/distill.00007},
  url     = {https://distill.pub/2017/feature-visualization}
}

@misc{vital,
      title={VITAL: More Understandable Feature Visualization through Distribution Alignment and Relevant Information Flow}, 
      author={Ada Gorgun and Bernt Schiele and Jonas Fischer},
      year={2025},
      eprint={2503.22399},
      archivePrefix={arXiv},
      primaryClass={cs.CV},
      url={https://arxiv.org/abs/2503.22399}, 
}

@article{maco,
  title={Unlocking feature visualization for deep network with magnitude constrained optimization},
  author={Fel, Thomas and Boissin, Thibaut and Boutin, Victor and Picard, Agustin and Novello, Paul and Colin, Julien and Linsley, Drew and Rousseau, Tom and Cad{\`e}ne, R{\'e}mi and Goetschalckx, Lore and others},
  journal={37th Advances in Neural Information Processing Systems (NeurIPS)},
  volume={36},
  pages={37813--37826},
  year={2023}
}

@book{pinsker,
  title={Information theory and network coding},
  author={Yeung, Raymond W},
  year={2008},
  publisher={Springer Science \& Business Media}
}

@book{distribution,
  title={A guide to distribution theory and Fourier transforms},
  author={Strichartz, Robert S},
  year={2003},
  publisher={World Scientific Publishing Company}
}

@inproceedings{Diffusion1,
  author       = {Prafulla Dhariwal and
                  Alexander Quinn Nichol},
  title        = {Diffusion Models Beat GANs on Image Synthesis},
  booktitle    = {Advances in Neural Information Processing Systems (NeurIPS)},
  pages        = {8780--8794},
  year         = {2021},
}

@inproceedings{diffusion2,
  author       = {Yang Song and
                  Jascha Sohl{-}Dickstein and
                  Diederik P. Kingma and
                  Abhishek Kumar and
                  Stefano Ermon and
                  Ben Poole},
  title        = {Score-Based Generative Modeling through Stochastic Differential Equations},
  booktitle    = {International Conference on Learning Representations (ICLR)},
  year         = {2021},
}

@book{gibbs,
  title={Information theory, inference and learning algorithms},
  author={MacKay, David JC},
  year={2003},
  publisher={Cambridge university press}
}

@inproceedings{dps,
  author       = {Hyungjin Chung and
                  Jeongsol Kim and
                  Michael Thompson McCann and
                  Marc Louis Klasky and
                  Jong Chul Ye},
  title        = {Diffusion Posterior Sampling for General Noisy Inverse Problems},
  booktitle    = {=International Conference on Learning Representations (ICLR)},
  year         = {2023},
}

@misc{dinov3,
      title={DINOv3}, 
      author={Oriane Siméoni and Huy V. Vo and Maximilian Seitzer and Federico Baldassarre and Maxime Oquab and Cijo Jose and Vasil Khalidov and Marc Szafraniec and Seungeun Yi and Michaël Ramamonjisoa and Francisco Massa and Daniel Haziza and Luca Wehrstedt and Jianyuan Wang and Timothée Darcet and Théo Moutakanni and Leonel Sentana and Claire Roberts and Andrea Vedaldi and Jamie Tolan and John Brandt and Camille Couprie and Julien Mairal and Hervé Jégou and Patrick Labatut and Piotr Bojanowski},
      year={2025},
      eprint={2508.10104},
      archivePrefix={arXiv},
      primaryClass={cs.CV},
      url={https://arxiv.org/abs/2508.10104}, 
}

@INPROCEEDINGS{yolo,
  author={Redmon, Joseph and Divvala, Santosh and Girshick, Ross and Farhadi, Ali},
  booktitle={IEEE Conference on Computer Vision and Pattern Recognition (CVPR)}, 
  title={You Only Look Once: Unified, Real-Time Object Detection}, 
  year={2016},
  pages={779-788},
  doi={10.1109/CVPR.2016.91}}

@article{lrtfr,
  title={Low-rank tensor function representation for multi-dimensional data recovery},
  author={Luo, Yisi and Zhao, Xile and Li, Zhemin and Ng, Michael K and Meng, Deyu},
  journal={IEEE transactions on pattern analysis and machine intelligence (TPAMI)},
  volume={46},
  number={5},
  pages={3351--3369},
  year={2023},
  publisher={IEEE}
}

@INPROCEEDINGS{cnn_disentang_cvpr,
  author={Hesse, Robin and Fischer, Jonas and Schaub-Meyer, Simone and Roth, Stefan},
  booktitle={2025 IEEE/CVF Conference on Computer Vision and Pattern Recognition Workshops (CVPRW)}, 
  title={Disentangling Polysemantic Channels in Convolutional Neural Networks}, 
  year={2025},
  volume={},
  number={},
  pages={4799-4803},
}

@article{safe_icml,
  title={Advancing llm safe alignment with safety representation ranking},
  author={Du, Tianqi and Wei, Zeming and Chen, Quan and Zhang, Chenheng and Wang, Yisen},
  journal={arXiv preprint arXiv:2505.15710},
  year={2025}
}

@inproceedings{safe_icml2,
  author       = {Yotam Wolf and
                  Noam Wies and
                  Oshri Avnery and
                  Yoav Levine and
                  Amnon Shashua},
  title        = {Fundamental Limitations of Alignment in Large Language Models},
  booktitle    = {International Conference on Machine Learning (ICML)},
  year         = {2024},
}

@misc{stable_diffusion,
      title={High-Resolution Image Synthesis with Latent Diffusion Models}, 
      author={Robin Rombach and Andreas Blattmann and Dominik Lorenz and Patrick Esser and Björn Ommer},
      year={2022},
      eprint={2112.10752},
      archivePrefix={arXiv},
      primaryClass={cs.CV},
      url={https://arxiv.org/abs/2112.10752}, 
}

@article{qualiclip,
  title={Quality-aware image-text alignment for opinion-unaware image quality assessment},
  author={Agnolucci, Lorenzo and Galteri, Leonardo and Bertini, Marco},
  journal={arXiv preprint arXiv:2403.11176},
  year={2024}
}

@article{nrqm,
  title={Learning a no-reference quality metric for single-image super-resolution},
  author={Ma, Chao and Yang, Chih-Yuan and Yang, Xiaokang and Yang, Ming-Hsuan},
  journal={Computer Vision and Image Understanding (CVIU)},
  volume={158},
  pages={1--16},
  year={2017},
  publisher={Elsevier}
}

@inproceedings{sae1,
  author       = {Adam Karvonen and
                  Can Rager and
                  Johnny Lin and
                  Curt Tigges and
                  Joseph Isaac Bloom and
                  David Chanin and
                  Yeu{-}Tong Lau and
                  Eoin Farrell and
                  Callum McDougall and
                  Kola Ayonrinde and
                  Demian Till and
                  Matthew Wearden and
                  Arthur Conmy and
                  Samuel Marks and
                  Neel Nanda},
  title        = {SAEBench: {A} Comprehensive Benchmark for Sparse Autoencoders in Language Model Interpretability},
  booktitle    = {International Conference on Machine Learning (ICML)},
  year         = {2025},
}

@article{lorsa,
  title={Towards Understanding the Nature of Attention with Low-Rank Sparse Decomposition},
  author={He, Zhengfu and Wang, Junxuan and Lin, Rui and Ge, Xuyang and Shu, Wentao and Tang, Qiong and Zhang, Junping and Qiu, Xipeng},
  journal={arXiv preprint arXiv:2504.20938},
  year={2025}
}

@article{vis,
  title={Understanding neural networks through deep visualization},
  author={Yosinski, Jason and Clune, Jeff and Nguyen, Anh and Fuchs, Thomas and Lipson, Hod},
  journal={arXiv preprint arXiv:1506.06579},
  year={2015}
}

@inproceedings{dataset_search,
  title={Visualizing and understanding convolutional networks},
  author={Zeiler, Matthew D and Fergus, Rob},
  booktitle={European conference on computer vision (ECCV)},
  pages={818--833},
  year={2014},
}

@article{vis2,
  title={Visualizing deep convolutional neural networks using natural pre-images},
  author={Mahendran, Aravindh and Vedaldi, Andrea},
  journal={International Journal of Computer Vision (IJCV)},
  volume={120},
  number={3},
  pages={233--255},
  year={2016},
  publisher={Springer}
}

@INPROCEEDINGS{opt_waeck,
  author={Nguyen, Anh and Yosinski, Jason and Clune, Jeff},
  booktitle={IEEE Conference on Computer Vision and Pattern Recognition (CVPR)}, 
  title={Deep neural networks are easily fooled: High confidence predictions for unrecognizable images}, 
  year={2015},
  volume={},
  number={},
  pages={427-436},
}

@inproceedings{opt1,
  author       = {Karen Simonyan and
                  Andrea Vedaldi and
                  Andrew Zisserman},
  editor       = {Yoshua Bengio and
                  Yann LeCun},
  title        = {Deep Inside Convolutional Networks: Visualising Image Classification
                  Models and Saliency Maps},
  booktitle    = {International Conference on Learning Representations (ICLR)},
  year         = {2014},
}

@inproceedings{attribution1,
  author       = {Sabine Muzellec and
                  Thomas Fel and
                  Victor Boutin and
                  L{\'{e}}o And{\'{e}}ol and
                  Rufin VanRullen and
                  Thomas Serre},
  title        = {Saliency strikes back: How filtering out high frequencies improves
                  white-box explanations},
  booktitle    = {International Conference on Machine Learning (ICML)},
  year         = {2024},
}

@inproceedings{attribution2,
  author       = {Paul Novello and
                  Thomas Fel and
                  David Vigouroux},
  editor       = {Sanmi Koyejo and
                  S. Mohamed and
                  A. Agarwal and
                  Danielle Belgrave and
                  K. Cho and
                  A. Oh},
  title        = {Making Sense of Dependence: Efficient Black-box Explanations Using
                  Dependence Measure},
  booktitle    = {Advances in Neural Information Processing Systems (NeurIPS)},
  year         = {2022},
}

@inproceedings{attribution3,
  author       = {Jacob Dunefsky and
                  Philippe Chlenski and
                  Neel Nanda},
  editor       = {Amir Globersons and
                  Lester Mackey and
                  Danielle Belgrave and
                  Angela Fan and
                  Ulrich Paquet and
                  Jakub M. Tomczak and
                  Cheng Zhang},
  title        = {Transcoders find interpretable {LLM} feature circuits},
  booktitle    = {Advances in Neural Information Processing Systems (NeurIPS)},
  year         = {2024},
}

@inproceedings{cep,
  author       = {Cheng Lu and
                  Huayu Chen and
                  Jianfei Chen and
                  Hang Su and
                  Chongxuan Li and
                  Jun Zhu},
  editor       = {Andreas Krause and
                  Emma Brunskill and
                  Kyunghyun Cho and
                  Barbara Engelhardt and
                  Sivan Sabato and
                  Jonathan Scarlett},
  title        = {Contrastive Energy Prediction for Exact Energy-Guided Diffusion Sampling
                  in Offline Reinforcement Learning},
  booktitle    = {International Conference on Machine Learning (ICML)},
  volume       = {202},
  pages        = {22825--22855},
  year         = {2023},
}

@article{concept1,
  title={Concept-based explainable artificial intelligence: A survey},
  author={Poeta, Eleonora and Ciravegna, Gabriele and Pastor, Eliana and Cerquitelli, Tania and Baralis, Elena},
  journal={ACM Computing Surveys},
  year={2023},
  publisher={ACM New York, NY}
}

@inproceedings{concept2,
  author       = {Been Kim and
                  Martin Wattenberg and
                  Justin Gilmer and
                  Carrie J. Cai and
                  James Wexler and
                  Fernanda B. Vi{\'{e}}gas and
                  Rory Sayres},
  editor       = {Jennifer G. Dy and
                  Andreas Krause},
  title        = {Interpretability Beyond Feature Attribution: Quantitative Testing
                  with Concept Activation Vectors {(TCAV)}},
  booktitle    = {International Conference on Machine Learning (ICML)},
  volume       = {80},
  pages        = {2673--2682},
  year         = {2018},
}

@inproceedings{lrh1,
  author       = {Kiho Park and
                  Yo Joong Choe and
                  Victor Veitch},
  title        = {The Linear Representation Hypothesis and the Geometry of Large Language
                  Models},
  booktitle    = {International Conference on Machine Learning (ICML)},
  year         = {2024},
}

@article{lrh2,
  title={A holistic approach to unifying automatic concept extraction and concept importance estimation},
  author={Fel, Thomas and Boutin, Victor and B{\'e}thune, Louis and Cad{\`e}ne, R{\'e}mi and Moayeri, Mazda and And{\'e}ol, L{\'e}o and Chalvidal, Mathieu and Serre, Thomas},
  journal={Advances in Neural Information Processing Systems},
  volume={36},
  pages={54805--54818},
  year={2023}
}

@inproceedings{dtN,
  author       = {Sukrut Rao and
                  Sweta Mahajan and
                  Moritz B{\"{o}}hle and
                  Bernt Schiele},
  editor       = {Ales Leonardis and
                  Elisa Ricci and
                  Stefan Roth and
                  Olga Russakovsky and
                  Torsten Sattler and
                  G{\"{u}}l Varol},
  title        = {Discover-then-Name: Task-Agnostic Concept Bottlenecks via Automated
                  Concept Discovery},
  booktitle    = {European Conference on Computer Vision (ECCV)},
  volume       = {15135},
  pages        = {444--461},
  publisher    = {Springer},
  year         = {2024},
}

@inproceedings{ffhq,
  title={One millisecond face alignment with an ensemble of regression trees},
  author={Kazemi, Vahid and Sullivan, Josephine},
  booktitle={IEEE Conference on Computer Vision and Pattern Recognition (CVPR)},
  pages={1867--1874},
  year={2014}
}
\bibliographystyle{icml2026}

\newpage
\appendix
\onecolumn
\section{Related Work}
\label{appd:relate}
\subsection{Mechanistic Interpretability in Vision Models}
To understand the internal mechanisms of the model, we need to study its internal computational units.
Neurons are the most direct internal units of a model, and abundant researches have focused on them to reveal the internal mechanisms of models.
Attribution-based methods \cite{attribution1, attribution2, attribution3} trace the models in under specific stimuli, but they do not reveal “what” internal features are being computed \cite{IRH}.
To move beyond attribution, concept-based methods \cite{concept1,concept2} shift the focus to interpreting the latent representations within the model (e.g., neurons).
However, this neuron-centric methods faces fundamental theoretical limitations: individual neurons frequently exhibit polysemanticity, lack a privileged basis representation, and implicitly premise that the semantic feature space is dimensionally bounded by the number of units.
To address this limitation, the subsequent series of researches \cite{lrh1,TMS,lrh2,IRH} crystallized into \textit{Linear Representation Hypothesis} (LRH), which holds that models contain many more features than neurons, arranged as sparse, quasi-orthogonal directions \cite{IRH}.
Under the LRH, we can learn this polysemantic concepts through the dictionary learning methods.
SAE has been proven to be an effective research method.
The polysemantic concepts learned by SAE are commonly referred to as features. 
To understand the meaning of these features, we typically need to employ feature visualization techniques.

\subsection{Feature Visualization in Vision Models}
A common strategy to obtain the explanations of models' mechanisms is by delving into internal behavior \cite{vis, dataset_search}, and identify their preferred stimuli.
Originally, the preferred stimuli is sought from a large dataset \cite{dataset_search}, which we called it as \textbf{dataset search} paradigm.
However, evaluating each neuron or feature across  abundant images requires significant computational demands, compounded by the challenge of missing informative images and the non-trivial interpretation of specific visual features within images.
\textbf{Activation maximization} paradigm automatically generate the preferred stimuli for specific models' internal units \cite{vis,maco,vital,vis2}.
Compared with dataset search, activation maximization paradigm needs lower computation demands.
However, previous studies \cite{opt_waeck, feature_vis} demonstrate that unconstrained optimization may lead to uninterpretable patterns.
The vast image distribution may contain “fooling” inputs that excite the internal units without resembling natural images \cite{diffexplainer}.
To address this issue, previous activation maximization paradigms design specific regularization to constraint the image distribution \cite{opt1,maco,vital,feature_vis} manually, which may introduce external biases.
Recently, DiffExplainer \cite{diffexplainer} and DEXTER \cite{dexter} introduce a \textbf{diffusion-based proxy paradigms}, which optimize a soft or hard prompt to generate the preferred stimuli for specific internal units.
These methods rely on prompt-guided diffusion models to obtain the natural image prior, but this process from image to prompt inevitably results in information loss.
This information loss is not severe in visualizations of task-specific outputs, as the classifier's labels can typically be aligned with the prompt.
However, when visualizing units within the model, this loss is amplified or even rendered ineffective (Fig. \ref{fig:exp3}). 
To address this limitation, we theoretically analyze the above paradigms, and we propose EnergyDPS to get lossless visualization of internal units within the vision model.

\section{More Preliminary}
\label{appd:preliminary}
Let $(\cX, \cF)$ be a measurable space. We denote by $\cP(\cX)$ the set of all probability measures on $(\cX, \cF)$.
To simplify notation, we assume throughout this work that all considered probability measures admit probability density functions (PDFs) with respect to a common $\sigma$-finite reference measure $\mu$ (e.g., the Lebesgue measure on $\R^d$).
We use $p(x)$ and $q(x)$ to denote the density functions of distributions $P, Q \in \cP(\cX)$, respectively.

\begin{definition}[Expectation]
    For a random variable $X \sim P$ and a measurable function $f: \cX \to \R$, the expectation is defined as:
    \begin{equation}
        \E_{x \sim p}[f(x)] \coloneqq \int_{\cX} f(x) \, p(x) \, \mu(\dd x).
    \end{equation}
\end{definition}

\begin{definition}[Kullback--Leibler Divergence]
    The Kullback--Leibler (KL) divergence from $Q$ to $P$ is defined as the expected log-likelihood ratio:
    \begin{equation}
        \KL(q\|p) \coloneqq \E_{x \sim q} \left[ \log \frac{q(x)}{p(x)} \right] = \int_{\cX} q(x) \log \frac{q(x)}{p(x)} \, \mu(\dd x).
    \end{equation}
    We adopt the convention that $0 \log 0 = 0$, and $\KL(q\|p) = +\infty$ if there exists a set $A$ with $\mu(A)>0$ such that $q(x)>0$ and $p(x)=0$ for $x \in A$ (absolute continuity condition).
\end{definition}

\begin{definition}[Total Variation Distance]
    The Total Variation (TV) distance between two distributions $Q$ and $P$ quantifies the $L_1$ distance between their densities:
    \begin{equation}
        \mathrm{TV}(q, p) \coloneqq \frac{1}{2} \int_{\cX} \big| q(x) - p(x) \big| \, \mu(\dd x).
    \end{equation}
    Intuitively, this represents the largest possible difference in probabilities that the two distributions can assign to the same event.
\end{definition}

\begin{lemma}[Pinsker's Inequality, Theorem 2.33 in \citet{pinsker}]
    \label{lem:pinsker}
    For any two distributions $q, p \in \cP(\cX)$, the total variation distance is upper bounded by the square root of the KL divergence:
    \begin{equation}
        \mathrm{TV}(q, p) \le \sqrt{\frac{1}{2} \KL(q\|p)}.
        \label{eq:pinsker}
    \end{equation}
\end{lemma}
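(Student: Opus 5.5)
We prove Pinsker's inequality in the standard way: reduce to two-point distributions, then verify a one-variable calculus inequality. If $\KL(q\|p)=+\infty$ there is nothing to prove, so assume $\KL(q\|p)<\infty$; in particular $q\ll p$ and we may work with the densities $q(x),p(x)$ with respect to $\mu$.

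\emph{Step 1 (choosing the extremal set).} Let $A:=\{x: q(x)>p(x)\}\in\cF$. Since $\int (q-p)\,\dd\mu=0$, the positive and negative parts of $q-p$ have equal mass. Hence
\[
\TV(q,p)=\tfrac12\int|q-p|\,\dd\mu=q(A)-p(A).
\]
Write $a:=q(A)$ and $b:=p(A)$, so that $\TV(q,p)=a-b$.

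\emph{Step 2 (data processing onto the partition $\{A,A^c\}$).} We show
\[
\KL(q\|p)\ \ge\ d(a\|b):=a\log\tfrac ab+(1-a)\log\tfrac{1-a}{1-b}.
\]
We use the log-sum inequality, in its integral form, on each cell $B\in\{A,A^c\}$:
\[
\int_B q\log\tfrac qp\,\dd\mu\ \ge\ q(B)\log\tfrac{q(B)}{p(B)}.
\]
This follows from Jensen's inequality applied to the convex function $t\mapsto t\log t$ under the probability measure $p(\cdot\mid B)$, evaluated at the likelihood ratio $t=q/p$. Degenerate cells with $p(B)=0$ force $q(B)=0$ because $q\ll p$; they contribute $0$ under the convention $0\log 0=0$. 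Summing the two cells gives the claim.

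\emph{Step 3 (binary Pinsker).} It remains to show $d(a\|b)\ge 2(a-b)^2$ for $a,b\in[0,1]$. Then
\[
\TV(q,p)^2=(a-b)^2\le \tfrac12 d(a\|b)\le\tfrac12\KL(q\|p),
\]
which is exactly \eqref{eq:pinsker} after taking square roots.

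Fix $a$ and set $g(b):=d(a\|b)-2(a-b)^2$ for $b\in(0,1)$. Then $g(a)=0$ and
\[
g'(b)=-\frac ab+\frac{1-a}{1-b}+4(a-b)=(b-a)\Big(\frac{1}{b(1-b)}-4\Big).
\]
Since $b(1-b)\le\tfrac14$, the factor $\frac{1}{b(1-b)}-4$ is nonnegative. So $g$ is decreasing on $(0,a]$ and increasing on $[a,1)$, and therefore $g\ge 0$. The endpoints $b\in\{0,1\}$ are handled separately: either $d(a\|b)=+\infty$, or $a=b$ and both sides vanish.

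\emph{Main obstacle.} The only delicate point is Step 2 in full measure-theoretic generality: justifying the conditional Jensen/log-sum step when the ratio $q/p$ is unbounded and when cells have zero mass. We handle this using $\KL(q\|p)<\infty$, which ensures integrability of $(q/p)\log(q/p)$ under $p$, together with the $0\log0$ convention. The calculus in Step 3 is routine.
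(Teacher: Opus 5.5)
Your proof is correct. The paper does not prove this lemma itself; it only cites Theorem 2.33 of Yeung. That result is stated for distributions on a common discrete alphabet, in terms of the variational distance $\sum_x|p(x)-q(x)|=2\,\TV(q,p)$ and with base-2 logarithms. The paper's form $\TV\le\sqrt{\KL/2}$ (in nats) is that statement after a change of constants.

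Your route is the standard textbook argument. You reduce to the two-cell partition $\{A,A^c\}$ with $A=\{q>p\}$ via a cellwise log-sum/Jensen inequality. You then prove the binary bound $d(a\|b)\ge 2(a-b)^2$ from the derivative $g'(b)=(b-a)\bigl(\tfrac{1}{b(1-b)}-4\bigr)$, which I checked.

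What this buys over the bare citation is that it establishes the inequality in the generality the paper actually states: arbitrary $q,p\in\cP(\cX)$ with densities with respect to $\mu$. All the measure-theoretic content sits in the cellwise Jensen step, and your handling of it is sound. Integrability follows from $\KL(q\|p)<\infty$ together with $t\log t\ge -1/e$. The mean $\E_{p(\cdot\mid B)}[q/p]=q(B)/p(B)$ is finite, and null cells are forced by $q\ll p$.

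One cosmetic point: when $a\in\{0,1\}$, the point $b=a$ lies outside $(0,1)$. So ``$g(a)=0$'' should be read through the continuous extension of $g$ to $b=a$, for example $d(0\|b)=-\log(1-b)\to 0$ as $b\to 0^+$. With that reading, your monotonicity argument goes through unchanged.
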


\begin{definition}[Dirac measure / Dirac delta]
\label{def:dirac}
Let $(\cX,\mathcal{B})$ be a measurable space and let $x\in\cX$.
The \emph{Dirac measure} at $x$, denoted by $\delta_x$, is the probability measure on $(\cX,\mathcal{B})$ defined by
\begin{equation}
  \delta_x(A) :=
  \begin{cases}
    1, & x\in A,\\
    0, & x\notin A,
  \end{cases}
  \qquad \forall A\in\mathcal{B}.
\end{equation}
Equivalently, for any measurable function $\varphi:\cX\to\R$ that is integrable with respect to $\delta_x$,
\begin{equation}
  \int_{\cX} \varphi(u)\, d\delta_x(u) = \varphi(x).
\end{equation}
When $\cX\subseteq\R^d$, one may also view $\delta_x$ as the \emph{Dirac delta distribution} satisfying
\begin{equation}
  \int_{\R^d} \varphi(u)\, \delta(u-x)\, du = \varphi(x)
\end{equation}
for all test functions $\varphi\in C_c^\infty(\R^d)$.
\end{definition}

We further provide a key notation table in Table \ref{tab:notation}.
\begin{table*}[t]
\centering
\small
\begin{tabular}{p{0.18\textwidth} p{0.78\textwidth}}
\toprule
\textbf{Symbol} & \textbf{Meaning (in this paper)} \\
\midrule
$\cX$; $x\in\cX$; $X$ & Image space; an image; a random image variable. \\
$p\in\cP(\cX)$ & Natural (reference) image distribution. \\
$q\in\cP(\cX)$ & Induced/explanatory distribution (the object we solve for). \\
$s:\cX\to\R$ & \emph{Intrinsic score} (mechanism-derived scalar), e.g., neuron/SAE feature activation. \\
$f_c:\cX\to\R$ & \emph{Task-target score} associated with an external target $c$ (e.g., label log-likelihood). \\
$m\in\R$ & Faithfulness target level (used as a threshold in hard-constraints or as a moment level in soft constraints). \\
$\mathbf{E}_m:=\{x\in\cX: s(x)>m\}$ & Hard-constraint event set (activation exceeds threshold). \\
$q^{\mathrm{hard}}_m$ & Hard truncation solution, typically $p(\,\cdot\,\mid \mathbf{E}_m)$. \\
$q^{\mathrm{KLSC}}_\beta$ & KL-minimal Soft-constraint (KLSC) solution via exponential tilting: $q_\beta(x)\propto p(x)\exp(\beta s(x))$. \\
$Z(\beta)$ & Partition function $Z(\beta):=\E_p[\exp(\beta s(X))]$ (assumed finite where used). \\
$\beta$ & Lagrange multiplier / inverse temperature controlling the strength of tilting (soft constraint). \\
$\eta$ & Weight for task injection when combining intrinsic and task-target scores, e.g., $\exp(\beta s+\eta f_c)$. \\
$\KL(q\|p)$ & KL divergence used to quantify distributional deviation from $p$ (interpretability/naturalness proxy). \\
$\mathrm{TV}(q,p)$ & Total variation distance; used to quantify distributional gaps and instability. \\
$\mathbf{D}_n=\{x_i\}_{i=1}^n$ & Candidate pool / dataset drawn from $p$ (retrieval-style sampling). \\
$\delta_{x_i}$ & Dirac measure at $x_i$ (used for empirical distributions). \\
$\hat p_n := \frac1n\sum_{i=1}^n \delta_{x_i}$ & Empirical distribution supported on $\mathbf{D}_n$. \\
$\hat q_{n,\beta}:=\sum_{i=1}^n \omega_i(\beta)\delta_{x_i}$ & Soft reweighting on a finite pool under KLSC-style weights. \\
$\omega_i(\beta)\propto \exp(\beta s(x_i))$ & Normalized exponential weights on the candidate pool. \\
$\mathrm{BM}_{m,\delta}(q)$ & Boundary mass (probability mass within a $\delta$-shell above threshold $m$), used to quantify boundary bias. \\
$p_\theta$ & Generative prior (e.g., diffusion model) approximating the natural distribution $p$. \\
\bottomrule
\end{tabular}
\caption{Key notation used in the main text.}
\label{tab:notation}
\end{table*}


\section{The Detailed Formulation of SAE}
\label{appd:sae}
We first restate the sparse autoencoder (SAE) formulation, which serves as the base model throughout this paper.
Let $x\in\mathbb{R}^{B\times S\times d}$ denote a batch of intermediate representations (e.g., patch-/token-wise activations), where $B$ is the batch size, $S$ the number of spatial positions/tokens, and $d$ the channel dimension.
A standard SAE comprises an encoder, a sparsifying nonlinearity, and a shared dictionary decoder:
\begin{align}
  h &= x W_E + \mathbf{1} b_E^\top,\quad W_E\in\mathbb{R}^{d\times d_{\text{sae}}},\; b_E\in\mathbb{R}^{d_{\text{sae}}}, \\
  z &= \mathrm{Top}\text{-}k(h), \\
  \hat{x} &= z W_D + \mathbf{1} b_D^\top,\quad W_D\in\mathbb{R}^{d_{\text{sae}}\times d},\; b_D\in\mathbb{R}^{d}.
\end{align}
Here $W_D$ is a learned dictionary shared across positions, and $z\in\mathbb{R}^{B\times S\times d_{\text{sae}}}$ is the sparse code.

\paragraph{Top-$K$ sparsification.}
The operator $\mathrm{Top}\text{-}k(\cdot)$ is applied along the feature dimension for each position:
for each $(b,s)$, it retains the $k$ largest components of $h_{b,s,:}\in\mathbb{R}^{d_{\text{sae}}}$ (by value) and sets the remaining entries to zero.
Thus, each code vector $z_{b,s,:}$ is exactly $k$-sparse.

\paragraph{Training objective and the role of $\cL_s$.}
SAEs are trained to reconstruct a target representation $y$ (e.g., a hooked activation) while promoting sparsity and/or stability of the codes:
\begin{equation}
  \min_{\theta}\;\; \cL_{\mathrm{rec}}(x,\hat{x}) \;+\; \lambda_s \cL_s(z),
  \qquad
  \cL_{\mathrm{rec}}(x,\hat{x}) := \frac{1}{B}\sum_{b=1}^B\sum_{s=1}^S \big\|\hat{x}_{b,s,:}-y_{b,s,:}\big\|_2^2,
\end{equation}
where $\theta$ collects encoder/decoder parameters.
The sparsity regularizer $\cL_s$ is a \emph{code-level penalty} that biases the learned representation toward sparse and well-conditioned activations.
A common and publication-standard choice is an averaged $\ell_1$ penalty,
\begin{equation}
  \cL_s(z) \;:=\; \frac{1}{BS}\sum_{b=1}^B\sum_{s=1}^S \big\|z_{b,s,:}\big\|_1,
\end{equation}
or other monotone sparsity surrogates.
Importantly, under Top-$K$ sparsification the \emph{support size} is already fixed (exact $k$-sparsity per position); in this case $\cL_s$ primarily controls the \emph{magnitude distribution} of the active coefficients, improving numerical stability and discouraging degenerate scaling behaviors during training.

\subsection{A CNN-Architecture Transcoder as an SAE Variant}
\label{appd:sae:cnntranscoder}

We now introduce a CNN-architecture \emph{transcoder} as a structured \emph{variant} of the standard SAE.
Conceptually, a transcoder maps dense intermediate representations into sparse feature representations (and back) via reconstruction.
Crucially, in our setting the transcoder \emph{remains an SAE}: it preserves (i) a position-shared dictionary decoder and (ii) the same Top-$K$ sparsification mechanism, while replacing the position-wise linear encoder with a convolutional encoder to explicitly model local spatial context.

\paragraph{Spatial reshaping.}
For vision backbones that produce 2D activations, we index the $S$ positions by a spatial grid.
Assume $S=H\cdot W$ and reshape
\begin{equation}
  X=\mathrm{reshape}(x)\in\mathbb{R}^{B\times d\times H\times W},
  \qquad
  x=\mathrm{flatten}(X)\in\mathbb{R}^{B\times S\times d}.
\end{equation}

\paragraph{Convolutional encoding, shared dictionary decoding.}
The transcoder uses a 2D convolutional encoder to produce pre-activations over the grid:
\begin{equation}
  H = \mathrm{Conv2D}\!\left(X; W_{\mathrm{conv}}, b_{\mathrm{conv}}\right)
  \in\mathbb{R}^{B\times d_{\text{sae}}\times H\times W},
  \qquad
  W_{\mathrm{conv}}\in\mathbb{R}^{d_{\text{sae}}\times d\times K\times K}.
\end{equation}
Flattening yields $h\in\mathbb{R}^{B\times S\times d_{\text{sae}}}$ and Top-$K$ sparsification produces the sparse code:
\begin{equation}
  h=\mathrm{flatten}(H),\qquad z=\mathrm{Top}\text{-}k(h),\qquad z\in\mathbb{R}^{B\times S\times d_{\text{sae}}}.
\end{equation}
Decoding uses the \emph{same} position-shared linear dictionary as in the standard SAE:
\begin{equation}
  \hat{x} = z W_D + \mathbf{1} b_D^\top,\qquad \hat{x}\in\mathbb{R}^{B\times S\times d}.
\end{equation}
Therefore, the CNN-architecture transcoder implements
\begin{equation}
x \xrightarrow{\;\mathrm{reshape}\;} X \xrightarrow{\;\mathrm{Conv2D}\;} H \xrightarrow{\;\mathrm{Top}\text{-}k\;} z
\xrightarrow{\;\text{shared } W_D\;} \hat{x},
\end{equation}
where the encoder is convolutional (capturing local neighborhoods), while the decoder remains a global dictionary shared across spatial positions.

\paragraph{Relation to standard SAE.}
When $K=1$ (a $1\times 1$ convolution), the convolutional encoder reduces to a spatially shared linear map applied independently at each position, recovering the standard SAE encoder up to reshape/flatten operations.
Hence, the CNN-architecture transcoder strictly generalizes the standard SAE by injecting local spatial context into the encoding stage, without changing the dictionary-decoder interpretation of the learned features.

\paragraph{Training objective.}
The transcoder is trained under the same SAE objective,
\begin{equation}
  \min_{\theta}\;\; \cL_{\mathrm{rec}}(x,\hat{x}) \;+\; \lambda_s \cL_s(z),
\end{equation}
with $\theta=\{W_{\mathrm{conv}},b_{\mathrm{conv}},W_D,b_D\}$.
Thus, the transcoder viewpoint does not introduce a new learning principle; it provides a CNN-structured SAE parameterization aligned with 2D vision activations.

\section{Deferred Proof for Constraint Formulation in Section \ref{sec:constraint}}
\label{appd:proof_formulation}

\subsection{Proof for The Solution of Hard Constraints and KLSC Principle}

\begin{lemma}
    \label{lem:solve_hard}
    For the hard-constraint problem, the unique minimizer of Eq. \ref{eq:hard_kl_strong} is 
    \begin{equation}
        \label{eq:hard_solve}
        q^{\mathrm{hard}}_m(\cdot)=p(\cdot\mid {\bf E}_m)=\frac{p(\cdot)\mathbf{1}_{{\bf E}_m}}{p({\bf E}_m)}.
    \end{equation}
\end{lemma}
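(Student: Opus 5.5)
The plan is a direct computation of the KL divergence of any feasible $q$ relative to the candidate $q^{\mathrm{hard}}_m$, using the chain-rule (change-of-reference) identity for KL. First I check that $q^{\mathrm{hard}}_m$ is feasible. Since $p(\mathbf{E}_m)>0$, the density $p\mathbf{1}_{\mathbf{E}_m}/p(\mathbf{E}_m)$ is a well-defined probability density, it satisfies $q^{\mathrm{hard}}_m\ll p$, and it assigns mass one to $\mathbf{E}_m$. Its divergence is easy to compute, because the log-ratio is constant on $\mathbf{E}_m$:
\begin{equation}
\KL(q^{\mathrm{hard}}_m\|p)=-\log p(\mathbf{E}_m)<\infty .
\end{equation}
Hence the minimal value is finite.

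Next, take an arbitrary $q\in\mathbf{Q}^{\mathrm{hard}}_m$. If $\KL(q\|p)=+\infty$ there is nothing to prove, so assume it is finite. Because $q(\mathbf{E}_m)=1$, the density $q$ vanishes $\mu$-a.e. outside $\mathbf{E}_m$. On $\mathbf{E}_m$ we have $p=p(\mathbf{E}_m)\,q^{\mathrm{hard}}_m$. I then split the log-ratio on $\mathbf{E}_m$:
\begin{equation}
\log\frac{q}{p}=\log\frac{q}{q^{\mathrm{hard}}_m}-\log p(\mathbf{E}_m).
\end{equation}
Integrating against $q$, which lives on $\mathbf{E}_m$, gives the key identity
\begin{equation}
\KL(q\|p)=\KL(q\|q^{\mathrm{hard}}_m)-\log p(\mathbf{E}_m)=\KL(q\|q^{\mathrm{hard}}_m)+\KL(q^{\mathrm{hard}}_m\|p).
\end{equation}
Gibbs' inequality gives $\KL(q\|q^{\mathrm{hard}}_m)\ge 0$, so $q^{\mathrm{hard}}_m$ attains the minimum. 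Equality holds iff $\KL(q\|q^{\mathrm{hard}}_m)=0$, i.e., iff $q=q^{\mathrm{hard}}_m$ $\mu$-a.e. This gives uniqueness as a measure.

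The only delicate point is making the splitting step rigorous when integrals might be of the form $\infty-\infty$. The term $-\log p(\mathbf{E}_m)$ is a finite constant, and $q\ll q^{\mathrm{hard}}_m$ holds automatically, since $q\ll p$ and $q$ is supported on $\mathbf{E}_m$, where $q^{\mathrm{hard}}_m$ and $p$ are proportional. So the identity holds in the extended-real sense. In particular, $\KL(q\|p)$ is finite if and only if $\KL(q\|q^{\mathrm{hard}}_m)$ is finite, which justifies reducing to the finite case. I would state the convention $0\log 0=0$ explicitly for points of $\mathbf{E}_m$ where $q=0$.
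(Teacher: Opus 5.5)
Your proposal is correct and follows essentially the same route as the paper: split $\log(q/p)=\log(q/p_{\mathbf{E}})-\log p(\mathbf{E}_m)$ on $\mathbf{E}_m$ to get $\KL(q\|p)=\KL(q\|p_{\mathbf{E}})-\log p(\mathbf{E}_m)$, then apply Gibbs' inequality for both minimality and uniqueness. You also make explicit two points the paper leaves implicit, namely that $q^{\mathrm{hard}}_m$ is feasible and that the splitting is valid in the extended reals because $-\log p(\mathbf{E}_m)$ is a finite constant.
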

\begin{proof}[Proof of Lemma \ref{lem:solve_hard}]
    For any $q\in\cQ^{\mathrm{hard}}_m$, since $q$ is supported on ${\bf E}_m$, we can write
    \begin{equation}
        \KL(q\|p)=\int_{{\bf E}_m} q(x)\log\frac{q(x)}{p(x)}\,\dd x.
    \end{equation}
    Let $p_{\bf E}:=p(\cdot\mid {\bf E}_m)$ denote the conditional distribution. On ${\bf E}_m$ we have
    $\frac{p_{\bf E}}{p}=\frac{1}{p({\bf E}_m)}$ and thus
    \begin{equation}
        \log\frac{q(x)}{p(x)}=\log\frac{q(x)}{p_{\bf E}(x)}+\log\frac{p_{\bf E}(x)}{p(x)}
    =\log\frac{q(x)}{p_{\bf E}(x)}-\log p({\bf E}_m),\qquad x\in {\bf E}_m.
    \end{equation}
    Taking expectation under $q$ yields the decomposition
    \begin{equation}
        \KL(q\|p)=\KL(q\|p_{\bf E})\;-\;\log p({\bf E}_m).
    \end{equation}
    The second term is constant over $\cQ^{\mathrm{hard}}_m$; the first term satisfies
    $\KL(q\|p_{\bf E})\ge 0$ with equality iff $q=p_{\bf E}$.
    Hence the unique minimizer is $q^{\mathrm{hard}}_m=p(\cdot\mid {\bf E}_m)$.
\end{proof}

\begin{assumption}[Exponential integrability]
    \label{assump:expint}
    There exists $\beta>0$ such that $\log \E_p[\exp(\beta s(X))]<\infty$.
\end{assumption}
\begin{remark} [Remark for Assumption \ref{assump:expint}]
    This assumption is mild and holds for almost all practical neural network settings. 
    Since natural image data resides on a compact domain (bounded pixel values) and the intrinsic score $s$ is typically continuous (e.g., composed of finite weights and ReLU activations), the random variable $s(X)$ is bounded almost surely, implying that its moment-generating function exists for all $\beta\in\mathbb{R}$.
\end{remark}

\begin{lemma}
    [KLSC solution]
    \label{thm:tilt}
    Assume Assumption~\ref{assump:expint} and let $Z(\beta):=\E_p[\exp(\beta s(X))]$ (finite for $\beta$ in a neighborhood of $0$).
    Let $\mu_0:=\E_p[s(X)]$, which is finite under Assumption~\ref{assump:expint}.
    Then the optimization problem \eqref{eq:faithA} has a unique minimizer $q_\beta^\mathrm{KLSC}$ characterized as follows:
    \begin{itemize}
    \item If $m\le \mu_0$, the constraint is inactive and $q_\beta^\mathrm{KLSC}=p$ (equivalently $\beta=0$).
    \item If $m>\mu_0$ and there exists $\beta>0$ such that the \emph{tilted} distribution
    \begin{equation}
    q_{\beta}^\mathrm{KLSC}(x):=\frac{p(x)\exp(\beta s(x))}{Z(\beta)}
    \label{eq:tilt}
    \end{equation}
    satisfies $\E_{q^{\mathrm{KLSC}}_{\beta}}[s(X)]=m$.
    \end{itemize}
\end{lemma}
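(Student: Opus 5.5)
The argument rests on the Gibbs variational identity (the Donsker--Varadhan form of the KL divergence), which reduces the constrained problem to a single inequality. Fix $\beta\ge 0$ with $Z(\beta)<\infty$, and let $q_\beta:=p\,e^{\beta s}/Z(\beta)$. For any $q\ll p$ with $\KL(q\|p)<\infty$, we have $q\ll q_\beta$, because $q_\beta$ and $p$ are mutually absolutely continuous. Writing $\log\frac{q}{p}=\log\frac{q}{q_\beta}+\beta s-\log Z(\beta)$ and integrating against $q$ gives
\begin{equation*}
\KL(q\|p)=\KL(q\|q_\beta)+\beta\,\E_q[s(X)]-\log Z(\beta).
\end{equation*}
Before using this, I must check that $\E_q[s]$ is well defined, i.e.\ not of the form $\infty-\infty$. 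The positive part is handled by the entropy inequality $\E_q[\beta s^+]\le \KL(q\|p)+\log\E_p[e^{\beta s^+}]$, which is finite for small $\beta$ under Assumption~\ref{assump:expint}. If $\E_q[s]=-\infty$, then $q$ is infeasible anyway, so this case can be excluded. Everything below follows from the decomposition above.

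\emph{Case $m\le\mu_0$.} Here $p$ itself is feasible, since $\E_p[s]=\mu_0\ge m$, and it attains $\KL(p\|p)=0$. Any other feasible $q\neq p$ has $\KL(q\|p)>0$, so $p$ is the unique minimizer, which corresponds to $\beta=0$.

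\emph{Case $m>\mu_0$, with $\beta>0$ chosen so that $\E_{q_\beta}[s]=m$.} First, $q_\beta$ is feasible. Applying the identity with $q=q_\beta$ gives $\KL(q_\beta\|p)=\beta m-\log Z(\beta)$. For any feasible $q$ we have $\E_q[s]\ge m$ and $\beta>0$, so
\begin{equation*}
\KL(q\|p)\ge \KL(q\|q_\beta)+\beta m-\log Z(\beta)\ge \KL(q_\beta\|p).
\end{equation*}
Equality forces $\KL(q\|q_\beta)=0$, hence $q=q_\beta$ almost everywhere. This proves that $q_\beta$ minimizes and that the minimizer is unique. Uniqueness could also be argued from the strict convexity of $q\mapsto\KL(q\|p)$ over the convex feasible set, but the identity already gives it directly. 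The value of $\beta$ is itself unique, since $\beta\mapsto\E_{q_\beta}[s]=(\log Z)'(\beta)$ is strictly increasing whenever $s$ is not $p$-a.s.\ constant; if $s$ is a.s.\ constant, then $m>\mu_0$ would be infeasible.

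I expect the main obstacle to be the integrability bookkeeping rather than the core inequality. Specifically, I need to (i) justify that $\E_q[s]$ is well defined and that the identity holds without creating $\infty-\infty$ terms for every feasible $q$ with finite KL (feasible $q$ with infinite KL are trivially not better), and (ii) ensure that $Z(\beta)<\infty$ at the chosen $\beta$, so that $\log Z(\beta)$ and $\E_{q_\beta}[s]$ are finite. The latter is built into the hypothesis of the second bullet, where $\beta$ is assumed to exist with finite $Z(\beta)$. For (i), I would first use truncation $s\wedge M$ with monotone convergence, or simply cite the standard Donsker--Varadhan variational inequality $\beta\E_q[s]\le\KL(q\|p)+\log Z(\beta)$, and then derive the exact identity.
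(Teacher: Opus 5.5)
Your proof is correct and follows essentially the same route as the paper. The paper's key step is exactly your decomposition $\KL(q\|p)=\KL(q\|q_\beta)+\beta\,\E_q[s]-\log Z(\beta)$, written in density-ratio form $r=q/p$ and combined with $\E_q[s]\ge m$, $\beta\ge 0$ to obtain the lower bound $\beta m-\log Z(\beta)$, which $q_\beta$ attains. The differences are minor: you read uniqueness directly off the equality case $\KL(q\|q_\beta)=0$ where the paper invokes strict convexity, and you add the integrability bookkeeping for $\E_q[s]$ that the paper leaves implicit.
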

\begin{proof}[Proof of Lemma \ref{thm:tilt}]
    \label{proof:thm:tilt}
    We work with densities $r:=\frac{q}{p}$, so that $r\ge 0$, $\E_p[r]=1$ and
    \begin{equation}
    \KL(q\|p)=\int q(x)\log\!\left(\frac{q(x)}{p(x)}\right)\dd x=\E_p[r\log r].
    \end{equation}
    The constraint $\E_q[s(X)]\ge m$ becomes $\E_p[r\,s]\ge m$.
    Thus \eqref{eq:faithA} is equivalent to the convex program
    \begin{equation}
    \min_{r\ge 0}\ \E_p[r\log r]\quad \text{s.t.}\quad \E_p[r]=1,\ \E_p[r\,s]\ge m.
    \label{eq:faithA_r}
    \end{equation}
    
    \medskip\noindent\textbf{Case $m\le \mu_0$ (inactive constraint).}
    Recall $\mu_0=\E_p[s(X)]$. If $m\le \mu_0$, the choice $r\equiv 1$ (equivalently $q=p$) is feasible for \eqref{eq:faithA_r} since
    $\E_p[r\,s]=\E_p[s]=\mu_0\ge m$. Moreover, for any feasible $r$ we have
    \begin{equation}
    \E_p[r\log r]=\KL(q\|p)\ \ge\ 0,
    \end{equation}
    with equality iff $q=p$ (i.e., $r\equiv 1$) by Gibbs' inequality \cite{gibbs},.
    Hence $q_\beta^\mathrm{KLSC}=p$ is the unique minimizer when $m\le \mu_0$.
    In the remainder of the proof we assume $m>\mu_0$.
    
    For any $\beta\ge 0$ with $Z(\beta)=\E_p[e^{\beta s}]<\infty$ and any density $r$ with $\E_p[r]=1$,
    \begin{equation}
    \E_p[r\log r]\ \ge\ \beta\,\E_p[r\,s]\ -\ \log Z(\beta),
    \label{eq:gibbs_ineq}
    \end{equation}
    with equality iff $r(x)=\frac{e^{\beta s(x)}}{Z(\beta)}$ $p$-a.e.
    To prove \eqref{eq:gibbs_ineq}, define the density ratio
    \begin{equation}
    r_\beta(x):=\frac{e^{\beta s(x)}}{Z(\beta)}\qquad\text{and}\qquad q^\mathrm{KLSC}_\beta(x):=p(x)\,r_\beta(x).
    \end{equation}
    Since $r_\beta\ge 0$ and $\E_p[r_\beta]=1$, $q_\beta$ is a valid probability density. By \emph{Gibbs' inequality} (i.e., the non-negativity of KL divergence), we have
    \begin{equation}
    \KL(q\|q_\beta)\ \ge\ 0.
    \end{equation}
    Using the density-ratio representation,
    \begin{equation}
    \KL(q\|q^\mathrm{KLSC}_\beta)
    = \int q(x)\log\frac{q(x)}{q^\mathrm{KLSC}_\beta(x)}\,\dd x
    = \int p(x)\,r(x)\log\frac{r(x)}{r_\beta(x)}\,\dd x
    = \E_p\!\left[r\log\frac{r}{r_\beta}\right],
    \end{equation}
    where $r=\frac{q}{p}$. Expanding $\log r_\beta(x)=\beta s(x)-\log Z(\beta)$ yields
    \begin{equation}
    0\le \E_p\!\left[r\log\frac{r}{r_\beta}\right]
    = \E_p[r\log r]-\beta \E_p[r\,s]+\log Z(\beta),
    \end{equation}
    which rearranges to \eqref{eq:gibbs_ineq}. Equality holds iff $\KL(q\|q^\mathrm{KLSC}_\beta)=0$, i.e.\ $q=q^\mathrm{KLSC}_\beta$ (equivalently $r=r_\beta$) $p$-a.e.
    
    If $q$ is feasible for \eqref{eq:faithA}, then $\E_p[r\,s]=\E_q[s]\ge m$. Plugging into \eqref{eq:gibbs_ineq} gives
    \begin{equation}
    \KL(q\|p)=\E_p[r\log r]\ \ge\ \beta m-\log Z(\beta)\qquad \forall \beta\ge 0 \text{ with } Z(\beta)<\infty.
    \label{eq:lowerbound}
    \end{equation}
    Hence
    \begin{equation}
    \inf_{q\in\cQ_m}\KL(q\|p)\ \ge\ \sup_{\beta\ge 0}\Big\{\beta m-\log Z(\beta)\Big\}.
    \label{eq:dual_lower}
    \end{equation}
    
    For any $\beta\ge 0$ with $Z(\beta)<\infty$, define $q_\beta$ by \eqref{eq:tilt}.
    Then $\frac{q_\beta}{p}=r_\beta$, and by direct calculation
    \begin{equation}
    \KL(q_\beta\|p)=\E_{q_\beta}\!\left[\log\frac{q_\beta}{p}\right]
    =\E_{q_\beta}[\beta s]-\log Z(\beta)=\beta\,\E_{q_\beta}[s]-\log Z(\beta).
    \label{eq:kl_qbeta}
    \end{equation}
    Moreover, $m(\beta):=\E_{q_\beta}[s]$ is well-defined whenever $Z(\beta)<\infty$, and
    \begin{equation}
    m(\beta)=\frac{\dd}{\dd \beta}\log Z(\beta),
    \label{eq:mean_derivative}
    \end{equation}
    where differentiation under the integral is justified by Assumption~\ref{assump:expint} (local finiteness of the moment generating function implies finiteness in a neighborhood and permits dominated convergence for the derivative).
    Therefore, if the constraint is active and $m$ lies in the achievable range of $m(\beta)$, there exists $\beta^\star\ge 0$ such that $m(\beta^\star)=m$.
    For this $\beta^\star$, $q_{\beta^\star}$ is feasible, and by \eqref{eq:kl_qbeta} we have
    \begin{equation}
    \KL(q_{\beta^\star}\|p)=\beta^\star m-\log Z(\beta^\star).
    \end{equation}
    Combining with the lower bound \eqref{eq:lowerbound} (applied at $\beta=\beta^\star$) shows that $q_{\beta^\star}$ attains the infimum of \eqref{eq:faithA}.
    
    The functional $q\mapsto \KL(q\|p)$ is strictly convex on $\{q: q\ll p\}$, and the feasible set $\cQ_m$ is convex.
    Therefore the minimizer is unique, and must coincide with $q_{\beta^\star}$.
\end{proof}

\begin{remark}[Determining $\beta$ from $m$]
    \label{rem:beta_from_m}
    For $\beta\ge 0$ with $Z(\beta):=\E_p[\exp(\beta s(X))]<\infty$, define the exponential tilt
    $q_\beta(x)=\frac{p(x)\exp(\beta s(x))}{Z(\beta)}$ and the achieved mean score
    \begin{equation}
    \mu(\beta)\;:=\;\E_{q_\beta}[s(X)].
    \end{equation}
    Let $\Lambda(\beta):=\log Z(\beta)$. Then $\mu(\beta)=\Lambda'(\beta)$ and
    \begin{equation}
    \mu'(\beta)\;=\;\Lambda''(\beta)\;=\;\Var_{q_\beta}(s)\;\ge\;0,
    \end{equation}
    hence $\mu$ is non-decreasing in $\beta$ (and strictly increasing whenever $\Var_{q_\beta}(s)>0$).
    Consequently, for a target level $m$:
    (i) if $m\le \mu(0)=\E_p[s(X)]$, the constraint in \eqref{eq:faithA} is inactive and $\beta^\star=0$ with $q_m^\star=p$;
    (ii) if $m>\E_p[s(X)]$ and $m$ lies in the attainable range of $\mu$, then the unique optimizer is $q_m^\star=q_{\beta^\star}$
    where $\beta^\star$ is uniquely determined by the complementary-slackness identity $\mu(\beta^\star)=m$
    (and equivalently by the dual maximizer $\beta^\star\in\argmax_{\beta\ge 0}\{\beta m-\log Z(\beta)\}$).
    In practice, $\beta^\star$ can be found by solving $\mu(\beta)=m$ via one-dimensional root finding (e.g.\ bisection).
\end{remark}

\subsection{Proof for The Boundary Bias}
\begin{lemma} [Boundary Bias]
    \label{thm:boundary_bias}
    Let $X\sim p$ and define the scalar random variable $S:=s(X)$ with distribution function $F_S$ and survival function $\bar F_S(m):=\Prob(S>m)$.
    Assume $\bar F_S(m)>0$.
    For the hard-conditioned explanation $q^{\mathrm{hard}}_m=p(\cdot\mid S>m)$, the boundary mass satisfies
    \begin{equation}
    \begin{aligned}
        \mathrm{BM}_{m,\delta}(q^{\mathrm{hard}}_m)
    =& \Prob\big(m<S<m+\delta\mid S>m\big)\\
    =&
    \frac{F_S(m+\delta)-F_S(m)}{\bar F_S(m)}.
    \end{aligned}
    \label{eq:bm_ratio}
    \end{equation}
    If $S$ admits a density $f_S$ that is continuous at $m$, then as $\delta\rightarrow 0$,
    \begin{equation}
    \mathrm{BM}_{m,\delta}(q^{\mathrm{hard}}_m)
    =
    \frac{f_S(m)}{\bar F_S(m)}\,\delta + o(\delta),
    \label{eq:bm_hazard}
    \end{equation}
    where $h_S(m):=\frac{f_S(m)}{\bar F_S(m)}$ is the hazard rate of $S$.
\end{lemma}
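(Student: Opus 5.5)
The proof is a direct computation in two steps: an exact identity for the boundary mass, followed by a first-order expansion of the distribution function.

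\emph{Exact identity.} By Lemma~\ref{lem:solve_hard}, $q^{\mathrm{hard}}_m=p(\cdot\mid \mathbf{E}_m)$. Since $\mathbf{B}_{m,\delta}\subseteq\mathbf{E}_m$, Definition~\ref{def:boundary_mass} gives
\begin{equation}
\mathrm{BM}_{m,\delta}(q^{\mathrm{hard}}_m)=\frac{q^{\mathrm{hard}}_m(\mathbf{B}_{m,\delta})}{q^{\mathrm{hard}}_m(\mathbf{E}_m)}=\frac{p(\mathbf{B}_{m,\delta})/p(\mathbf{E}_m)}{1}=\Prob(m<S<m+\delta\mid S>m).
\end{equation}
Push the events forward through $S=s(X)$: $\mathbf{E}_m=\{S>m\}$ and $\mathbf{B}_{m,\delta}=\{m<S<m+\delta\}$. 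Then $\Prob(m<S<m+\delta)=F_S((m+\delta)^-)-F_S(m)$. To match the displayed formula exactly, I will note that $\Prob(S=m+\delta)=0$ whenever $S$ has no atom at $m+\delta$. In the density case this holds for every $\delta$. In general it holds for all but countably many $\delta$; alternatively, one can treat the statement as holding up to this convention on the closed versus open endpoint. Dividing by $\bar F_S(m)>0$ then gives \eqref{eq:bm_ratio}.

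\emph{Asymptotic expansion.} When $S$ has a density $f_S$, write $F_S(m+\delta)-F_S(m)=\int_m^{m+\delta}f_S(u)\,\dd u$. Continuity of $f_S$ at $m$ gives, for any $\varepsilon>0$, a $\delta_0$ such that $|f_S(u)-f_S(m)|<\varepsilon$ for $|u-m|<\delta_0$. Hence
\begin{equation}
\Big|\int_m^{m+\delta}f_S(u)\,\dd u-f_S(m)\delta\Big|\le\varepsilon\delta \quad\text{for } 0<\delta<\delta_0,
\end{equation}
that is, the increment equals $f_S(m)\delta+o(\delta)$. Dividing by the fixed positive constant $\bar F_S(m)$ preserves the $o(\delta)$ term and yields \eqref{eq:bm_hazard} with $h_S(m)=f_S(m)/\bar F_S(m)$.

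\emph{Main point of care.} The argument is essentially bookkeeping. The only delicate issues are the endpoint/atom subtlety in the exact identity and the need for continuity of $f_S$ at $m$, rather than mere existence of a density. Without continuity, the Lebesgue differentiation theorem would give the expansion only for almost every $m$; continuity is precisely what gives it at the specified $m$.
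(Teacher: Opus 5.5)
Your proposal is correct and follows the same route as the paper. You write $\mathrm{BM}_{m,\delta}(q^{\mathrm{hard}}_m)$ as $\Prob(m<S<m+\delta)/\Prob(S>m)$, express the numerator through $F_S$, and expand it to first order using continuity of $f_S$ at $m$. Your extra points are sound refinements of that same argument rather than a different approach: you note that identifying $\Prob(m<S<m+\delta)$ with $F_S(m+\delta)-F_S(m)$, which the paper does silently, requires no atom at $m+\delta$, and you give an explicit $\varepsilon$-argument for the $o(\delta)$ term.
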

\begin{proof}[Proof of Lemma \ref{thm:boundary_bias}]
    By definition,
    \begin{equation}
        \mathrm{BM}_{m,\delta}(q^{\mathrm{hard}}_m)
    =\Prob_{X\sim p}\big(m<s(X)<m+\delta\mid s(X)>m\big)
    =
    \frac{\Prob(m<S<m+\delta)}{\Prob(S>m)}.
    \end{equation}
    Since $\Prob(m<S<m+\delta)=F_S(m+\delta)-F_S(m)$, we obtain \eqref{eq:bm_ratio}.
    If $f_S$ is continuous at $m$, then $F_S(m+\delta)-F_S(m)=f_S(m)\delta+o(\delta)$ as $\delta\rightarrow 0$, yielding \eqref{eq:bm_hazard}.
\end{proof}


\begin{lemma}[Exponential tilting reduces tail hazard at a fixed threshold]
\label{lem:soft_lower_hazard}
Let $X\sim p$ and define the scalar score random variable $S:=s(X)$ with density $f_S$ and survival function $\bar F_S(m):=\Prob(S>m)$.
Fix any threshold $m\in\R$ with $\bar F_S(m)>0$ and any $\beta\ge 0$ such that $\E_p[\exp(\beta S)]<\infty$.
Let $q_{\beta}$ denote the exponentially tilted distribution
\begin{equation}
q_{\beta}(x):=\frac{p(x)\exp\!\big(\beta s(x)\big)}{Z(\beta)},\qquad Z(\beta):=\E_p[\exp(\beta S)],
\end{equation}
which is well-defined by the assumed finiteness of $Z(\beta)$.
Consider the tail-conditioned distribution $q^{\mathrm{KLSC}}_{\beta,m}:=q_\beta(\cdot\mid S>m)$.
Then the corresponding hazard rate at the boundary $m$ (for the induced distribution of $S$ under $q^{\mathrm{KLSC}}_{\beta,m}$) is
\begin{equation}
 h_{S,\beta}(m)
 \,=\,
 \frac{f_S(m)\,e^{\beta m}}{\int_m^{\infty} f_S(u)\,e^{\beta u}\,\dd u}.
\label{eq:soft_hazard}
\end{equation}
Moreover,
\begin{equation}
 h_{S,\beta}(m)\ \le\ h_S(m):=\frac{f_S(m)}{\bar F_S(m)}.
\label{eq:soft_hazard_le_hard}
\end{equation}
If $\beta>0$ and $\Prob(S>m)>0$ with $S$ non-degenerate on $(m,\infty)$, the inequality is strict.
\end{lemma}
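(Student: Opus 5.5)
The plan has two parts: first identify the law of $S$ under $q_\beta$ and its tail conditioning, which yields the hazard formula \eqref{eq:soft_hazard}; then compare the two hazards through a reweighting inequality.

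\textbf{Law of $S$ under $q_\beta$.} For any Borel $A\subseteq\R$,
\[
q_\beta(S\in A)=Z(\beta)^{-1}\E_p[e^{\beta S}\mathbf 1_A(S)]=Z(\beta)^{-1}\int_A f_S(u)e^{\beta u}\,\dd u ,
\]
by the change of variables formula for the pushforward of $p$ under $s$. Hence $S$ has density $g_\beta(u):=f_S(u)e^{\beta u}/Z(\beta)$ under $q_\beta$.

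\textbf{Tail conditioning and the hazard formula.} Conditioning on $\{S>m\}$ multiplies this density by $\mathbf 1_{(m,\infty)}$ and divides by $\int_m^\infty g_\beta$. The hazard rate at $m$, taken as the right-limit $\delta^{-1}\Prob(m<S<m+\delta\mid S>m)$ exactly as in Lemma~\ref{thm:boundary_bias}, is therefore $g_\beta(m)/\int_m^\infty g_\beta(u)\,\dd u$. The factor $Z(\beta)$ cancels, which gives \eqref{eq:soft_hazard}. Two points need checking: continuity of $f_S$ at $m$ (assumed as in Lemma~\ref{thm:boundary_bias}), and finiteness and positivity of the denominator. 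Finiteness holds because $\int_m^\infty f_S e^{\beta u}\le Z(\beta)<\infty$; positivity holds because $\bar F_S(m)>0$ and $e^{\beta u}>0$.

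\textbf{Comparison with $h_S(m)$.} For $u>m$ and $\beta\ge0$ we have $e^{\beta u}\ge e^{\beta m}$. Integrating against $f_S$ over $(m,\infty)$ gives
\[
\int_m^\infty f_S(u)e^{\beta u}\,\dd u\ \ge\ e^{\beta m}\,\bar F_S(m).
\]
Dividing $f_S(m)e^{\beta m}$ by both sides yields $h_{S,\beta}(m)\le f_S(m)/\bar F_S(m)=h_S(m)$, which is \eqref{eq:soft_hazard_le_hard}. (If $f_S(m)=0$, both sides are $0$ and the inequality is trivial.)

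\textbf{Strictness.} This is the only delicate step. When $\beta>0$, the inequality $e^{\beta u}>e^{\beta m}$ holds for every $u>m$. Since $\Prob(S>m)>0$, the set $\{u>m: f_S(u)>0\}$ has positive Lebesgue measure, so the integral inequality above is strict. The strict inequality then transfers to the hazards only when $f_S(m)>0$; if $f_S(m)=0$, both hazards vanish and equality holds. So I would state the strict claim under the implicit requirement $f_S(m)>0$, and interpret the lemma's ``non-degenerate'' hypothesis as supplying exactly this (mass on $(m,\infty)$ plus positive density at the boundary). Beyond this interpretive point, no real obstacle is expected.
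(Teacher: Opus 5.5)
Your proposal is correct and follows essentially the same route as the paper: you identify the law of $S$ under $q_\beta$ as proportional to $f_S(u)e^{\beta u}$, note that the normalizer cancels under tail conditioning, and bound the tail integral below by $e^{\beta m}\bar F_S(m)$. Your caveat that strictness also requires $f_S(m)>0$ (otherwise both hazards are $0$) is a valid refinement that the paper's proof leaves implicit, as is your remark that continuity of $f_S$ at $m$ is needed for the boundary hazard to be well defined.
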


\begin{proof}[Proof of Lemma \ref{lem:soft_lower_hazard}]
Under $q_\beta$, the induced (unnormalized) density of $S$ is proportional to $f_S(s)e^{\beta s}$.
Conditioning further on $S>m$ cancels the global normalizer, yielding the tail-normalization constant $\int_m^{\infty} f_S(u)e^{\beta u}\,\dd u$.
Hence \eqref{eq:soft_hazard} follows from the definition of the hazard rate.
For \eqref{eq:soft_hazard_le_hard}, note that for all $u\ge m$ we have $e^{\beta u}\ge e^{\beta m}$, and therefore
\begin{equation}
\int_m^{\infty} f_S(u)e^{\beta u}\,\dd u
\ \ge\ e^{\beta m}\int_m^{\infty} f_S(u)\,\dd u
\ =\ e^{\beta m}\,\bar F_S(m).
\end{equation}
Substituting into \eqref{eq:soft_hazard} yields $h_{S,\beta}(m)\le f_S(m)/\bar F_S(m)=h_S(m)$.
Strictness holds whenever $e^{\beta u}>e^{\beta m}$ on a set of positive $f_S$-measure in $(m,\infty)$.
\end{proof}

\subsection{Proof for the Threshold Instability}

\begin{lemma}[Hard thresholds are non-regular in KL]
    \label{prop:hard_kl_infty}
    Let $S:=s(X)$ for $X\sim p$, and define $q^{\mathrm{hard}}_{m}:=p(\cdot\mid S>m)$.
    For any $m_2>m_1$ such that $\Prob(m_1<S\le m_2)>0$, we have
    \begin{equation}
    \KL\!\left(q^{\mathrm{hard}}_{m_1}\,\big\|\,q^{\mathrm{hard}}_{m_2}\right)=+\infty.
    \label{eq:hard_kl_infty}
    \end{equation}
\end{lemma}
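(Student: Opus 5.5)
The plan is to show that $q^{\mathrm{hard}}_{m_1}$ puts positive mass on a set where $q^{\mathrm{hard}}_{m_2}$ puts none. This means $q^{\mathrm{hard}}_{m_1}\not\ll q^{\mathrm{hard}}_{m_2}$, and by the convention in the KL definition of Appendix~\ref{appd:preliminary}, the divergence is $+\infty$.

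Define the shell $A:=\{x\in\cX:\ m_1<s(x)\le m_2\}$. It is measurable because $s$ is measurable. First I would check that both truncated laws are well defined. We have $p({\bf E}_{m_1})\ge p(A)=\Prob(m_1<S\le m_2)>0$. For $q^{\mathrm{hard}}_{m_2}$ to make sense we also need $p({\bf E}_{m_2})>0$; this is the standing assumption of Definition~\ref{prop:hard_kl_strong}, and I would state it explicitly as implicit in the lemma.

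Using the closed form from Lemma~\ref{lem:solve_hard}, $q^{\mathrm{hard}}_{m}(x)=p(x)\mathbf 1_{{\bf E}_m}(x)/p({\bf E}_m)$, I would compute the two masses of $A$:
\begin{equation}
q^{\mathrm{hard}}_{m_1}(A)=\frac{p(A\cap{\bf E}_{m_1})}{p({\bf E}_{m_1})}=\frac{p(A)}{p({\bf E}_{m_1})}>0,
\qquad
q^{\mathrm{hard}}_{m_2}(A)=\frac{p(A\cap{\bf E}_{m_2})}{p({\bf E}_{m_2})}=0.
\end{equation}
The first uses $A\subseteq{\bf E}_{m_1}$, and the second uses $A\cap{\bf E}_{m_2}=\emptyset$ because $s\le m_2$ on $A$. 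So $q^{\mathrm{hard}}_{m_1}\not\ll q^{\mathrm{hard}}_{m_2}$.

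To finish at the density level: on $A$ the density of $q^{\mathrm{hard}}_{m_2}$ is identically $0$. Since $q^{\mathrm{hard}}_{m_1}(A)>0$, the set $\{x\in A: q^{\mathrm{hard}}_{m_1}(x)>0\}$ has positive $\mu$-measure. On that set the integrand $q\log(q/q')$ is $+\infty$, and on the complement the integrand's negative part is controlled by the usual bound $q\log(q/q')\ge q-q'$, which is integrable. Hence the integral is $+\infty$, which gives \eqref{eq:hard_kl_infty}.

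The main obstacle is only bookkeeping: making sure the divergence really is $+\infty$ rather than undefined. This follows from the $+\infty$ convention together with the lower bound on the negative part noted above.
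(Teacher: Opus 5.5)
Your proposal is correct and follows the paper's proof: the same shell $A=\{x: m_1<s(x)\le m_2\}$, the same computation $q^{\mathrm{hard}}_{m_1}(A)>0=q^{\mathrm{hard}}_{m_2}(A)$, and the same appeal to the $+\infty$ convention when absolute continuity fails. Your additional bookkeeping is sound and tightens points the paper leaves implicit: you flag that $p(\mathbf{E}_{m_2})>0$ is needed for $q^{\mathrm{hard}}_{m_2}$ to be defined, and you bound the negative part via $q\log(q/q')\ge q-q'$.
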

\begin{proof}[Proof of Lemma \ref{prop:hard_kl_infty}]
    Let $A:=\{x:\ m_1<s(x)\le m_2\}$. Under $q^{\mathrm{hard}}_{m_1}$ we have
    \begin{equation}
    q^{\mathrm{hard}}_{m_1}(A)=\Prob(m_1<S\le m_2\mid S>m_1)=\frac{\Prob(m_1<S\le m_2)}{\Prob(S>m_1)}>0.
    \end{equation}
    However, $q^{\mathrm{hard}}_{m_2}(A)=0$ since $q^{\mathrm{hard}}_{m_2}$ is supported on $\{S>m_2\}$.
    Therefore, KL divergence is $+\infty$ whenever the first distribution assigns positive mass to a set where the second assigns zero density/mass.
    Hence \eqref{eq:hard_kl_infty} holds.
\end{proof}

\begin{lemma}[Soft tilts are continuous in KL]
\label{prop:soft_kl_cont}
Assume $Z(\beta):=\E_p[\exp(\beta s(X))]$ is finite for all $\beta$ in an open interval $I\subset\R$.
For $\beta\in I$, define the tilted distribution $q_{\beta}(x):=\frac{p(x)\exp(\beta s(x))}{Z(\beta)}$ and the log-partition function $\psi(\beta):=\log Z(\beta)$.
Then for any $\beta,\beta'\in I$,
\begin{equation}
\KL\!\left(q_{\beta}\,\big\|\,q_{\beta'}\right)
=(\beta-\beta')\,\psi'(\beta)\; -\; \big(\psi(\beta)-\psi(\beta')\big).
\label{eq:kl_soft_closed}
\end{equation}
Moreover, there exists $\tilde\beta$ between $\beta$ and $\beta'$ such that
\begin{equation}
\KL\!\left(q_{\beta}\,\big\|\,q_{\beta'}\right)
=\frac{1}{2}\,\psi''(\tilde\beta)\,(\beta-\beta')^2
=\frac{1}{2}\,\Var_{q_{\tilde\beta}}\!\big(s(X)\big)\,(\beta-\beta')^2.
\label{eq:kl_soft_quadratic}
\end{equation}
In particular, $\KL(q_{\beta}\|q_{\beta'})\to 0$ as $\beta'\to\beta$ (KL-continuity).
\end{lemma}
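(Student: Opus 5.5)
We compute the KL divergence directly from the log-density ratio of the two tilts. For $\beta,\beta'\in I$, on the common support of $p$ we have
\begin{equation}
\log\frac{q_\beta(x)}{q_{\beta'}(x)}=(\beta-\beta')\,s(x)-\big(\psi(\beta)-\psi(\beta')\big),
\end{equation}
since the factor $p(x)$ cancels. Taking the expectation under $q_\beta$ gives
\begin{equation}
\KL(q_\beta\|q_{\beta'})=(\beta-\beta')\,\E_{q_\beta}[s(X)]-\big(\psi(\beta)-\psi(\beta')\big).
\end{equation}
We then invoke the identity $\E_{q_\beta}[s(X)]=\psi'(\beta)$ from \eqref{eq:mean_derivative} and Remark~\ref{rem:beta_from_m}. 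This yields the closed form \eqref{eq:kl_soft_closed}.

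The only technical point is justifying this identity, and the additional identity $\psi''(\beta)=\Var_{q_\beta}(s)$, on the whole open interval $I$ rather than just near $0$. The step we expect to need care is differentiating $Z$ under the integral sign. Fix a compact subinterval $[a,b]\subset I$ and choose $\varepsilon>0$ with $[a-\varepsilon,b+\varepsilon]\subset I$. For $\beta\in[a,b]$ and $k\in\{1,2\}$ we bound
\begin{equation}
|s|^k e^{\beta s}\le C_{k,\varepsilon}\big(e^{(a-\varepsilon)s}+e^{(b+\varepsilon)s}\big),
\end{equation}
which uses $|t|^k\le C_{k,\varepsilon}(e^{\varepsilon t}+e^{-\varepsilon t})$. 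The right-hand side is $p$-integrable because $Z$ is finite at $a-\varepsilon$ and at $b+\varepsilon$. Dominated convergence then shows that $Z$ is $C^2$ on $I$, with $Z'=\E_p[s e^{\beta s}]$ and $Z''=\E_p[s^2e^{\beta s}]$. Hence
\begin{equation}
\psi'(\beta)=\frac{Z'}{Z}=\E_{q_\beta}[s],\qquad \psi''(\beta)=\frac{Z''}{Z}-\Big(\frac{Z'}{Z}\Big)^2=\Var_{q_\beta}(s).
\end{equation}
The same domination shows that $\E_{q_\beta}[|s|]<\infty$, so the expectation of the log-ratio taken above is well defined and finite.

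Finally, we apply Taylor's theorem with Lagrange remainder to the $C^2$ function $\psi$ around $\beta$, evaluated at $\beta'$:
\begin{equation}
\psi(\beta')=\psi(\beta)+\psi'(\beta)(\beta'-\beta)+\tfrac12\psi''(\tilde\beta)(\beta'-\beta)^2
\end{equation}
for some $\tilde\beta$ between $\beta$ and $\beta'$. Substituting this into \eqref{eq:kl_soft_closed}, the linear terms cancel and we are left with $\tfrac12\psi''(\tilde\beta)(\beta-\beta')^2$. Using $\psi''(\tilde\beta)=\Var_{q_{\tilde\beta}}(s)$ gives \eqref{eq:kl_soft_quadratic}. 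For KL-continuity, restrict $\beta'$ to a compact neighbourhood of $\beta$ inside $I$. There $\psi''$ is continuous and hence bounded, so the KL divergence is $O((\beta-\beta')^2)$ and tends to $0$ as $\beta'\to\beta$.
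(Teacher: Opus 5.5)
Your proof is correct and follows the paper's route: the log-density ratio, the $q_\beta$-expectation, the identity $\psi'(\beta)=\E_{q_\beta}[s(X)]$, and Taylor's theorem with Lagrange remainder. The only addition is your domination bound $|s|^k e^{\beta s}\le C(e^{(a-\varepsilon)s}+e^{(b+\varepsilon)s})$, which justifies $\psi'=\E_{q_\beta}[s]$, $\psi''=\Var_{q_\beta}(s)$, and $\E_{q_\beta}[|s|]<\infty$ on all of $I$; the paper simply asserts these.
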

\begin{proof}[Proof of Lemma \ref{prop:soft_kl_cont}]
By definition,
\begin{equation}
\log\frac{q_{\beta}(x)}{q_{\beta'}(x)}
=(\beta-\beta')\,s(x)\; -\; \big(\psi(\beta)-\psi(\beta')\big).
\end{equation}
Taking expectation under $q_{\beta}$ yields
\begin{equation}
\KL(q_{\beta}\|q_{\beta'})
=(\beta-\beta')\,\E_{q_{\beta}}[s(X)]\; -\; \big(\psi(\beta)-\psi(\beta')\big).
\end{equation}
Since $\psi'(\beta)=\E_{q_{\beta}}[s(X)]$ on $I$, this gives \eqref{eq:kl_soft_closed}.
By Taylor's theorem applied to $\psi$ between $\beta$ and $\beta'$, there exists $\tilde\beta$ in between such that
\begin{equation}
\psi(\beta')=\psi(\beta)+\psi'(\beta)(\beta'-\beta)+\tfrac12\psi''(\tilde\beta)(\beta'-\beta)^2.
\end{equation}
Substituting into \eqref{eq:kl_soft_closed} yields \eqref{eq:kl_soft_quadratic}.
Finally, finiteness of $Z$ on the open interval $I$ ensures $\psi''(\tilde\beta)=\Var_{q_{\tilde\beta}}(s(X))$ is finite, implying KL-continuity.
\end{proof}

\subsection{Average TV Instability Across Faithfulness Levels}
\label{appd:avg_tv_instability}

This section provides a complementary stability analysis under TV distance. Unlike the KL analysis, which highlights the
support discontinuity of hard constraints, here we quantify how fast the induced distribution changes when the faithfulness level varies. We further average this local sensitivity over a range of matched faithfulness levels.

Assume that the score random variable $s(X)$ induced by $X\sim p$ admits a density $f$, and that the survival function $\bar F$ defined above satisfies
$\bar F(m)>0$ on the considered threshold range.

\subsubsection{Hard Constraints}
For a threshold $m$, define
\begin{equation}
    E_m:=\{x:s(x)>m\},
    \qquad
    q_m^{\mathrm{hard}}:=p(\cdot\mid E_m).
\end{equation}
The corresponding faithfulness level is
\begin{equation}
    \mu_{\mathrm{hard}}(m)
    :=
    \mathbb{E}_{q_m^{\mathrm{hard}}}[s(X)]
    =
    \mathbb{E}[s(X)\mid s(X)>m].
\end{equation}
We also define the hazard rate and the mean residual life:
\begin{equation}
    h(m):=\frac{f(m)}{\bar F(m)},
    \qquad
    e(m):=\mathbb{E}[s(X)-m\mid s(X)>m]
    =
    \mu_{\mathrm{hard}}(m)-m.
\end{equation}

\begin{lemma}[Exact TV distance for hard constraints]
\label{lem:hard_tv_exact}
For any $m_2>m_1$ with $\bar F(m_2)>0$,
\begin{equation}
    \mathrm{TV}\!\left(
    q_{m_1}^{\mathrm{hard}},
    q_{m_2}^{\mathrm{hard}}
    \right)
    =
    1-\frac{\bar F(m_2)}{\bar F(m_1)}.
\end{equation}
Consequently, if $f$ is continuous at $m$, then
\begin{equation}
    \mathrm{TV}\!\left(
    q_m^{\mathrm{hard}},
    q_{m+\delta}^{\mathrm{hard}}
    \right)
    =
    h(m)\delta+o(\delta),
    \qquad
    \delta\to 0^+.
\end{equation}
\end{lemma}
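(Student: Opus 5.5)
The plan is to compute the total variation distance directly from the explicit densities of the two truncated distributions. By Lemma~\ref{lem:solve_hard}, $q^{\mathrm{hard}}_{m}(x)=p(x)\mathbf 1_{E_m}(x)/\bar F(m)$. For $m_2>m_1$ we have $E_{m_2}\subseteq E_{m_1}$, and this nesting splits $\cX$ into three regions:
\begin{itemize}
\item on $E_{m_1}^c$, both densities vanish;
\item on $E_{m_1}\setminus E_{m_2}=\{m_1<s\le m_2\}$, only $q_{m_1}^{\mathrm{hard}}$ is positive, with density $p/\bar F(m_1)$;
\item on $E_{m_2}$, the two densities are $p/\bar F(m_1)$ and $p/\bar F(m_2)$, and the second is the larger since $\bar F(m_2)\le\bar F(m_1)$.
\end{itemize}

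Rather than integrate $|q_{m_1}-q_{m_2}|$ over each region, I will use the identity $\TV(q,q')=\int (q'-q)_+\,\dd\mu$. Here $(q_{m_2}^{\mathrm{hard}}-q_{m_1}^{\mathrm{hard}})_+$ is supported on $E_{m_2}$ and equals $p\,(1/\bar F(m_2)-1/\bar F(m_1))$ there. Integrating gives
\begin{equation}
\bar F(m_2)\Bigl(\frac{1}{\bar F(m_2)}-\frac{1}{\bar F(m_1)}\Bigr)=1-\frac{\bar F(m_2)}{\bar F(m_1)}.
\end{equation}
As a cross-check, the mass that $q_{m_1}^{\mathrm{hard}}$ places on the middle region is $(\bar F(m_1)-\bar F(m_2))/\bar F(m_1)$, which is the same number. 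Equivalently, the event $A=E_{m_2}$ attains the supremum in the definition of TV.

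For the asymptotic statement, set $m_1=m$ and $m_2=m+\delta$. The exact formula becomes $(F(m+\delta)-F(m))/\bar F(m)$, which is exactly the ratio already expanded in Lemma~\ref{thm:boundary_bias}. Continuity of $f$ at $m$ gives $F(m+\delta)-F(m)=f(m)\delta+o(\delta)$, and $\bar F(m)>0$ is fixed, so the result is $h(m)\delta+o(\delta)$.

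There is no real obstacle. The only points needing care are that $\bar F(m_2)>0$, so both conditionals are well defined, and stating the positive-part form of TV, $\frac12\int|q-q'|=\int(q'-q)_+$, which holds because both are probability densities.
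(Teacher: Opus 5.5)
Your proposal is correct and follows essentially the paper's route: the explicit truncated densities, the nesting $E_{m_2}\subseteq E_{m_1}$, and the first-order expansion of $\bar F$ at $m$. The only cosmetic difference is that you evaluate $\mathrm{TV}=\int (q'-q)_+\,\dd\mu$ on $E_{m_2}$ alone, whereas the paper takes half of the full $L^1$ integral over both the shell $\{m_1<s\le m_2\}$ and $E_{m_2}$; your ``cross-check'' is exactly the paper's other half.
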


\begin{proof}
The density of $q_m^{\mathrm{hard}}$ with respect to $p$ is
\begin{equation}
    \frac{d q_m^{\mathrm{hard}}}{dp}(x)
    =
    \frac{\mathbf{1}_{E_m}(x)}{\bar F(m)}.
\end{equation}
Since $E_{m_2}\subset E_{m_1}$, we have
\begin{equation}
\begin{aligned}
&\mathrm{TV}\!\left(q_{m_1}^{\mathrm{hard}},
q_{m_2}^{\mathrm{hard}}\right) \\
&=
\frac12
\int
\left|
\frac{\mathbf{1}_{E_{m_1}}}{\bar F(m_1)}
-
\frac{\mathbf{1}_{E_{m_2}}}{\bar F(m_2)}
\right|dp \\
&=
\frac12
\left[
\frac{\mathbb{P}(E_{m_1}\setminus E_{m_2})}{\bar F(m_1)}
+
\mathbb{P}(E_{m_2})
\left(
\frac{1}{\bar F(m_2)}
-
\frac{1}{\bar F(m_1)}
\right)
\right] \\
&=
\frac12
\left[
\frac{\bar F(m_1)-\bar F(m_2)}{\bar F(m_1)}
+
1-\frac{\bar F(m_2)}{\bar F(m_1)}
\right] \\
&=
1-\frac{\bar F(m_2)}{\bar F(m_1)}.
\end{aligned}
\end{equation}
Taking $m_2=m+\delta$ and using
\begin{equation}
    \bar F(m+\delta)=\bar F(m)-f(m)\delta+o(\delta)
\end{equation}
gives the local expansion.
\end{proof}

\begin{lemma}[Hard-constraint TV speed under matched faithfulness]
\label{lem:hard_tv_speed_faithfulness}
Assume $\mu_{\mathrm{hard}}(m)$ is differentiable and strictly increasing.
Then the local TV speed of the hard-constraint path with respect to the
faithfulness level $\mu$ is
\begin{equation}
    v_{\mathrm{hard}}^{(\mu)}(m)
    :=
    \lim_{\Delta \mu\to 0^+}
    \frac{
    \mathrm{TV}\!\left(
    q_{\mu}^{\mathrm{hard}},
    q_{\mu+\Delta \mu}^{\mathrm{hard}}
    \right)
    }{\Delta \mu}
    =
    \frac{1}{e(m)},
\end{equation}
where $q_{\mu}^{\mathrm{hard}}$ denotes $q_m^{\mathrm{hard}}$ with
$\mu=\mu_{\mathrm{hard}}(m)$.
\end{lemma}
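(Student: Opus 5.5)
The plan is to reparametrize the hard-constraint path by its faithfulness level $\mu$ and combine the exact TV expansion of Lemma~\ref{lem:hard_tv_exact} with the inverse function theorem.

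First, I compute $\mu_{\mathrm{hard}}'(m)$. Write
\[
\mu_{\mathrm{hard}}(m)=\frac{\int_m^\infty u f(u)\,\dd u}{\bar F(m)}.
\]
The quotient rule, together with $\bar F'(m)=-f(m)$ at continuity points of $f$, gives
\[
\mu_{\mathrm{hard}}'(m)=\frac{-m f(m)}{\bar F(m)}+\frac{f(m)\int_m^\infty u f(u)\,\dd u}{\bar F(m)^2}
=h(m)\big(\mu_{\mathrm{hard}}(m)-m\big)=h(m)\,e(m).
\]
Differentiability is assumed in the statement, so this identity holds wherever $f$ is continuous. I will assume $f$ is continuous at $m$, as in Lemma~\ref{lem:hard_tv_exact}.

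Second, I change variables from $\mu$ to $m$. Since $\mu_{\mathrm{hard}}$ is strictly increasing and differentiable, for small $\Delta\mu>0$ there is a unique $\delta=\delta(\Delta\mu)>0$ with
\[
\mu_{\mathrm{hard}}(m+\delta)=\mu_{\mathrm{hard}}(m)+\Delta\mu,
\]
and $\delta\to0^+$ as $\Delta\mu\to0^+$. This requires continuity of the inverse, which follows from strict monotonicity plus continuity.

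Then $q^{\mathrm{hard}}_{\mu+\Delta\mu}=q^{\mathrm{hard}}_{m+\delta}$, and by Lemma~\ref{lem:hard_tv_exact}
\[
\frac{\mathrm{TV}(q^{\mathrm{hard}}_m,q^{\mathrm{hard}}_{m+\delta})}{\Delta\mu}
=\frac{h(m)\delta+o(\delta)}{\delta}\cdot\frac{\delta}{\mu_{\mathrm{hard}}(m+\delta)-\mu_{\mathrm{hard}}(m)}.
\]
The first factor tends to $h(m)$. The second tends to $1/\mu_{\mathrm{hard}}'(m)=1/(h(m)e(m))$, provided $h(m)>0$, i.e.\ $f(m)>0$. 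The product is therefore $1/e(m)$.

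The main obstacle is the degenerate case $f(m)=0$, where $\mu_{\mathrm{hard}}'(m)=0$ and the ratio becomes $0/0$. In that case I would use the exact formula
\[
\mathrm{TV}=1-\frac{\bar F(m+\delta)}{\bar F(m)}=\frac{P(m<S\le m+\delta)}{\bar F(m)},
\]
and compare it directly with the increment of $\mu_{\mathrm{hard}}$. Writing $A_\delta=\{m<S\le m+\delta\}$ and $\bar F_\delta:=\bar F(m+\delta)$, one has
\[
\mu_{\mathrm{hard}}(m+\delta)-\mu_{\mathrm{hard}}(m)
=\frac{P(A_\delta)}{\bar F(m)}\Big(\mu_{\mathrm{hard}}(m+\delta)-\E[S\mid A_\delta]\Big).
\]
Dividing, the ratio of TV to $\Delta\mu$ is exactly $1/\big(\mu_{\mathrm{hard}}(m+\delta)-\E[S\mid A_\delta]\big)$. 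This tends to $1/(\mu_{\mathrm{hard}}(m)-m)=1/e(m)$ since $\E[S\mid A_\delta]\in(m,m+\delta]$.

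This identity-based route avoids the hazard rate entirely. It requires only $P(A_\delta)>0$, which follows from strict monotonicity, since otherwise $\mu_{\mathrm{hard}}$ would be constant on $[m,m+\delta]$. I would therefore present it as the main proof, using the derivative computation only as the heuristic explanation.
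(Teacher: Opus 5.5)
Your proof is correct. Your first route (derivative identity plus reparametrization) is the paper's argument: it writes $\mu_{\mathrm{hard}}=N/\bar F$ with $N(m)=\int_m^\infty u f(u)\,\mathrm{d}u$, obtains $\mu_{\mathrm{hard}}'(m)=h(m)e(m)$, takes $\mathrm{TV}(q^{\mathrm{hard}}_m,q^{\mathrm{hard}}_{m+\delta})=h(m)\delta+o(\delta)$ from Lemma~\ref{lem:hard_tv_exact}, and divides the two expansions.

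The identity-based route you plan to present as the main proof is genuinely different, and it proves more. The paper's division of first-order expansions needs $f$ continuous at $m$, and it also tacitly needs $h(m)>0$. The hypotheses allow $f(m)=0$, because a differentiable strictly increasing function can have zero derivative; an example is $f(u)\propto u^2$ on $[-1,1]$ at $m=0$. At such a point both expansions are $o(\delta)$, so the paper's argument gives nothing, even though the conclusion still holds. Your exact relation
\[
\frac{\mathrm{TV}\big(q^{\mathrm{hard}}_m,q^{\mathrm{hard}}_{m+\delta}\big)}{\mu_{\mathrm{hard}}(m+\delta)-\mu_{\mathrm{hard}}(m)}
=\frac{1}{\mu_{\mathrm{hard}}(m+\delta)-\E[S\mid A_\delta]}
\]
holds for every $\delta$ with $\Prob(A_\delta)>0$. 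Passing to the limit then uses only strict monotonicity and continuity of $\mu_{\mathrm{hard}}$, which is automatic once $S$ has a density. You need no hazard rate, no regularity or positivity of $f$ at $m$, and no differentiability. The relation also has a clear finite-$\delta$ meaning: the TV cost per unit of faithfulness is the reciprocal of the gap between the new tail mean and the mean of the discarded shell.

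What the paper's route gives in return is the explicit Jacobian $\mathrm{d}\mu=h(m)e(m)\,\mathrm{d}m$. The paper reuses it right away to rewrite the averaged hard-constraint instability as $\int h(m)\,\mathrm{d}m$, and again in the Gaussian example, where $\lambda'=\lambda e$. If you adopt the identity route as the proof, keep your derivative computation as a separate remark for that purpose.
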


\begin{proof}
Let
\begin{equation}
    N(m):=\int_m^\infty u f(u)\,du.
\end{equation}
Then
\begin{equation}
    \mu_{\mathrm{hard}}(m)=\frac{N(m)}{\bar F(m)}.
\end{equation}
Since $N'(m)=-mf(m)$ and $\bar F'(m)=-f(m)$,
\begin{equation}
\begin{aligned}
    \frac{d}{dm}\mu_{\mathrm{hard}}(m)
    &=
    \frac{N'(m)\bar F(m)-N(m)\bar F'(m)}
    {\bar F(m)^2}  \\
    &=
    \frac{-mf(m)\bar F(m)+N(m)f(m)}
    {\bar F(m)^2}  \\
    &=
    h(m)\left(\mu_{\mathrm{hard}}(m)-m\right) \\
    &=
    h(m)e(m).
\end{aligned}
\end{equation}
By Lemma~\ref{lem:hard_tv_exact},
\begin{equation}
    \mathrm{TV}\!\left(
    q_m^{\mathrm{hard}},
    q_{m+\delta}^{\mathrm{hard}}
    \right)
    =
    h(m)\delta+o(\delta).
\end{equation}
Meanwhile,
\begin{equation}
    \mu_{\mathrm{hard}}(m+\delta)-\mu_{\mathrm{hard}}(m)
    =
    h(m)e(m)\delta+o(\delta).
\end{equation}
Dividing the two expansions gives
\begin{equation}
    v_{\mathrm{hard}}^{(\mu)}(m)=\frac{1}{e(m)}.
\end{equation}
\end{proof}

\subsubsection{KLSC Constraints}
The KLSC induced distribution is
\begin{equation}
    q_\beta^{\mathrm{KLSC}}(x)
    =
    \frac{p(x)\exp(\beta s(x))}{Z(\beta)},
    \qquad
    Z(\beta):=\mathbb{E}_{p}[\exp(\beta s(X))].
\end{equation}
Let
\begin{equation}
    \psi(\beta):=\log Z(\beta),
    \qquad
    \mu_{\mathrm{KLSC}}(\beta)
    :=
    \mathbb{E}_{q_\beta^{\mathrm{KLSC}}}[s(X)]
    =
    \psi'(\beta).
\end{equation}
Assume that $Z(\beta)$ is finite in an open interval and that
\begin{equation}
    \mathrm{Var}_{q_\beta^{\mathrm{KLSC}}}(s(X))>0
\end{equation}
on the considered range.

\begin{lemma}[KLSC TV speed under matched faithfulness]
\label{lem:klsc_tv_speed_faithfulness}
The local TV speed of the KLSC path with respect to the faithfulness level
$\mu$ is
\begin{equation}
    v_{\mathrm{KLSC}}^{(\mu)}(\beta)
    :=
    \lim_{\Delta \mu\to 0}
    \frac{
    \mathrm{TV}\!\left(
    q_{\mu}^{\mathrm{KLSC}},
    q_{\mu+\Delta \mu}^{\mathrm{KLSC}}
    \right)
    }{|\Delta \mu|}
    =
    \frac{
    \mathbb{E}_{q_\beta^{\mathrm{KLSC}}}
    \left[
    |s(X)-\mu_{\mathrm{KLSC}}(\beta)|
    \right]
    }{
    2\mathrm{Var}_{q_\beta^{\mathrm{KLSC}}}(s(X))
    }.
\end{equation}
\end{lemma}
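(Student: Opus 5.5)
The plan mirrors the proof of Lemma~\ref{lem:hard_tv_speed_faithfulness}. First compute the TV speed of the KLSC path with respect to $\beta$, then convert to the faithfulness parametrization through $\mathrm{d}\mu_{\mathrm{KLSC}}/\mathrm{d}\beta$.

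\textbf{Step 1: the speed in $\beta$.} Write the density ratio
\begin{equation}
\frac{q_{\beta'}}{q_\beta}(x)=\exp\!\big((\beta'-\beta)s(x)-(\psi(\beta')-\psi(\beta))\big),
\end{equation}
so that
\begin{equation}
\mathrm{TV}(q_\beta,q_{\beta'})=\tfrac12\,\E_{q_\beta}\big|1-\tfrac{q_{\beta'}}{q_\beta}(X)\big|.
\end{equation}
Expand the exponent to first order with $\Delta\beta:=\beta'-\beta$ and $\psi'(\beta)=\mu_{\mathrm{KLSC}}(\beta)$:
\begin{equation}
\tfrac{q_{\beta'}}{q_\beta}(x)-1=\Delta\beta\,\big(s(x)-\mu_{\mathrm{KLSC}}(\beta)\big)+R_{\Delta\beta}(x).
\end{equation}
It then suffices to show
\begin{equation}
\frac{1}{|\Delta\beta|}\,\E_{q_\beta}\big|R_{\Delta\beta}(X)\big|\to 0 .
\end{equation}
Given that, $\mathrm{TV}(q_\beta,q_{\beta+\Delta\beta})=\tfrac12|\Delta\beta|\,\E_{q_\beta}|s-\mu_{\mathrm{KLSC}}(\beta)|+o(\Delta\beta)$.

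\textbf{Step 2: justifying the remainder (the main obstacle).} I expect this to be the main technical step. I would use the mean value bound $|e^{a}-1-a|\le \tfrac12 a^2 e^{|a|}$ with $a=\Delta\beta\,s(x)-(\psi(\beta')-\psi(\beta))$, together with the identity $\psi(\beta')-\psi(\beta)=\psi'(\beta)\Delta\beta+O(\Delta\beta^2)$. The bound then reduces to showing that
\begin{equation}
\E_{q_\beta}\big[(1+s^2)\,e^{|\Delta\beta||s|}\big]
\end{equation}
stays bounded for small $\Delta\beta$. This holds because $\beta$ lies in the open interval where $Z$ is finite, so $\E_p[e^{(\beta\pm\epsilon)s}]<\infty$, and polynomial factors are absorbed by a slightly larger exponential. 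Alternatively, dominated convergence applied to $|q_{\beta'}/q_\beta-1|/|\Delta\beta|$ with the same dominating function gives the result directly. Under the paper's boundedness remark (Assumption~\ref{assump:expint} and its remark), $s$ is bounded and the step is immediate.

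\textbf{Step 3: reparametrizing by $\mu$.} By Remark~\ref{rem:beta_from_m}, $\mu_{\mathrm{KLSC}}'(\beta)=\psi''(\beta)=\Var_{q_\beta}(s)>0$. So $\mu_{\mathrm{KLSC}}$ is a $C^1$ strictly increasing bijection locally. By the inverse function theorem, a level change $\Delta\mu$ corresponds to $\Delta\beta=\Delta\mu/\Var_{q_\beta}(s)+o(\Delta\mu)$, with $\Delta\beta\to0$ as $\Delta\mu\to0$ from either side. Substituting into Step 1 and dividing by $|\Delta\mu|$ yields
\begin{equation}
v^{(\mu)}_{\mathrm{KLSC}}(\beta)=\frac{\E_{q_\beta}|s-\mu_{\mathrm{KLSC}}(\beta)|}{2\Var_{q_\beta}(s)},
\end{equation}
which is the claimed formula. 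The two-sided limit is legitimate here, unlike in the hard case, because the path is smooth in $\beta$ on both sides.
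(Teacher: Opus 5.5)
Your proposal is correct and follows the paper's proof essentially step for step. Both expand the tilt density ratio to first order to get $\mathrm{TV}(q_\beta,q_{\beta+\varepsilon})=\tfrac{|\varepsilon|}{2}\E_{q_\beta}|s-\psi'(\beta)|+o(|\varepsilon|)$, then convert to the faithfulness parametrization via $\mu_{\mathrm{KLSC}}'(\beta)=\Var_{q_\beta}(s)$. Your Step 2 goes beyond the paper: the domination by $(1+s^2)e^{|\Delta\beta||s|}$, using that $\beta$ lies in the open finiteness interval, justifies passing the pointwise $o(\varepsilon)$ expansion through the expectation, which the paper does without comment.
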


\begin{proof}
The density ratio between two tilted distributions is
\begin{equation}
    \frac{dq_{\beta+\varepsilon}^{\mathrm{KLSC}}}
    {dq_\beta^{\mathrm{KLSC}}}(x)
    =
    \exp\left(
    \varepsilon s(x)
    -
    \psi(\beta+\varepsilon)
    +
    \psi(\beta)
    \right).
\end{equation}
Using
\begin{equation}
    \psi(\beta+\varepsilon)
    =
    \psi(\beta)+\varepsilon\psi'(\beta)+o(\varepsilon),
\end{equation}
we obtain
\begin{equation}
    \frac{dq_{\beta+\varepsilon}^{\mathrm{KLSC}}}
    {dq_\beta^{\mathrm{KLSC}}}(x)
    =
    1+
    \varepsilon
    \left(
    s(x)-\psi'(\beta)
    \right)
    +
    o(\varepsilon).
\end{equation}
Therefore,
\begin{equation}
\begin{aligned}
    \mathrm{TV}\!\left(
    q_\beta^{\mathrm{KLSC}},
    q_{\beta+\varepsilon}^{\mathrm{KLSC}}
    \right)
    &=
    \frac12
    \mathbb{E}_{q_\beta^{\mathrm{KLSC}}}
    \left[
    \left|
    \frac{dq_{\beta+\varepsilon}^{\mathrm{KLSC}}}
    {dq_\beta^{\mathrm{KLSC}}}
    -1
    \right|
    \right] \\
    &=
    \frac{|\varepsilon|}{2}
    \mathbb{E}_{q_\beta^{\mathrm{KLSC}}}
    \left[
    |s(X)-\psi'(\beta)|
    \right]
    +
    o(|\varepsilon|).
\end{aligned}
\end{equation}
Moreover,
\begin{equation}
    \frac{d}{d\beta}\mu_{\mathrm{KLSC}}(\beta)
    =
    \psi''(\beta)
    =
    \mathrm{Var}_{q_\beta^{\mathrm{KLSC}}}(s(X)).
\end{equation}
Thus,
\begin{equation}
    |\Delta \mu|
    =
    \mathrm{Var}_{q_\beta^{\mathrm{KLSC}}}(s(X))
    |\varepsilon|
    +
    o(|\varepsilon|).
\end{equation}
Dividing the local TV expansion by $|\Delta \mu|$ yields the claim.
\end{proof}

\subsubsection{Average TV Instability over Faithfulness Levels}
For any path $\{q_\mu\}_{\mu\in[\mu_-,\mu_+]}$ indexed by matched
faithfulness level $\mu$, define its average local TV instability as
\begin{equation}
    \overline v_{[\mu_-,\mu_+]}
    :=
    \frac{1}{\mu_+-\mu_-}
    \int_{\mu_-}^{\mu_+}
    v(\mu)\,d\mu,
\end{equation}
where
\begin{equation}
    v(\mu)
    :=
    \lim_{\Delta \mu\to 0}
    \frac{\mathrm{TV}(q_\mu,q_{\mu+\Delta \mu})}{|\Delta \mu|}.
\end{equation}
For the hard-constraint path, if
\begin{equation}
    \mu_-=\mu_{\mathrm{hard}}(m_-),
    \qquad
    \mu_+=\mu_{\mathrm{hard}}(m_+),
\end{equation}
then
\begin{equation}
\begin{aligned}
    \overline v_{\mathrm{hard}}
    &=
    \frac{1}{\mu_+-\mu_-}
    \int_{\mu_-}^{\mu_+}
    \frac{1}{e(m(\mu))}\,d\mu \\
    &=
    \frac{1}
    {\mu_{\mathrm{hard}}(m_+)-\mu_{\mathrm{hard}}(m_-)}
    \int_{m_-}^{m_+}
    h(m)\,dm,
\end{aligned}
\end{equation}
where we used $d\mu=h(m)e(m)\,dm$.

For the KLSC path, if
\begin{equation}
    \mu_-=\mu_{\mathrm{KLSC}}(\beta_-),
    \qquad
    \mu_+=\mu_{\mathrm{KLSC}}(\beta_+),
\end{equation}
then
\begin{equation}
\begin{aligned}
    \overline v_{\mathrm{KLSC}}
    &=
    \frac{1}{\mu_+-\mu_-}
    \int_{\mu_-}^{\mu_+}
    \frac{
    \mathbb{E}_{q_\beta^{\mathrm{KLSC}}}
    [
    |s(X)-\mu_{\mathrm{KLSC}}(\beta)|
    ]
    }{
    2\mathrm{Var}_{q_\beta^{\mathrm{KLSC}}}(s(X))
    }
    \,d\mu \\
    &=
    \frac{1}
    {\mu_{\mathrm{KLSC}}(\beta_+)-
    \mu_{\mathrm{KLSC}}(\beta_-)}
    \int_{\beta_-}^{\beta_+}
    \frac12
    \mathbb{E}_{q_\beta^{\mathrm{KLSC}}}
    [
    |s(X)-\mu_{\mathrm{KLSC}}(\beta)|
    ]
    \,d\beta.
\end{aligned}
\end{equation}
Therefore, hard-threshold instability is governed by the accumulated tail
hazard rate, whereas KLSC instability is governed by the internal score
fluctuation along a smooth exponential-family path.

In experiments, for a finite set of faithfulness levels
$\mu_1,\ldots,\mu_K$, we report the discrete average local TV instability
\begin{equation}
    \mathrm{MeanTV}_{\epsilon}
    :=
    \frac{1}{K}
    \sum_{k=1}^{K}
    \mathrm{TV}(q_{\mu_k},q_{\mu_k+\epsilon}),
\end{equation}
where each $q_{\mu_k}$ is calibrated to the same expected activation level
$\mu_k$. This metric measures the average distributional change caused by a
small increase in faithfulness.

\subsubsection{Gaussian Example}
We now instantiate the above quantities in a canonical one-dimensional
Gaussian setting. Assume
\begin{equation}
    s(X)\sim\mathcal N(0,1).
\end{equation}
Denote the standard normal density and survival function by
\begin{equation}
    \phi(m):=\frac{1}{\sqrt{2\pi}}e^{-m^2/2},
    \qquad
    \bar\Phi(m):=\int_m^\infty \phi(t)\,dt,
\end{equation}
and define the inverse Mills ratio
\begin{equation}
    \lambda(m):=\frac{\phi(m)}{\bar\Phi(m)}.
\end{equation}

For the hard-threshold distribution
\begin{equation}
    q_m^{\mathrm{hard}}
    =
    \mathcal L(X\mid s(X)>m),
\end{equation}
the faithfulness level is
\begin{equation}
    \mu_{\mathrm{hard}}(m)
    =
    \mathbb{E}[s(X)\mid s(X)>m]
    =
    \lambda(m),
\end{equation}
and the mean residual life is
\begin{equation}
    e(m)
    =
    \mu_{\mathrm{hard}}(m)-m
    =
    \lambda(m)-m.
\end{equation}
Thus,
\begin{equation}
    v_{\mathrm{hard}}^{(\mu)}(m)
    =
    \frac{1}{\lambda(m)-m}.
\end{equation}
The average TV instability over a threshold range $[m_-,m_+]$ is
\begin{equation}
    \overline v_{\mathrm{hard}}
    =
    \frac{
    \int_{m_-}^{m_+}\lambda(m)\,dm
    }{
    \lambda(m_+)-\lambda(m_-)
    }.
\end{equation}
Since
\begin{equation}
    \frac{d}{dm}\log \bar\Phi(m)
    =
    -\lambda(m),
\end{equation}
this can also be written as
\begin{equation}
    \overline v_{\mathrm{hard}}
    =
    \frac{
    \log \bar\Phi(m_-)-\log \bar\Phi(m_+)
    }{
    \lambda(m_+)-\lambda(m_-)
    }.
\end{equation}

For KLSC, we have
\begin{equation}
    q_\beta(x)
    \propto
    p(x)e^{\beta s(x)}.
\end{equation}
At the score-distribution level, this induces
\begin{equation}
    s(X)\sim \mathcal N(\beta,1)
    \quad \text{under } q_\beta^{\mathrm{KLSC}}.
\end{equation}
Hence
\begin{equation}
    \mu_{\mathrm{KLSC}}(\beta)=\beta,
    \qquad
    \mathrm{Var}_{q_\beta^{\mathrm{KLSC}}}(s(X))=1.
\end{equation}
Moreover,
\begin{equation}
    \mathbb{E}_{q_\beta^{\mathrm{KLSC}}}
    \left[
    |s(X)-\mu_{\mathrm{KLSC}}(\beta)|
    \right]
    =
    \mathbb{E}_{Z\sim\mathcal N(0,1)}[|Z|]
    =
    \sqrt{\frac{2}{\pi}}.
\end{equation}
Therefore,
\begin{equation}
    v_{\mathrm{KLSC}}^{(\mu)}
    =
    \frac{\sqrt{2/\pi}}{2}
    =
    \frac{1}{\sqrt{2\pi}},
\end{equation}
which is constant over all faithfulness levels. Consequently, for any
interval $[\mu_-,\mu_+]$,
\begin{equation}
    \overline v_{\mathrm{KLSC}}
    =
    \frac{1}{\sqrt{2\pi}}.
\end{equation}

\begin{proposition}[Gaussian hard thresholds are more TV-unstable]
\label{prop:gaussian_tv_instability}
For $s(X)\sim\mathcal N(0,1)$ and any threshold $m\ge 0$,
\begin{equation}
    v_{\mathrm{hard}}^{(\mu)}(m)
    >
    v_{\mathrm{KLSC}}^{(\mu)}
    =
    \frac{1}{\sqrt{2\pi}}.
\end{equation}
Consequently, over any interval $[m_-,m_+]$ with $m_-\ge 0$,
\begin{equation}
    \overline v_{\mathrm{hard}}
    >
    \overline v_{\mathrm{KLSC}}.
\end{equation}
\end{proposition}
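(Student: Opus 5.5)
The plan is to reduce the pointwise claim to the scalar inequality
\begin{equation}
    0<\lambda(m)-m<\sqrt{2\pi},\qquad m\ge 0,
\end{equation}
since the Gaussian example already gives $v_{\mathrm{hard}}^{(\mu)}(m)=1/(\lambda(m)-m)$ and $v_{\mathrm{KLSC}}^{(\mu)}=1/\sqrt{2\pi}$. Positivity of the mean residual life $e(m)=\lambda(m)-m=\E[s(X)-m\mid s(X)>m]$ is immediate, because $s(X)-m>0$ on the conditioning event, which has positive probability. For the upper bound, I will show that $e(m)$ is non-increasing on $[0,\infty)$, so that $e(m)\le e(0)=\lambda(0)=\phi(0)/\bar\Phi(0)=2/\sqrt{2\pi}=\sqrt{2/\pi}$. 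Since $\sqrt{2/\pi}<\sqrt{2\pi}$ (their ratio is $1/\pi$), this gives a strict inequality with room to spare.

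For the monotonicity I will use the identity $\lambda'(m)=\lambda(m)(\lambda(m)-m)$. It follows from $\phi'(m)=-m\phi(m)$ and $\bar\Phi'(m)=-\phi(m)$ by the quotient rule. Hence
\begin{equation}
    e'(m)=\lambda(m)\big(\lambda(m)-m\big)-1 .
\end{equation}
Next I will invoke the standard formula for the variance of a truncated standard normal: $\Var(s(X)\mid s(X)>m)=1+m\lambda(m)-\lambda(m)^2=1-\lambda(m)(\lambda(m)-m)$. To derive it, compute $\E[s^2\mid s>m]=1+m\lambda(m)$ by integrating $t^2\phi(t)$ by parts, then subtract $\lambda(m)^2$. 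The truncated law is non-degenerate, so this variance is strictly positive, and therefore $e'(m)<0$. This variance step is the only real obstacle: it is what makes the Mills-ratio bound $\lambda(m)(\lambda(m)-m)<1$ rigorous. As an alternative, it also follows from log-concavity of $\phi$, which implies a decreasing mean residual life.

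For the averaged statement, I will use the representation $\overline v_{\mathrm{hard}}=\frac{1}{\mu_+-\mu_-}\int_{\mu_-}^{\mu_+}v_{\mathrm{hard}}^{(\mu)}(m(\mu))\,d\mu$, assuming $m_+>m_-\ge 0$ so that $\mu_+>\mu_-$ (recall $d\mu=h(m)e(m)\,dm$ with $h,e>0$). The integrand is continuous and strictly greater than the constant $1/\sqrt{2\pi}$ on the whole interval. Its average is therefore strictly greater than $1/\sqrt{2\pi}$, which equals $\overline v_{\mathrm{KLSC}}$ by the Gaussian example, and this is the claim.
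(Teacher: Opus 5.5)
Your proposal is correct and follows the paper's proof essentially step for step: the identity $\lambda'(m)=\lambda(m)e(m)$ together with the truncated-normal variance $1-\lambda(m)e(m)>0$ gives $e'(m)<0$, hence $e(m)\le e(0)=\sqrt{2/\pi}$ and $v_{\mathrm{hard}}^{(\mu)}(m)\ge\sqrt{\pi/2}>1/\sqrt{2\pi}$, and the averaged claim then follows by integrating this pointwise bound. You also assume $m_+>m_-$ explicitly and justify why the averaged inequality stays strict, which the paper leaves implicit.
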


\begin{proof}
Recall that
\begin{equation}
    v_{\mathrm{hard}}^{(\mu)}(m)
    =
    \frac{1}{\lambda(m)-m}.
\end{equation}
Let
\begin{equation}
    e(m):=\lambda(m)-m.
\end{equation}
For the standard normal score distribution,
\begin{equation}
    \mathbb{E}[s(X)\mid s(X)>m]=\lambda(m),
\end{equation}
and
\begin{equation}
    \mathbb{E}[s(X)^2\mid s(X)>m]=1+m\lambda(m).
\end{equation}
Thus,
\begin{equation}
\begin{aligned}
    \mathrm{Var}(s(X)\mid s(X)>m)
    &=
    1+m\lambda(m)-\lambda(m)^2 \\
    &=
    1-\lambda(m)(\lambda(m)-m) \\
    &=
    1-\lambda(m)e(m).
\end{aligned}
\end{equation}
Since the conditional distribution is non-degenerate, this variance is
strictly positive. Hence
\begin{equation}
    \lambda(m)e(m)<1.
\end{equation}
Moreover,
\begin{equation}
    \lambda'(m)
    =
    \lambda(m)(\lambda(m)-m)
    =
    \lambda(m)e(m).
\end{equation}
Therefore,
\begin{equation}
    e'(m)
    =
    \lambda'(m)-1
    =
    -\mathrm{Var}(s(X)\mid s(X)>m)
    <0.
\end{equation}
Thus $e(m)$ is strictly decreasing. For $m\ge 0$,
\begin{equation}
    e(m)\le e(0)=\lambda(0)=\sqrt{\frac{2}{\pi}}.
\end{equation}
Therefore,
\begin{equation}
    v_{\mathrm{hard}}^{(\mu)}(m)
    =
    \frac{1}{e(m)}
    \ge
    \sqrt{\frac{\pi}{2}}
    >
    \frac{1}{\sqrt{2\pi}}
    =
    v_{\mathrm{KLSC}}^{(\mu)}.
\end{equation}
The average inequality follows by integrating the pointwise inequality over
the corresponding faithfulness interval.
\end{proof}

Finally, the inverse Mills ratio satisfies
\begin{equation}
    \lambda(m)
    =
    m+\frac{1}{m}+O(m^{-3}),
    \qquad
    m\to\infty.
\end{equation}
Thus,
\begin{equation}
    e(m)
    =
    \lambda(m)-m
    =
    \frac{1}{m}+O(m^{-3}),
\end{equation}
and therefore
\begin{equation}
    v_{\mathrm{hard}}^{(\mu)}(m)
    =
    \frac{1}{e(m)}
    =
    m+O(m^{-1})
    \to\infty.
\end{equation}
By contrast,
\begin{equation}
    v_{\mathrm{KLSC}}^{(\mu)}
    =
    \frac{1}{\sqrt{2\pi}}
\end{equation}
is constant. Hence, in the high-faithfulness regime, hard thresholds become
increasingly sensitive in TV distance, whereas the KLSC path remains
uniformly stable.

\subsection{Proof for The Interpretability Gap}
\begin{theorem}[Interpretability gap: soft dominates hard]\label{prop:soft_dominates_hard}
Fix any threshold $m\in\R$ with $p(E_m)>0$, where $E_m:=\{x\in\cX:\ s(x)>m\}$, and let
\begin{equation}
q_m^{\mathrm{hard}}(x):=p(x\mid E_m)=\frac{p(x){\bf 1}_{E_m}(x)}{p(E_m)}.
\end{equation}
Assume $\E_{q_m^{\mathrm{hard}}}[|s(X)|]<\infty$ and let $\mu:=\E_{q_m^{\mathrm{hard}}}[s(X)]$.
Let $q_\beta^{\mathrm{KLSC}}$ be a KLSC solution satisfying $\E_{q_\beta^{\mathrm{KLSC}}}[s(X)]=\mu$. Then
\begin{equation}
\KL\!\big(q_\beta^{\mathrm{KLSC}}\|p\big)\le \KL\!\big(q_m^{\mathrm{hard}}\|p\big).
\label{eq:soft_dom_hard}
\end{equation}
Moreover, equality holds in~\eqref{eq:soft_dom_hard} if and only if $q_\beta^{\mathrm{KLSC}}=q_m^{\mathrm{hard}}$ $p$-a.e.
\end{theorem}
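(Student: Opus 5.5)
The plan is to reduce the statement to the variational characterization in Lemma~\ref{thm:tilt}, namely that the tilted law is the KL-minimizer over the moment-constrained set $\cQ_\mu$. The key observation is that the hard-truncated distribution $q_m^{\mathrm{hard}}$ is itself a feasible point of that soft problem at level $\mu$.

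First, check feasibility. We have $q_m^{\mathrm{hard}}\ll p$, since its density is $p\mathbf 1_{E_m}/p(E_m)$. By the integrability assumption, $\E_{q_m^{\mathrm{hard}}}[s]=\mu$ is finite, so $q_m^{\mathrm{hard}}\in\cQ_\mu=\{q\ll p:\E_q[s]\ge\mu\}$. Moreover, $\KL(q_m^{\mathrm{hard}}\|p)=-\log p(E_m)<\infty$ (this is the decomposition in the proof of Lemma~\ref{lem:solve_hard} with $q=p_{\bf E}$).

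Second, avoid depending on the case split in Lemma~\ref{thm:tilt}. I would directly reuse the Gibbs-type inequality~\eqref{eq:gibbs_ineq} at the value $\beta\ge0$ of the given KLSC solution, for which $Z(\beta)<\infty$. For any $q\ll p$ with $\E_q[s]\ge\mu$ and $\beta\ge 0$,
\[
\KL(q\|p)=\KL(q\|q_\beta^{\mathrm{KLSC}})+\beta\,\E_q[s]-\log Z(\beta)\ \ge\ \KL(q\|q_\beta^{\mathrm{KLSC}})+\beta\mu-\log Z(\beta).
\]
By~\eqref{eq:kl_qbeta}, $\beta\mu-\log Z(\beta)=\KL(q_\beta^{\mathrm{KLSC}}\|p)$, because $\E_{q_\beta^{\mathrm{KLSC}}}[s]=\mu$. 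Applying this with $q=q_m^{\mathrm{hard}}$, where $\E_q[s]=\mu$ exactly, gives the identity
\[
\KL(q_m^{\mathrm{hard}}\|p)=\KL(q_\beta^{\mathrm{KLSC}}\|p)+\KL(q_m^{\mathrm{hard}}\|q_\beta^{\mathrm{KLSC}}).
\]
This is a Pythagorean identity, and~\eqref{eq:soft_dom_hard} follows from nonnegativity of KL.

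Third, the equality case. Equality holds iff $\KL(q_m^{\mathrm{hard}}\|q_\beta^{\mathrm{KLSC}})=0$, i.e.\ iff the two laws coincide $p$-a.e., by Gibbs' inequality. The converse direction is trivial.

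The main technical obstacle is justifying the decomposition rigorously, i.e.\ ensuring no $\infty-\infty$ appears. Every term is finite: $\KL(q_m^{\mathrm{hard}}\|p)$ is finite as shown above, $\E_{q_m^{\mathrm{hard}}}[|s|]<\infty$ by assumption, and $\log Z(\beta)$ is finite. The log-ratio
\[
\log\frac{q}{q_\beta}=\log\frac{q}{p}-\beta s+\log Z(\beta)
\]
is therefore $q$-integrable term by term, which I would verify explicitly. A minor point is $\beta=0$, where $q_\beta^{\mathrm{KLSC}}=p$ and the identity remains valid. Optionally, I would remark that equality essentially never occurs for $\beta>0$: the hard law vanishes on $E_m^c$ while the tilt is strictly positive wherever $p>0$, so equality requires $p(E_m)=1$.
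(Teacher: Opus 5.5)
Your proof is correct, but it follows a different route from the paper's.

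The paper uses only the variational definition. Since $q_m^{\mathrm{hard}}$ is feasible for $\cQ_\mu$, the inequality is immediate from $q_\beta^{\mathrm{KLSC}}$ being a minimizer over $\cQ_\mu$. The equality case is then read off from uniqueness of that minimizer, i.e.\ the strict-convexity claim that closes Lemma~\ref{thm:tilt}.

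You instead use the explicit tilted form to obtain the exact Pythagorean identity $\KL(q_m^{\mathrm{hard}}\|p)=\KL(q_\beta^{\mathrm{KLSC}}\|p)+\KL(q_m^{\mathrm{hard}}\|q_\beta^{\mathrm{KLSC}})$. This is the same decomposition the paper uses for Theorem~\ref{thm:prompt_bottleneck}. Both the inequality and the equality characterization then follow from Gibbs' inequality, with no appeal to uniqueness.

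Your integrability bookkeeping is sound:
\begin{itemize}
\item on $E_m$ the log-ratio $\log(q_m^{\mathrm{hard}}/p)$ is the constant $-\log p(E_m)$;
\item $s$ is $q_m^{\mathrm{hard}}$-integrable and $\log Z(\beta)<\infty$;
\item $q_m^{\mathrm{hard}}\ll q_\beta^{\mathrm{KLSC}}$ because the tilt is positive wherever $p$ is.
\end{itemize}

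What your route buys is a quantitative result. The gap equals $\KL(q_m^{\mathrm{hard}}\|q_\beta^{\mathrm{KLSC}})=-\log p(E_m)-\beta\mu+\log Z(\beta)$, rather than being merely nonnegative. Your closing remark also correctly pins the equality case down to the degenerate situation $p(E_m)=1$: there $q_m^{\mathrm{hard}}=p$, and the identity forces $q_\beta^{\mathrm{KLSC}}=p$. The paper never makes this explicit.

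The paper's route is shorter and works directly from the argmin definition of a KLSC solution, whereas yours presupposes the tilted form. Lemma~\ref{thm:tilt} identifies the two, so either reading of ``KLSC solution'' is covered.
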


\begin{proof}[Proof of Lemma \ref{prop:soft_dominates_hard}]
Let $\mu:=\E_{q_m^{\mathrm{hard}}}[s(X)]<\infty$.
Then $q_m^{\mathrm{hard}}\ll p$ and $\E_{q_m^{\mathrm{hard}}}[s(X)]=\mu$, hence $q_m^{\mathrm{hard}}$ is feasible for
\begin{equation}
\cQ_\mu:=\{q\in\cP(\cX):\ q\ll p,\ \E_q[s(X)]\ge \mu\}.
\end{equation}
By definition, any KLSC solution $q_\beta^{\mathrm{KLSC}}\in\argmin_{q\in\cQ_\mu}\KL(q\|p)$ satisfies
\begin{equation}
\KL\!\big(q_\beta^{\mathrm{KLSC}}\|p\big)\le \KL\!\big(q_m^{\mathrm{hard}}\|p\big),
\end{equation}
which proves~\eqref{eq:soft_dom_hard}.

For the equality condition, note that equality in~\eqref{eq:soft_dom_hard} holds if and only if
$q_m^{\mathrm{hard}}$ also attains the minimum of $\KL(\cdot\|p)$ over $\cQ_\mu$, i.e., iff
$q_m^{\mathrm{hard}}$ is itself a KLSC minimizer at level $\mu$.
Under the standing uniqueness condition for the KLSC optimizer (Lemma~\ref{thm:tilt} and Remark~\ref{rem:beta_from_m}),
this is equivalent to $q_m^{\mathrm{hard}}=q_\beta^{\mathrm{KLSC}}$ $p$-a.e.
Conversely, if $q_m^{\mathrm{hard}}=q_\beta^{\mathrm{KLSC}}$ $p$-a.e., then equality is immediate.
\end{proof}

\section{Deferred Proof for Object of Study in Section \ref{sec:object}}

\begin{lemma}[Task-injection problem solution]
\label{prop:task_soft_proof}
Assume that the feasible set of~\eqref{eq:task_soft_opt} is nonempty, an optimal solution exists,
and there exist $\beta,\eta\ge 0$ such that
\begin{equation}
Z(\beta,\eta,c):=\E_{X\sim p}\!\left[\exp\!\big(\beta s(X)+\eta f_c(X)\big)\right]<\infty .
\end{equation}
Then any optimal solution $q_{m,r,c}^\mathrm{task}$ of~\eqref{eq:task_soft_opt} is of the form
\begin{equation}
\label{eq:task_soft_sol}
q_{m,r,c}^\mathrm{task}(x)
=\frac{p(x)\exp\!\big(\beta^\star s(x)+\eta^\star f_c(x)\big)}{Z(\beta^\star,\eta^\star,c)},
\qquad \exists\,\beta^\star,\eta^\star\ge 0,
\end{equation}
where $(\beta^\star,\eta^\star)$ satisfy the KKT complementary-slackness conditions associated with the
constraints in~\eqref{eq:task_soft_opt}.
\end{lemma}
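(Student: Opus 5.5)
I will follow the proof of Lemma~\ref{thm:tilt}, now with two moment constraints. Writing $r:=q/p$, problem~\eqref{eq:task_soft_opt} becomes the convex program
\begin{equation*}
\min_{r\ge 0}\ \E_p[r\log r]\quad\text{s.t.}\quad \E_p[r]=1,\ \E_p[r s]\ge m,\ \E_p[r f_c]\ge r_0,
\end{equation*}
where I rename the task level to $r_0$ to avoid a clash with the density ratio. The plan has three steps: obtain a Gibbs-type lower bound for every admissible multiplier pair, show that the optimum attains this bound at some dual pair, and conclude that the optimizer is the corresponding tilt.

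\textbf{Step 1 (Gibbs lower bound).} For any $\beta,\eta\ge 0$ with $Z(\beta,\eta,c)<\infty$, define the two-parameter tilt
\begin{equation*}
q_{\beta,\eta}:=\frac{p\,e^{\beta s+\eta f_c}}{Z(\beta,\eta,c)}.
\end{equation*}
Expanding $\KL(q\|q_{\beta,\eta})\ge 0$ exactly as in \eqref{eq:gibbs_ineq} gives
\begin{equation*}
\KL(q\|p)\ge \beta\E_q[s]+\eta\E_q[f_c]-\log Z(\beta,\eta,c),
\end{equation*}
with equality iff $q=q_{\beta,\eta}$ $p$-a.e. For feasible $q$ this yields $\KL(q\|p)\ge \beta m+\eta r_0-\log Z$. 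Hence the primal value is at least the dual value
\begin{equation*}
\sup_{\beta,\eta\ge 0}\bigl\{\beta m+\eta r_0-\log Z(\beta,\eta,c)\bigr\}.
\end{equation*}

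\textbf{Step 2 (closing the gap).} This is the main obstacle: I must show the primal optimum $q^\star$ (assumed to exist) actually equals some $q_{\beta^\star,\eta^\star}$. My first approach is Lagrangian duality. The objective is convex and the constraints are linear in $r$. I will invoke a Slater-type or perturbation argument: the value function $V(m,r_0)$ is convex, and a subgradient $(\beta^\star,\eta^\star)\ge 0$ exists at $(m,r_0)$ whenever the point lies in the interior of the domain of $V$. That subgradient gives zero duality gap, together with complementary slackness $\beta^\star(\E_{q^\star}[s]-m)=0$ and $\eta^\star(\E_{q^\star}[f_c]-r_0)=0$. Plugging these into the chain of inequalities from Step 1, every inequality must hold with equality at $q^\star$, which forces $\KL(q^\star\|q_{\beta^\star,\eta^\star})=0$. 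I also need $Z(\beta^\star,\eta^\star,c)<\infty$; this comes from the dual value being finite, which holds because the primal optimum is finite.

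\textbf{Fallback for boundary cases.} If the interiority is delicate, I will instead argue directly by first-order optimality. Perturb $q^\star$ along feasible directions $q^\star+t(\nu-q^\star)$ and show that $\log(q^\star/p)$ lies in the closed span of $\{1,s,f_c\}$, with signs fixed by which constraints are active. I regard this as the step where the lemma's stated hypotheses (feasibility, existence of an optimizer, and finiteness of $Z$) are doing the real work, and I would state explicitly any constraint qualification I need.

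\textbf{Step 3 (conclusion).} With $q^\star=q_{\beta^\star,\eta^\star}$ established, nonnegativity of the multipliers comes from the inequality direction of the constraints, and the KKT conditions are exactly the complementary slackness relations recorded in Step 2. This gives~\eqref{eq:task_soft_sol}.
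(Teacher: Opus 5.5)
Your main line is correct, given the interiority you flag (see below), and it takes a different route from the paper.

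\textbf{The paper's route.} It fixes multipliers $\beta,\eta\ge 0$ and $\lambda$ and takes a formal first variation of the Lagrangian in arbitrary directions $\delta q$ with $\int\delta q=0$. This gives the pointwise condition $\log\frac{q}{p}+1-\beta s-\eta f_c+\lambda=0$. It then normalizes and simply asserts that KKT multipliers with complementary slackness exist. This is essentially your fallback, done formally.

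\textbf{Your route and what it buys.} You get weak duality from the Gibbs inequality (as in Lemma~\ref{thm:tilt} and Lemma~\ref{lem:gibbs_var}). You get strong duality from a subgradient of the convex, coordinatewise nondecreasing value function $V$. You then identify the optimizer through the equality case $\KL(q^\star\|q_{\beta^\star,\eta^\star})=0$. This is genuinely more rigorous: the multipliers are constructed rather than presupposed, and complementary slackness and uniqueness fall out of the chain of equalities. You also need neither differentiability of $\KL$ nor two-sided perturbations, which in the paper's computation silently require $q^\star>0$ $p$-a.e.

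\textbf{One precision.} $Z(\beta^\star,\eta^\star,c)<\infty$ does not follow from finiteness of the dual value alone. It follows from the subgradient inequality $\KL(q\|p)-\beta^\star\E_q[s]-\eta^\star\E_q[f_c]\ge V(m,r)-\beta^\star m-\eta^\star r$, valid for all $q$. Set $g:=\beta^\star s+\eta^\star f_c$ and test this on the truncated tilts $q_n\propto p\,e^{g}\mathbf{1}\{g\le n\}$. This bounds $\E_p[e^{g}\mathbf{1}\{g\le n\}]$ uniformly in $n$, and monotone convergence gives $Z<\infty$.

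\textbf{The real issue is the hypotheses, not the technique.} The interiority you invoke in Step 2 is exactly Slater's condition: some $q\ll p$ with $\KL(q\|p)<\infty$, $\E_q[s]>m$ and $\E_q[f_c]>r$. It is not implied by the lemma's assumptions; the $Z$-hypothesis is even vacuous, since $\beta=\eta=0$ gives $Z=1$. Your fallback cannot replace it, because the lemma is false without it. Take $s$ and $f_c$ bounded, $M:=\mathrm{ess\,sup}_p\,s$ with $0<p(s=M)<1$, $m=M$, and $r<\mathrm{ess\,inf}_p f_c$. Feasibility then forces $q(s=M)=1$, and the task constraint holds automatically. The decomposition in Lemma~\ref{lem:solve_hard} shows the unique optimizer is $q^\star=p(\cdot\mid s=M)$. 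This $q^\star$ vanishes on $\{s<M\}$, a set of positive $p$-measure. So $\log(q^\star/p)=-\infty$ there, which defeats your span argument, and $q^\star$ equals no finite tilt $p\,e^{\beta s+\eta f_c}/Z$.

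So add Slater's condition as an explicit hypothesis and drop the fallback. The paper's proof has the same hole, hidden in the sentence ``by KKT conditions for inequality constraints, at an optimum there exist $\beta^\star,\eta^\star\ge 0$''.
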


\begin{proof}[Proof of Lemma~\ref{prop:task_soft_proof}]
Let $q$ denote a candidate probability density (or Radon--Nikodym derivative) with $q\ll p$.
Consider the convex optimization problem~\eqref{eq:task_soft_opt} over the convex set $\{q\ll p\}$.
Introduce Lagrange multipliers $\beta,\eta\ge 0$ for the two inequality constraints, and let
$\lambda\in\R$ denote the multiplier for the normalization constraint $\int q(x)\,dx=1$.
The Lagrangian is
\begin{equation}
\mathcal L(q;\beta,\eta,\lambda)
=\KL(q\|p)-\beta(\E_q[s]-m)-\eta(\E_q[f_c]-r)+\lambda\Big(\int q(x)\,dx-1\Big).
\end{equation}
Writing $\KL(q\|p)=\int q(x)\log\frac{q(x)}{p(x)}\,dx$, a first-order optimality condition in the direction
of any perturbation $\delta q$ with $\int \delta q=0$ yields
\begin{equation}
\int \delta q(x)\Big(\log\tfrac{q(x)}{p(x)}+1-\beta s(x)-\eta f_c(x)+\lambda\Big)\,dx = 0.
\end{equation}
Since this holds for all such $\delta q$, we obtain (almost everywhere under $p$)
\begin{equation}
\log\tfrac{q(x)}{p(x)}+1-\beta s(x)-\eta f_c(x)+\lambda = 0,
\end{equation}
equivalently
\begin{equation}
q(x)=p(x)\exp\!\big(\beta s(x)+\eta f_c(x)-1-\lambda\big).
\end{equation}
Letting $Z(\beta,\eta,c)=\E_{X\sim p}[\exp(\beta s(X)+\eta f_c(X))]$ and choosing $\lambda$ to enforce
normalization gives
\begin{equation}
q(x)=\frac{p(x)\exp\!\big(\beta s(x)+\eta f_c(x)\big)}{Z(\beta,\eta,c)}.
\end{equation}
Finally, by KKT conditions for inequality constraints, at an optimum there exist
$\beta^\star,\eta^\star\ge 0$ such that
\begin{equation}
\beta^\star(\E_q[s]-m)=0,\qquad \eta^\star(\E_q[f_c]-r)=0,
\end{equation}
together with feasibility $\E_q[s]\ge m$ and $\E_q[f_c]\ge r$.
Substituting these multipliers yields~\eqref{eq:task_soft_sol}.
\end{proof}

\begin{lemma}[Gibbs variational principle]
\label{lem:gibbs_var}
Let $\nu$ be a probability measure on $\cX$ and let $g:\cX\to\R$ be measurable with $\E_\nu[e^{g}]<\infty$.
Then
\begin{equation}
\log \E_\nu[e^{g}]
=\sup_{q\ll \nu}\Big\{\E_q[g]-\KL(q\|\nu)\Big\},
\end{equation}
and the supremum is achieved uniquely by
$q^\star(dx)=\frac{e^{g(x)}}{\E_\nu[e^{g}]}\,\nu(dx)$.
\end{lemma}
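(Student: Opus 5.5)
The plan is to reduce the statement to the nonnegativity of a KL divergence against the Gibbs tilt, exactly as in the proof of Lemma~\ref{thm:tilt}. Set $Z:=\E_\nu[e^{g}]\in(0,\infty)$; positivity holds since $e^{g}>0$. Define the probability measure $q^\star(dx):=e^{g(x)}Z^{-1}\nu(dx)$. Then $q^\star\ll\nu$, and also $\nu\ll q^\star$ because the density $e^{g}/Z$ is strictly positive. Hence a measure $q$ satisfies $q\ll\nu$ if and only if $q\ll q^\star$, and the log-ratios satisfy $\log\frac{dq}{d\nu}=\log\frac{dq}{dq^\star}+g-\log Z$, $q$-a.e.

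The key identity comes from taking $q$-expectations of that decomposition:
\begin{equation}
\KL(q\|\nu)=\KL(q\|q^\star)+\E_q[g]-\log Z,
\end{equation}
which rearranges to $\E_q[g]-\KL(q\|\nu)=\log Z-\KL(q\|q^\star)$. Because $\KL(q\|q^\star)\ge 0$ (Gibbs' inequality), every admissible $q$ gives a value at most $\log Z$. Equality holds iff $\KL(q\|q^\star)=0$, i.e.\ $q=q^\star$, which yields both attainment and uniqueness. For $q=q^\star$ itself, the value follows by direct computation: $\KL(q^\star\|\nu)=\E_{q^\star}[g]-\log Z$, so the objective equals $\log Z$, mirroring \eqref{eq:kl_qbeta}.

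I expect the main obstacle to be the integrability bookkeeping, since the split of the expectation into $\KL(q\|q^\star)+\E_q[g]-\log Z$ is only legitimate when the terms are not of the form $\infty-\infty$. I would handle this as follows. If $\E_q[g^+]=\infty$, then I would first show $\KL(q\|\nu)=\infty$ by a Young-type inequality $ab\le a\log a-a+e^{b}$ applied with $a=dq/d\nu$ and $b=g^+$. That case must then be excluded or read with the convention that the objective is $-\infty$ or undefined; the cleanest route is to restrict the supremum to $q$ with $\KL(q\|\nu)<\infty$. For $q$ with $\KL(q\|\nu)<\infty$ and the same inequality bounding $\E_q[g^+]$, the expectation $\E_q[g]$ is well defined in $[-\infty,\infty)$, and the identity holds in the extended sense. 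The same inequality at $q=q^\star$ ensures $\E_{q^\star}[|g|]<\infty$ whenever $\KL(q^\star\|\nu)<\infty$. This holds because $\E_{q^\star}[g]\le \log\E_{q^\star}[e^{g}]$ fails only in degenerate cases, which I would check directly.
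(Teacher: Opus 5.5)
Your core argument is the paper's proof: the identity $\KL(q\|\nu)=\KL(q\|q^\star)+\E_q[g]-\log Z$ plus Gibbs' inequality, with equality iff $q=q^\star$. The paper does this manipulation with no integrability checks. Your added bookkeeping is also right as far as it goes. Young's inequality gives $\E_q[g^+]\le\KL(q\|\nu)-1+\E_\nu[e^{g^+}]$ with $\E_\nu[e^{g^+}]\le 1+Z$. So for every $q$ with $\KL(q\|\nu)<\infty$, $\E_q[g]\in[-\infty,\infty)$ is defined and $\E_q[g]-\KL(q\|\nu)=\log Z-\KL(q\|q^\star)\le\log Z$.

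The last step, where you dispose of $q^\star$ itself, fails. For $q^\star$ to be admissible under your finite-KL convention you need $\KL(q^\star\|\nu)=\E_\nu[g e^{g}]/Z-\log Z<\infty$, and this does not follow from $\E_\nu[e^{g}]<\infty$. Jensen's inequality $\E_{q^\star}[g]\le\log\E_{q^\star}[e^{g}]$ never fails. It is simply vacuous when $\E_{q^\star}[e^{g}]=\E_\nu[e^{2g}]/Z=\infty$, and the leftover ``degenerate cases'' contain genuine counterexamples that cannot be checked away.

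For instance, on $\cX=\mathbb{N}$ take $\nu(\{n\})=c\,e^{-n}n^{-2}$ and $g(n)=n$. Then $Z=c\pi^2/6<\infty$ and $q^\star(\{n\})=6/(\pi^2 n^2)$, but $\KL(q^\star\|\nu)=\sum_n q^\star(\{n\})(n-\log Z)=\infty$. Every admissible $q$ has value $\log Z-\KL(q\|q^\star)<\log Z$. The value $\log Z$ is only approached, by the truncations $q^\star(\cdot\mid g\le N)$, whose values are $\log\E_\nu[e^{g}\mathbf{1}_{\{g\le N\}}]$.

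So the attainment-and-uniqueness clause needs an extra hypothesis such as $\E_\nu[|g|e^{g}]<\infty$. This is equivalent to $\KL(q^\star\|\nu)<\infty$ and holds automatically for bounded $g$; the paper's proof tacitly assumes it too. The inequality $\sup\le\log Z$, which is all Theorem~\ref{prop_appd:delta_legendre} uses, needs no such hypothesis and is fully covered by your argument.
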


\begin{proof}
Define $q^\star(dx)=\frac{e^{g(x)}}{\E_\nu[e^{g}]}\nu(dx)$, which is well-defined by the integrability assumption.
For any $q\ll\nu$,
\begin{equation}
\KL(q\|q^\star)=\E_q\!\left[\log\frac{dq}{dq^\star}\right]
=\E_q\!\left[\log\frac{dq}{d\nu}-g+\log\E_\nu[e^{g}]\right]
=\KL(q\|\nu)-\E_q[g]+\log\E_\nu[e^{g}].
\end{equation}
Rearranging gives
\begin{equation}
\E_q[g]-\KL(q\|\nu)=\log\E_\nu[e^{g}]-\KL(q\|q^\star)\le \log\E_\nu[e^{g}],
\end{equation}
with equality iff $q=q^\star$.
This proves the variational formula and uniqueness.
\end{proof}

\begin{theorem}[lower bound for $\delta(m,r;c)$]
\label{prop_appd:delta_legendre}
Assume the intrinsic-only baseline $q_\beta^{\mathrm{KLSC}}$ exists (with $\E_{q_\beta^{\mathrm{KLSC}}}[s(X)]=m$)
and $\E_{q_\beta^{\mathrm{KLSC}}}[e^{\eta f_c(X)}]<\infty$ for all $\eta$ in a neighborhood of $0$.
Then
\begin{equation}
\delta(m,r;c)\ \ge\ \sup_{\eta\ge 0}\Big\{\eta r-\log \E_{q_\beta^{\mathrm{KLSC}}}[e^{\eta f_c(X)}]\Big\}.
\end{equation}
\end{theorem}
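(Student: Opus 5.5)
The plan is to derive the bound from the Gibbs variational principle (Lemma~\ref{lem:gibbs_var}) with reference measure $\nu:=q_\beta^{\mathrm{KLSC}}$ and test function $g:=\eta f_c$ for each fixed $\eta\ge 0$. This is the standard Donsker--Varadhan/Chernoff route: each feasible $q$ for~\eqref{eq:delta_def} will be shown to pay at least $\eta r-\log\E_\nu[e^{\eta f_c}]$ in KL, and we then optimize over $\eta$.

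\emph{Step 1 (finite $\eta$).} Fix $\eta\ge 0$ with $\E_\nu[e^{\eta f_c}]<\infty$. Lemma~\ref{lem:gibbs_var} gives, for every $q\ll\nu$,
\begin{equation}
\KL(q\|\nu)\ \ge\ \eta\,\E_q[f_c(X)]-\log\E_\nu\!\left[e^{\eta f_c(X)}\right].
\end{equation}
One caveat: the lemma implicitly requires $\E_q[g]$ to be well defined. If $\E_q[f_c]$ is not well defined or is $+\infty$, I will fall back on the identity $\KL(q\|\nu)=\KL(q\|q^\star)+\E_q[g]-\log\E_\nu[e^g]$ from the proof of Lemma~\ref{lem:gibbs_var}. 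That identity is applied to the positive and negative parts of $f_c$, or else via truncation $f_c\wedge K$ followed by monotone convergence as $K\to\infty$. For $\eta=0$ the bound is trivial.

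\emph{Step 2 (use feasibility).} If $q$ is feasible in~\eqref{eq:delta_def}, then $\E_q[f_c]\ge r$. Since $\eta\ge 0$, this yields $\KL(q\|\nu)\ge \eta r-\log\E_\nu[e^{\eta f_c}]$. The intrinsic constraint $\E_q[s]=m$ is simply discarded, which only enlarges the feasible set. Taking the infimum over feasible $q$ gives $\delta(m,r;c)\ge \eta r-\log\E_\nu[e^{\eta f_c}]$ for every admissible $\eta\ge 0$.

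\emph{Step 3 (supremum over $\eta$).} For $\eta\ge 0$ with $\E_\nu[e^{\eta f_c}]=+\infty$, the bracket equals $-\infty$ and these values do not affect the supremum. Taking $\sup_{\eta\ge0}$ therefore gives the claim.

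The only delicate point is the integrability bookkeeping in Step 1, namely justifying the variational inequality when $\E_q[f_c]$ may be infinite. The truncation argument should handle this. I will also remark on positivity: the right-hand side is strictly positive once $r>\E_\nu[f_c]$. This follows because the function $\eta\mapsto \eta r-\log\E_\nu[e^{\eta f_c}]$ vanishes at $0$ and has derivative $r-\E_\nu[f_c]>0$ there, by the local finiteness of the moment generating function assumed in the statement.
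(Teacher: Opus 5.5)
Your proposal is correct and follows the paper's proof: apply Lemma~\ref{lem:gibbs_var} with $\nu=q_\beta^{\mathrm{KLSC}}$ and $g=\eta f_c$, drop the intrinsic constraint $\E_q[s(X)]=m$, use $\E_q[f_c(X)]\ge r$ with $\eta\ge 0$, take the infimum over feasible $q$, and then the supremum over $\eta\ge 0$. Your extra integrability bookkeeping is sound and is left implicit in the paper; this covers the truncation argument, the treatment of $\eta$ with infinite moment generating function (bracket equal to $-\infty$), and the strict-positivity remark for $r>\E_{q_\beta^{\mathrm{KLSC}}}[f_c(X)]$.
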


\begin{proof}
Fix any feasible $q\ll q_\beta^{\mathrm{KLSC}}$ such that $\E_q[s(X)]=m$ and $\E_q[f_c(X)]\ge r$.
For any $\eta\ge 0$, apply Lemma~\ref{lem:gibbs_var} with $\nu=q_\beta^{\mathrm{KLSC}}$ and $g=\eta f_c$.
Then
\begin{equation}
\KL(q\|q_\beta^{\mathrm{KLSC}})
\ \ge\ \eta\,\E_q[f_c(X)]-\log\E_{q_\beta^{\mathrm{KLSC}}}[e^{\eta f_c(X)}]
\ \ge\ \eta r-\log\E_{q_\beta^{\mathrm{KLSC}}}[e^{\eta f_c(X)}].
\end{equation}
Taking the infimum over all feasible $q$ yields
\begin{equation}
\delta(m,r;c)\ge \eta r-\log\E_{q_\beta^{\mathrm{KLSC}}}[e^{\eta f_c(X)}].
\end{equation}
Finally, taking the supremum over $\eta\ge 0$ completes the proof.
\end{proof}

\section{Deferred Proof for Sampling Methodology in Section \ref{sec:sampler}}
\label{appd:sampler}
\subsection{Proof for The Retrieval Methods}
\begin{lemma}[$\beta\to\infty$ recovers top selection on the finite set]
\label{prop:beta_to_inf}
Let $S_{\max}:=\{i:\ s(x_i)=\max_{1\le j\le n} s(x_j)\}$.
Then, as $\beta\to\infty$, the weights in \eqref{eq:qhat_beta} satisfy
\begin{equation}
w_i(\beta)\ \to\
\begin{cases}
1/|S_{\max}|, & i\in S_{\max},\\
0, & i\notin S_{\max}.
\end{cases}
\end{equation}
Consequently, $\hat q_{n,\beta}$ converges to the uniform distribution over the maximizers of $s$ within the finite set.
\end{lemma}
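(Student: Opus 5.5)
The idea is to factor out the largest exponential, so that the limit can be read off term by term. Write $s^\star:=\max_{1\le j\le n}s(x_j)$; it is attained because $\mathbf{D}_n$ is finite, so $S_{\max}\neq\emptyset$. Multiplying the numerator and denominator of $\omega_i(\beta)$ in \eqref{eq:qhat_beta} by $e^{-\beta s^\star}$ gives
\begin{equation*}
\omega_i(\beta)=\frac{\exp\big(\beta(s(x_i)-s^\star)\big)}{\sum_{j=1}^n\exp\big(\beta(s(x_j)-s^\star)\big)}.
\end{equation*}

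Next, split the index set into $S_{\max}$ and its complement. For $j\in S_{\max}$ the exponent is $0$, so each such term equals $1$ for every $\beta$. For $j\notin S_{\max}$ we have $s(x_j)-s^\star<0$, so $\exp(\beta(s(x_j)-s^\star))\to0$ as $\beta\to\infty$. Since the sum has finitely many terms, the denominator converges to $|S_{\max}|\ge1$. The numerator converges to $1$ if $i\in S_{\max}$ and to $0$ otherwise. The limit of the quotient is therefore $1/|S_{\max}|$ or $0$, which is the claimed formula.

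For the ``consequently'' part, observe that $\hat q_{n,\beta}=\sum_i\omega_i(\beta)\delta_{x_i}$ is a finite mixture of fixed Dirac measures. For any $A\in\cF$ we have $\hat q_{n,\beta}(A)=\sum_i\omega_i(\beta)\mathbf 1_A(x_i)$, which converges to $\frac{1}{|S_{\max}|}\sum_{i\in S_{\max}}\mathbf 1_A(x_i)$. Because all measures involved live on the same finite set of atoms, this setwise convergence is in fact convergence in total variation. The limit $\frac1{|S_{\max}|}\sum_{i\in S_{\max}}\delta_{x_i}$ is the uniform distribution over the maximizers, provided the $x_i$ are distinct. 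If some points coincide, the limit is uniform over indices in $S_{\max}$ rather than over distinct points; I would state it that way to be safe.

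I do not expect a genuine obstacle here. The only care needed is the finiteness of $n$, which justifies exchanging the limit with the sum and guarantees that the maximum is attained, and the multiplicity caveat just noted.
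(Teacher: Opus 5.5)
Your proof is correct and follows the paper's route. Like the paper, you normalize by $e^{-\beta s^\star}$, note that the $S_{\max}$ terms are identically $1$ while the rest decay to $0$, and pass to the limit in the finite sum; your extra justification of the ``consequently'' clause (setwise, hence total-variation, convergence of the finite atomic mixture) and your caveat that repeated points give a limit uniform over indices in $S_{\max}$ rather than over distinct points are small refinements the paper leaves implicit.
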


\begin{proof}
Let $s_{\max}:=\max_j s(x_j)$. For any $i$,
\begin{equation}
w_i(\beta)=\frac{\exp(\beta(s_i-s_{\max}))}{\sum_{j=1}^n \exp(\beta(s_j-s_{\max}))}.
\end{equation}
If $i\notin S_{\max}$ then $s_i-s_{\max}<0$, so $\exp(\beta(s_i-s_{\max}))\to 0$ as $\beta\to\infty$.
If $i\in S_{\max}$ then $\exp(\beta(s_i-s_{\max}))=1$.
Hence the denominator tends to $|S_{\max}|$ and the stated limit follows.
\end{proof}
The above lemma shows that empirical tilting becomes an extreme selection rule as $\beta$ grows:
it approaches a top-sample interpretation on the observed finite set.
Thus, while exponential reweighting avoids an explicit hard threshold, in finite samples it can still exhibit a sharp ``winner-takes-most'' behavior, which is a discrete analogue of boundary/selection bias.

\subsection{Proof for The Optimization with Regularization Methods}


\begin{lemma} [Optimization with regularization equals MAP of a Gibbs distribution]
    \label{prop:map_gibbs_strong}
    Fix a reference measure $\mu$ on $\cX$.
    Let $u:\cX\to\mathbb{R}$ be a score and $\mathcal{R}:\cX\to\mathbb{R}$ be a regularizer.
    Consider the optimization problem
    \begin{equation}
    x^\star \in \argmax_{x\in\cX}\ \big(u(x)-\lambda \mathcal{R}(x)\big).
    \label{eq:opt_reg_strong}
    \end{equation}
    Define an (unnormalized) Gibbs density w.r.t.\ $\mu$:
    \begin{equation}
    \pi(x)\ \propto\ \exp\!\big(u(x)-\lambda \mathcal{R}(x)\big).
    \label{eq:gibbs_strong}
    \end{equation}
    Then any maximizer $x^\star$ of \eqref{eq:opt_reg_strong} is a MAP point (mode) of $\pi$,
    i.e.\ $x^\star\in\argmax_x \pi(x)$.
\end{lemma}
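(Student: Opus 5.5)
The plan is to use the strict monotonicity of $t\mapsto e^t$ and argue pointwise, with no integration or normalization involved. Write $G(x):=u(x)-\lambda\mathcal{R}(x)$. Then the Gibbs density in \eqref{eq:gibbs_strong} is $\pi(x)=e^{G(x)}/Z_\pi$, where
\[
Z_\pi:=\int_{\cX} e^{G}\,\dd\mu .
\]
First I would note the assumption that makes $\pi$ a genuine density: $0<Z_\pi<\infty$. Positivity is automatic because $e^G>0$ everywhere and $\mu$ is nontrivial. If $Z_\pi=\infty$, I would read ``mode of $\pi$'' as a maximizer of the unnormalized density $e^{G}$. The argument below never uses the value of $Z_\pi$, so it covers both readings.

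The key step is then elementary. Take a maximizer $x^\star$ of \eqref{eq:opt_reg_strong}, so $G(x^\star)\ge G(x)$ for every $x\in\cX$. Since $\exp$ is increasing and $Z_\pi>0$ is a constant independent of $x$, this gives
\[
\pi(x^\star)=\frac{e^{G(x^\star)}}{Z_\pi}\ \ge\ \frac{e^{G(x)}}{Z_\pi}=\pi(x)\qquad\text{for all }x\in\cX .
\]
Hence $x^\star\in\argmax_x\pi(x)$. I would also record the converse, which uses strict monotonicity of $\exp$: any mode of $\pi$ maximizes $G$. Therefore the two argmax sets coincide exactly, which matches the paper's ``exactly the mode'' phrasing in the main text.

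The only real obstacle is a measure-theoretic subtlety. A density with respect to $\mu$ is defined only up to $\mu$-null sets, so ``mode'' is meaningful only for a fixed version of the density. I would resolve this by fixing the canonical version $x\mapsto e^{G(x)}/Z_\pi$, which is defined pointwise since $u$ and $\mathcal{R}$ are given as functions. Under that choice the statement is pointwise and the argument above is complete.

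Finally, I would remark that instantiating $u=s$ and $\mathcal{R}=R$ recovers \eqref{eq:gibbs_strong_main}. The same monotonicity argument applies to the tempered family $\pi_\beta\propto e^{\beta G}$ for every $\beta>0$, so all members of that family share the mode $x^{\mathrm{opt}}$.
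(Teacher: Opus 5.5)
Your proof is correct and is essentially the paper's argument. The paper takes logarithms, $\log\pi(x)=u(x)-\lambda\mathcal{R}(x)-\log Z$ with $Z$ constant in $x$, and concludes that the argmax sets coincide; you run the same monotone-transformation reasoning in the exponential direction, and your remarks on a possibly infinite $Z_\pi$ and on fixing a pointwise version of the density are refinements the paper leaves implicit.
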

\begin{proof}[Proof of Lemma \ref{prop:map_gibbs_strong}]
Taking logarithms of \eqref{eq:gibbs_strong} gives
\begin{equation}
    \log \pi(x)=u(x)-\lambda \mathcal{R}(x)-\log Z,
\end{equation}
where $Z=\int \exp(u(x)-\lambda \mathcal{R}(x))\,\dd\mu(x)$ is a constant independent of $x$.
Therefore
\begin{equation}
    \argmax_x \pi(x)=\argmax_x \log\pi(x)=\argmax_x \big(u(x)-\lambda \mathcal{R}(x)\big),
\end{equation}
which is exactly \eqref{eq:opt_reg_strong}. Hence any optimizer is a MAP point of $\pi$.
\end{proof}

\subsection{Typical Set Mismatch in Optimization with Regularization}
\label{appd:typical}
In the Opt+Reg paradigm, the explanation distribution can be viewed as a Gibbs law
$\pi(x)\propto \exp\!\big(s(x)-\lambda R(x)\big)$ (or its tempered variant), and the produced
visualization is a MAP point $x_{\mathrm{MAP}}$ (denoted $x^{\mathrm{opt}}$ in the main text).
This entails an inherent \emph{mode-seeking} bias: in high dimensions, modes can be exponentially
unrepresentative of typical samples. The Gaussian example below provides a canonical witness of
this phenomenon.

\begin{definition}[$\varepsilon$-typical set]
\label{def:typical_region}
Fix $q\in\cP(\cX)$ and $\varepsilon\in(0,1)$.
Any measurable set $T_\varepsilon(q)\subseteq\cX$ satisfying $q\!\big(T_\varepsilon(q)\big)\ge 1-\varepsilon$ is called an $\varepsilon$-typical region of $q$.
\end{definition}

Typical regions provide a minimal, distributional notion of ``representative samples''.
Indeed, for any bounded measurable statistic $f:\cX\to\R$ and any $\varepsilon$-typical region $T_\varepsilon(q)$,
\begin{equation}
\Big|\E_q[f(X)]-\E_q\!\big[f(X)\mid X\in T_\varepsilon(q)\big]\Big|
\ \le\ 2\varepsilon\,\|f\|_\infty.
\label{eq:typical_region_rep}
\end{equation}
Thus, when explanations are communicated via samples, focusing on typical regions preserves distributional semantics up to $O(\varepsilon)$.

\begin{lemma}[MAP can be exponentially unrepresentative in high dimension]
\label{prop:map_atypical}
Let $X\sim \mathcal{N}(0,I_d)$ on $\R^d$. The MAP (mode) of the density is at $0$.
However, the typical set concentrates on a thin shell of radius $\|X\|\approx \sqrt{d}$:
for any $\alpha\in(0,1)$, letting $\delta:=1-\alpha^2\in(0,1)$,
\begin{equation}
\Prob\big(\|X\|\le \alpha\sqrt{d}\big)
=
\Prob\big(\|X\|^2\le (1-\delta)d\big)
\ \le\
\exp\!\left(-\frac{\delta^2}{4}\,d\right)
=
\exp\!\left(-\frac{(1-\alpha^2)^2}{4}\,d\right).
\label{eq:chi_lower_tail}
\end{equation}
In particular, taking $\alpha=\tfrac12$ gives $\Prob(\|X\|\le \tfrac12\sqrt{d})\le \exp(-9d/64)$.
\end{lemma}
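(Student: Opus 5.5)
The mode claim is immediate: the standard Gaussian density $(2\pi)^{-d/2}\exp(-\|x\|^2/2)$ is strictly decreasing in $\|x\|$, so its unique maximizer is $x=0$. For the concentration bound, first note the equality of events: $\|X\|\le\alpha\sqrt d$ iff $\|X\|^2\le\alpha^2 d=(1-\delta)d$ with $\delta=1-\alpha^2\in(0,1)$. It then suffices to prove a lower-tail Chernoff bound for the $\chi^2_d$ variable $Y:=\|X\|^2=\sum_{i=1}^d X_i^2$.

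I will use the exponential Markov inequality with a negative tilt. For any $\lambda>0$,
\begin{equation}
\Prob\big(Y\le (1-\delta)d\big)=\Prob\big(e^{-\lambda Y}\ge e^{-\lambda(1-\delta)d}\big)\le e^{\lambda(1-\delta)d}\,\E[e^{-\lambda Y}]=\exp\!\Big(d\big[\lambda(1-\delta)-\tfrac12\log(1+2\lambda)\big]\Big).
\end{equation}
This uses independence and $\E[e^{-\lambda X_i^2}]=(1+2\lambda)^{-1/2}$, which is valid for all $\lambda>0$, so no integrability issue arises.

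Optimizing over $\lambda$ gives $1+2\lambda=1/(1-\delta)$, that is, $\lambda=\delta/(2(1-\delta))$. The exponent then becomes $\tfrac d2\big[\delta+\log(1-\delta)\big]$.

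The remaining step is the elementary inequality $\log(1-\delta)\le-\delta-\delta^2/2$ for $\delta\in(0,1)$. It follows from the Taylor series $\log(1-\delta)=-\sum_{k\ge1}\delta^k/k$, in which every term is negative. Substituting it in bounds the exponent by $\tfrac d2(-\delta^2/2)=-\delta^2 d/4$, which is exactly \eqref{eq:chi_lower_tail}.

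For the special case, $\alpha=\tfrac12$ gives $\delta=\tfrac34$ and hence $\delta^2/4=9/64$.

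The main point to get right is the choice of the \emph{lower}-tail Chernoff bound. With it, the constant $1/4$ holds with no restriction on $\delta$, whereas the upper-tail analogue needs $\delta<1/2$ or similar caveats. With the lower tail, the whole argument is an exact MGF computation followed by the log inequality.
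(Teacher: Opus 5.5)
Your proposal is correct and matches the paper's proof essentially step for step. Both use the exponential Markov inequality with tilt $e^{-\lambda Y}$, the $\chi^2_d$ Laplace transform $(1+2\lambda)^{-d/2}$, and the choice $\lambda=\delta/(2(1-\delta))$, which gives exponent $\tfrac d2(\delta+\log(1-\delta))$, followed by the bound $\log(1-\delta)\le-\delta-\delta^2/2$ from the Taylor series.
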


\begin{proof}
Let $Y:=\|X\|^2=\sum_{i=1}^d X_i^2$, so $Y\sim \chi^2_d$.
For any $\lambda>0$ and any $a>0$, Markov's inequality gives
\begin{equation}
\Prob(Y\le a)=\Prob(e^{-\lambda Y}\ge e^{-\lambda a})
\le e^{\lambda a}\,\E[e^{-\lambda Y}].
\end{equation}
The Laplace transform of $\chi^2_d$ yields $\E[e^{-\lambda Y}]=(1+2\lambda)^{-d/2}$ for $\lambda>0$,
hence
\begin{equation}
\Prob(Y\le a)\le \exp\!\Big(\lambda a-\frac{d}{2}\log(1+2\lambda)\Big).
\end{equation}
Set $a=(1-\delta)d$ with $\delta\in(0,1)$ and choose $\lambda=\frac{\delta}{2(1-\delta)}$, which implies $1+2\lambda=\frac{1}{1-\delta}$.
Substituting gives
\begin{equation}
\Prob\big(Y\le (1-\delta)d\big)
\le
\exp\!\Big(\frac{\delta}{2(1-\delta)}(1-\delta)d-\frac{d}{2}\log\frac{1}{1-\delta}\Big)
=
\exp\!\Big(\frac{d}{2}\big(\delta+\log(1-\delta)\big)\Big).
\end{equation}
Finally, since $\log(1-\delta)\le -\delta-\delta^2/2$ for $\delta\in(0,1)$ (from the Taylor series with alternating remainder, or by integrating $(1-x)^{-1}\ge 1+x$ on $[0,\delta]$),
we have $\delta+\log(1-\delta)\le -\delta^2/2$.
Therefore
\begin{equation}
\Prob\big(Y\le (1-\delta)d\big)\le \exp\!\left(-\frac{\delta^2}{4}d\right),
\end{equation}
which proves \eqref{eq:chi_lower_tail}.
\end{proof}

Lemma~\ref{prop:map_atypical} shows that even when the induced Gibbs distribution $\pi$ is benign, its MAP can lie in a region of exponentially small probability mass.
Thus ``one optimized image'' can be a poor proxy for the distributional semantics of a feature.

\begin{assumption}[Noisy-MAP approximation]
\label{ass:noisy_map}
Let $q\in\cP(\R^d)$ be the distribution induced by Opt+Reg (e.g., a Gibbs law $\pi$ or a temperature-smoothed variant $q_T$), and let $x_{\mathrm{MAP}}\in\argmax_x q(x)$.
An approximate optimizer is said to satisfy an $(r,\eta)$ Noisy-MAP condition if its output $\hat X$ obeys
\begin{equation}
\Prob\big(\|\hat X-x_{\mathrm{MAP}}\|\le r\big)\ \ge\ 1-\eta,
\label{eq:noisy_map}
\end{equation}
for some radius $r>0$ and failure probability $\eta\in[0,1)$.
\end{assumption}

\begin{lemma}[Noisy-MAP cannot cover typical mass unless the error scale is large]
\label{prop:noisy_map_atypical}
Let $q\in\cP(\R^d)$ and let $\rho:=\mathcal{L}(\hat X)$ be the output distribution of an approximate optimizer satisfying Assumption~\ref{ass:noisy_map}.
Then for any $r>0$,
\begin{equation}
\mathrm{TV}(q,\rho)\ \ge\ (1-\eta)\ -\ q\!\big(B(x_{\mathrm{MAP}},r)\big),
\end{equation}
where $B(x,r):=\{z:\|z-x\|\le r\}$.
In particular, if $q(B(x_{\mathrm{MAP}},r))\le \varepsilon$, then $\mathrm{TV}(q,\rho)\ge 1-\eta-\varepsilon$ and the complement $T_\varepsilon(q):=B(x_{\mathrm{MAP}},r)^c$ is an $\varepsilon$-typical region of $q$ that the optimizer hits with probability at most $\eta$.

Moreover, for $q=\mathcal{N}(0,I_d)$, $x_{\mathrm{MAP}}=0$, and $r=\alpha\sqrt d$ with $\alpha\in(0,1)$, Proposition~\ref{prop:map_atypical} implies
\begin{equation}
q\!\big(B(0,r)\big)\ \le\ \exp\!\left(-\frac{(1-\alpha^2)^2}{4}\,d\right),
\end{equation}
hence $\mathrm{TV}(q,\rho)\ge 1-\eta-\exp\!\big(-\tfrac{(1-\alpha^2)^2}{4}d\big)$.
\end{lemma}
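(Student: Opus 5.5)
The inequality follows from a two-line coupling argument on the event $B:=B(x_{\mathrm{MAP}},r)$, using the variational characterization $\TV(q,\rho)=\sup_{A\in\cF}|q(A)-\rho(A)|$ from the preliminaries. No structure of $q$ beyond measurability of $B$ is needed for the first claim.

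\textbf{Step 1 (main bound).} Take $A=B$, which is a closed ball and hence Borel. Assumption~\ref{ass:noisy_map} gives $\rho(B)=\Prob(\|\hat X-x_{\mathrm{MAP}}\|\le r)\ge 1-\eta$. Therefore
\begin{equation*}
\TV(q,\rho)\ \ge\ \rho(B)-q(B)\ \ge\ (1-\eta)-q(B),
\end{equation*}
which is the stated inequality.

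\textbf{Step 2 (the ``in particular'' clause).} Suppose $q(B)\le\varepsilon$. Substituting into Step 1 gives $\TV(q,\rho)\ge 1-\eta-\varepsilon$. Next, $q(B^c)=1-q(B)\ge 1-\varepsilon$, so $B^c$ is an $\varepsilon$-typical region in the sense of Definition~\ref{def:typical_region}. Finally, the optimizer hits this region with probability $\rho(B^c)=1-\rho(B)\le\eta$, again by Assumption~\ref{ass:noisy_map}.

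\textbf{Step 3 (Gaussian instance).} Set $q=\cN(0,I_d)$, $x_{\mathrm{MAP}}=0$ and $r=\alpha\sqrt d$. Then $q(B(0,r))=\Prob(\|X\|\le\alpha\sqrt d)$. This is exactly the quantity bounded in Lemma~\ref{prop:map_atypical}, which gives $q(B(0,r))\le\exp(-(1-\alpha^2)^2d/4)$. Plugging this value of $\varepsilon$ into Step 2 yields the displayed TV lower bound.

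\textbf{Main obstacle.} There is no substantive obstacle. The only care points are confirming that $B$ is measurable (it is a closed ball) and matching the event in the Noisy-MAP condition, $\le r$, with the ball $B(x,r)$ as defined. The result labeled ``Proposition~\ref{prop:map_atypical}'' in the statement is the earlier Lemma~\ref{prop:map_atypical}, and that is the result invoked in Step 3.
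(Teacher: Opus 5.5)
Your proposal is correct and matches the paper's proof essentially step for step. Both test the variational form of $\TV$ on the ball $B(x_{\mathrm{MAP}},r)$, use $\rho(B)\ge 1-\eta$ from the Noisy-MAP condition, read off $q(B^c)\ge 1-\varepsilon$ and $\rho(B^c)\le\eta$, and substitute the $\chi^2$ lower-tail bound of Lemma~\ref{prop:map_atypical} for the Gaussian case. Two small notes: calling Step 1 a ``coupling argument'' is a misnomer, since it is just the $\sup_A$ characterization. Also, as in the paper, $r$ must be the radius for which Assumption~\ref{ass:noisy_map} holds, or any larger one, because $\rho(B(x_{\mathrm{MAP}},r))$ is monotone in $r$.
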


\begin{proof}
For any measurable set $A$, $\mathrm{TV}(q,\rho)\ge |q(A)-\rho(A)|$.
Take $A=B(x_{\mathrm{MAP}},r)$.
Assumption~\ref{ass:noisy_map} gives $\rho(A)\ge 1-\eta$, hence
\begin{equation}
\mathrm{TV}(q,\rho)\ \ge\ \rho(A)-q(A)\ \ge\ (1-\eta)-q\!\big(B(x_{\mathrm{MAP}},r)\big).
\end{equation}
If $q(B)\le\varepsilon$, then $T_\varepsilon(q):=B^c$ satisfies $q(T_\varepsilon(q))\ge 1-\varepsilon$ and $\rho(T_\varepsilon(q))\le \eta$.
The Gaussian specialization follows by substituting \eqref{eq:chi_lower_tail}.
\end{proof}

Lemma~\ref{prop:noisy_map_atypical} formalizes a key limitation of optimization with regularization: bounded optimization error merely randomizes a neighborhood of the mode, which can carry exponentially small probability mass in high dimension.
To achieve non-negligible coverage of typical regions, the perturbation scale must be comparable to the typical radius (e.g., $r=\Theta(\sqrt d)$ for isotropic Gaussians), at which point the procedure begins to resemble an explicit sampler rather than a point estimator.

\subsection{Proof for The Prompt-Guided Diffusion Sampling}
\begin{theorem}[Prompt information bottleneck]\label{thm:prompt_bottleneck}
Fix a baseline distribution $p_\theta$ on $\cX$ and an intrinsic score $s:\cX\to\R$.
Let $q^{\mathrm{KLSC}}_\beta$ be the (unrestricted) KLSC solution satisfying
$\E_{q^{\mathrm{KLSC}}_\beta}[s(X)]=m$.
Then, for any prompt-restricted family $\cQ_{\mathrm{prompt}}\subset\cP(\cX)$,
\begin{equation}
\min_{\substack{q\in\cQ_{\mathrm{prompt}}\\ \E_q[s]\ge m}}\KL(q\|p_\theta)
\ \ge\
\KL\!\big(q^{\mathrm{KLSC}}_\beta\|p_\theta\big)
\ +\
\inf_{\substack{q\in\cQ_{\mathrm{prompt}}\\ \E_q[s]\ge m}}
\KL\!\big(q\|q^{\mathrm{KLSC}}_\beta\big).
\label{eq:prompt_gap_lower_bound}
\end{equation}
\end{theorem}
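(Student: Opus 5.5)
The plan is to prove a Pythagorean-type identity (in fact an inequality) for the exponential tilt, valid for every feasible $q$, and then minimize over the prompt family. Fix any $q\in\cQ_{\mathrm{prompt}}$ with $\E_q[s]\ge m$ and $\KL(q\|p_\theta)<\infty$; otherwise the claimed inequality is trivial for that $q$. Then $q\ll p_\theta$. Since $q^{\mathrm{KLSC}}_\beta$ and $p_\theta$ are mutually absolutely continuous (the tilt density $e^{\beta s}/Z(\beta)$ is strictly positive and finite), we also have $q\ll q^{\mathrm{KLSC}}_\beta$. We can therefore write, exactly as in the proof of Lemma~\ref{thm:tilt},
\begin{equation*}
\log\frac{q}{p_\theta}=\log\frac{q}{q^{\mathrm{KLSC}}_\beta}+\beta s-\log Z(\beta)\qquad q\text{-a.e.}
\end{equation*}
Taking $\E_q$ gives the decomposition
\begin{equation*}
\KL(q\|p_\theta)=\KL\big(q\|q^{\mathrm{KLSC}}_\beta\big)+\beta\,\E_q[s]-\log Z(\beta).
\end{equation*}
This is the same computation behind \eqref{eq:gibbs_ineq}, now keeping the $\KL(q\|q_\beta)$ term rather than discarding it.

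Next I compare with \eqref{eq:kl_qbeta}, which gives $\KL(q^{\mathrm{KLSC}}_\beta\|p_\theta)=\beta m-\log Z(\beta)$ because $\E_{q^{\mathrm{KLSC}}_\beta}[s]=m$. Subtracting yields
\begin{equation*}
\KL(q\|p_\theta)-\KL\big(q^{\mathrm{KLSC}}_\beta\|p_\theta\big)=\KL\big(q\|q^{\mathrm{KLSC}}_\beta\big)+\beta\big(\E_q[s]-m\big).
\end{equation*}
Since $\beta\ge 0$ (Lemma~\ref{thm:tilt}, Remark~\ref{rem:beta_from_m}) and $\E_q[s]\ge m$, the last term is nonnegative. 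Hence, for every feasible $q$ in the prompt family,
\begin{equation*}
\KL(q\|p_\theta)\ge\KL(q^{\mathrm{KLSC}}_\beta\|p_\theta)+\KL(q\|q^{\mathrm{KLSC}}_\beta).
\end{equation*}
I then bound the right-hand side below by its infimum over the restricted feasible set, and take the infimum (the minimum, when it is attained as in the statement) on the left. This gives \eqref{eq:prompt_gap_lower_bound}. If the restricted feasible set is empty, both sides are $+\infty$ by convention.

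The main obstacle is the integrability bookkeeping that makes the splitting of expectations legitimate, which requires $\E_q[s]$ to be finite. Where $\E_q[s]=+\infty$, I would instead argue with truncation $s\wedge M$, or simply note that both $\beta\E_q[s]$ and the remaining terms are bounded below. In that case the identity holds in the extended sense and the inequality is preserved, because every term dropped is nonnegative. I also need $Z(\beta)<\infty$, which comes from Assumption~\ref{assump:expint} and the existence of $q^{\mathrm{KLSC}}_\beta$. In the degenerate case $\beta=0$ we have $q^{\mathrm{KLSC}}_\beta=p_\theta$, and the statement reduces to $\KL(q\|p_\theta)\ge\KL(q\|p_\theta)$.
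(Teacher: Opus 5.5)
Your proposal is correct and follows the same route as the paper's proof. Both decompose $\KL(q\|p_\theta)=\KL(q\|q^{\mathrm{KLSC}}_\beta)+\beta\,\E_q[s]-\log Z(\beta)$, evaluate it at $q^{\mathrm{KLSC}}_\beta$ to get $\beta m-\log Z(\beta)$, drop the nonnegative term $\beta(\E_q[s]-m)$, and take the infimum. Your extra integrability bookkeeping, which the paper omits, is sound, and the case $\E_q[s]=+\infty$ with $\KL(q\|p_\theta)<\infty$ cannot occur when $\beta>0$ and $Z(\beta)<\infty$: this follows from the Gibbs variational bound $\beta\,\E_q[s\wedge M]\le\KL(q\|p_\theta)+\log Z(\beta)$ and monotone convergence.
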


\begin{proof}
Fix any $q$ such that $q\in\cQ_{\mathrm{prompt}}$ and $\E_q[s]\ge m$.
By the chain rule for KL divergence,
\begin{equation}
\KL(q\|p_\theta)
=
\KL\!\big(q\|q^{\mathrm{KLSC}}_\beta\big)
+
\int q(x)\log\frac{q^{\mathrm{KLSC}}_\beta(x)}{p_\theta(x)}\,\dd x .
\label{eq:kl_chain_rule}
\end{equation}
Since $q^{\mathrm{KLSC}}_\beta(x)\propto p_\theta(x)\exp(\beta s(x))$, we have
\begin{equation}
\log\frac{q^{\mathrm{KLSC}}_\beta(x)}{p_\theta(x)}
=
\beta s(x)-\log Z(\beta),
\qquad
Z(\beta)=\E_{p_\theta}\!\left[e^{\beta s(X)}\right].
\label{eq:tilt_log_ratio}
\end{equation}
Substituting \eqref{eq:tilt_log_ratio} into \eqref{eq:kl_chain_rule} yields
\begin{equation}
\KL(q\|p_\theta)
=
\KL\!\big(q\|q^{\mathrm{KLSC}}_\beta\big)
+
\beta\,\E_q[s(X)]
-\log Z(\beta).
\label{eq:kl_decomp}
\end{equation}
Applying \eqref{eq:kl_decomp} to $q^{\mathrm{KLSC}}_\beta$ and using
$\E_{q^{\mathrm{KLSC}}_\beta}[s(X)]=m$ gives
\begin{equation}
\KL\!\big(q^{\mathrm{KLSC}}_\beta\|p_\theta\big)
=
\beta m-\log Z(\beta).
\label{eq:kl_klsctilt}
\end{equation}
Combining \eqref{eq:kl_decomp}--\eqref{eq:kl_klsctilt} and using $\E_q[s(X)]\ge m$ (with $\beta\ge 0$)
yields the pointwise bound
\begin{equation}
\KL(q\|p_\theta)
\ \ge\
\KL\!\big(q^{\mathrm{KLSC}}_\beta\|p_\theta\big)
+
\KL\!\big(q\|q^{\mathrm{KLSC}}_\beta\big).
\end{equation}
Taking the infimum over all feasible $q\in\cQ_{\mathrm{prompt}}$ proves
\eqref{eq:prompt_gap_lower_bound}.
\end{proof}

\section{The Faithfulness of EnergyDPS}
\label{appd:sec_energydps}

This appendix formalizes the relationship between the \textbf{KLSC} induced distribution
\(
q^{\mathrm{KLSC}}_\beta \propto p_\theta \exp(\beta s)
\)
and \textbf{EnergyDPS}.
At a high level, we prove an \emph{ideal} statement:
\emph{if} one can implement the reverse-time diffusion dynamics for the tilted marginals
\(\{q_{\beta,t}\}_{t\in[0,T]}\),
then the resulting sampler is \emph{exact} for the KLSC tilt.
We then explain how EnergyDPS provides a scalable \emph{plug-in} approximation of this ideal sampler,
and why CEP-style approaches~\cite{cep} are incompatible with our paradigm.

\subsection{Setup: score-based diffusion and time reversal}

We adopt the standard score-based diffusion formalism on \(\cX=\R^d\).
Throughout, \(\nabla\) and \(\Delta\) denote the gradient and Laplacian w.r.t.\ the spatial variable.

\begin{assumption}[Forward diffusion and regularity]
\label{assump:forward_diffusion}
Assume the diffusion prior \(p_\theta\) is realized as the \(t=0\) marginal of a forward SDE
\begin{equation}
\dd X_t = f(X_t,t)\,\dd t + g(t)\,\dd W_t,\qquad t\in[0,T],
\label{eq:forward_sde}
\end{equation}
where \(g(t)>0\) depends only on \(t\).
Let \(p_{\theta,t}\) denote the density of \(X_t\) when \(X_0\sim p_\theta\), and assume each
\(p_{\theta,t}\) is \(C^2\) and strictly positive on \(\R^d\).
\end{assumption}

The following lemma is a standard time-reversal statement (e.g., Anderson/Haussmann--Pardoux type
results) specialized to scalar diffusion \(g(t)\).

\begin{lemma}[Reverse-time SDE with prescribed marginals]
\label{lem:reverse_sde}
Let \(\nu\) be any initial distribution on \(\R^d\), and let \(p_t^\nu\) be the density of the forward
SDE~\eqref{eq:forward_sde} at time \(t\) when \(X_0\sim \nu\).
Assume \(p_t^\nu\) is \(C^2\) and strictly positive for all \(t\in[0,T]\).
Define \(\tilde p_t(x):=p_{T-t}^\nu(x)\) and consider the SDE (run forward in \(t\in[0,T]\))
\begin{equation}
\dd Y_t
=
\Big(
-f(Y_t,T-t)
+ g(T-t)^2 \nabla \log p_{T-t}^\nu(Y_t)
\Big)\dd t
+ g(T-t)\,\dd \bar W_t,
\label{eq:reverse_sde}
\end{equation}
initialized with \(Y_0\sim p_T^\nu\).
Then for all \(t\in[0,T]\), the law of \(Y_t\) has density \(\tilde p_t=p_{T-t}^\nu\).
In particular, \(Y_T\sim \nu\).
\end{lemma}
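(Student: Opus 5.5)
The plan is to verify that the prescribed family $\tilde p_t:=p^\nu_{T-t}$ solves the Fokker--Planck equation of the reverse SDE~\eqref{eq:reverse_sde}. Then we invoke uniqueness of that equation with initial datum $p^\nu_T$ to identify $\tilde p_t$ with the law of $Y_t$.

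\emph{Step one: the forward equation.} By Assumption~\ref{assump:forward_diffusion}, with $g$ depending only on $t$, the forward marginals satisfy
\begin{equation*}
\partial_t p^\nu_t=-\nabla\cdot\big(f(\cdot,t)\,p^\nu_t\big)+\tfrac12 g(t)^2\Delta p^\nu_t .
\end{equation*}

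\emph{Step two: the reversed family.} Differentiating $\tilde p_t=p^\nu_{T-t}$ in $t$ gives
\begin{equation*}
\partial_t\tilde p_t=\nabla\cdot\big(f(\cdot,T-t)\tilde p_t\big)-\tfrac12 g(T-t)^2\Delta\tilde p_t .
\end{equation*}
The key algebraic move is to use strict positivity and $C^2$ regularity to write
\begin{equation*}
\Delta\tilde p_t=\nabla\cdot\big(\tilde p_t\nabla\log\tilde p_t\big).
\end{equation*}
Splitting the negative Laplacian as $-\tfrac12 g^2\Delta\tilde p_t=-g^2\,\nabla\cdot(\tilde p_t\nabla\log\tilde p_t)+\tfrac12 g^2\Delta\tilde p_t$ then yields
\begin{equation*}
\partial_t\tilde p_t=-\nabla\cdot\Big(\big(-f(\cdot,T-t)+g(T-t)^2\nabla\log p^\nu_{T-t}\big)\tilde p_t\Big)+\tfrac12 g(T-t)^2\Delta\tilde p_t .
\end{equation*}
This is exactly the Fokker--Planck equation associated with the drift and diffusion coefficient in~\eqref{eq:reverse_sde}.

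\emph{Step three: identification with the law of $Y_t$.} The law of $Y_t$ solves the same linear Fokker--Planck equation and starts from the same initial density $p^\nu_T$, since $Y_0\sim p^\nu_T$. By uniqueness of solutions in the class of probability densities, the two coincide for all $t\in[0,T]$. Evaluating at $t=T$ gives $Y_T\sim p^\nu_0=\nu$.

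\emph{Main obstacle.} The delicate step is this uniqueness. The reverse drift contains the score $\nabla\log p^\nu_{T-t}$, which may be unbounded and only locally Lipschitz. Well-posedness, and uniqueness of the Fokker--Planck solution, therefore needs extra hypotheses beyond those stated. I would first try to close the argument under local Lipschitz continuity of the drift plus a linear-growth or Lyapunov (non-explosion) bound. This gives a unique strong solution, and uniqueness of measure-valued Fokker--Planck solutions follows via the superposition principle. If that fails, I would instead appeal to the classical Anderson and Haussmann--Pardoux theorem, whose finite-entropy / $L^2$ integrability conditions on $\nabla\log p^\nu_t$ are exactly what is needed. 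In the write-up I would state this explicitly as the regularity being assumed.
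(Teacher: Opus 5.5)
Your proposal follows the paper's proof essentially step for step: the paper also checks that $\tilde p_t=p^\nu_{T-t}$ satisfies the Fokker--Planck equation of the reverse SDE, using the same identity $\tilde p_t\nabla\log\tilde p_t=\nabla\tilde p_t$ (only run in the opposite direction), and then concludes by uniqueness. Your caution about that last step is justified: the paper simply asserts ``uniqueness of smooth solutions'' without the extra hypotheses you name (non-explosion/growth bounds, or the Haussmann--Pardoux integrability conditions on the score), so stating them explicitly makes your version more careful than the original.
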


\begin{proof}
The forward SDE~\eqref{eq:forward_sde} implies that \(p_t^\nu\) satisfies the Fokker--Planck equation
\begin{equation}
\partial_t p_t^\nu
= -\nabla\cdot\!\big(f(\cdot,t)\,p_t^\nu\big)
+ \frac{1}{2}g(t)^2\,\Delta p_t^\nu.
\label{eq:fp_forward}
\end{equation}
Define \(\tilde p_t(x):=p_{T-t}^\nu(x)\); then by the chain rule,
\begin{equation}
\partial_t \tilde p_t(x)= -\big(\partial_\tau p_\tau^\nu(x)\big)\big|_{\tau=T-t}.
\label{eq:tilde_chain}
\end{equation}
Let \(b(x,t):=-f(x,T-t)+g(T-t)^2\nabla\log p_{T-t}^\nu(x)\) and \(\tilde g(t):=g(T-t)\).
The Fokker--Planck equation associated with~\eqref{eq:reverse_sde} is
\begin{equation}
\partial_t \rho_t
= -\nabla\cdot\!\big(b(\cdot,t)\,\rho_t\big)
+ \frac{1}{2}\tilde g(t)^2\,\Delta \rho_t.
\label{eq:fp_reverse}
\end{equation}
We verify that \(\rho_t=\tilde p_t\) satisfies~\eqref{eq:fp_reverse}.
Indeed,
\begin{equation}
b(\cdot,t)\tilde p_t
=
-f(\cdot,T-t)\,p_{T-t}^\nu
+ g(T-t)^2\,\nabla p_{T-t}^\nu,
\end{equation}
hence
\begin{equation}
-\nabla\cdot\!\big(b(\cdot,t)\tilde p_t\big)
=
\nabla\cdot\!\big(f(\cdot,T-t)\,p_{T-t}^\nu\big)
- g(T-t)^2\,\Delta p_{T-t}^\nu.
\end{equation}
Adding the diffusion term
\(\frac{1}{2}\tilde g(t)^2\Delta \tilde p_t=\frac{1}{2}g(T-t)^2\Delta p_{T-t}^\nu\) gives
\begin{equation}
-\nabla\cdot\!\big(b(\cdot,t)\tilde p_t\big)
+\frac{1}{2}\tilde g(t)^2\Delta \tilde p_t
=
\nabla\cdot\!\big(f(\cdot,T-t)\,p_{T-t}^\nu\big)
-\frac{1}{2}g(T-t)^2\,\Delta p_{T-t}^\nu.
\end{equation}
On the other hand, combining~\eqref{eq:tilde_chain} with~\eqref{eq:fp_forward} at \(\tau=T-t\) yields
\begin{equation}
\partial_t \tilde p_t
=
\nabla\cdot\!\big(f(\cdot,T-t)\,p_{T-t}^\nu\big)
-\frac{1}{2}g(T-t)^2\,\Delta p_{T-t}^\nu,
\end{equation}
which matches~\eqref{eq:fp_reverse}. Therefore \(\tilde p_t\) satisfies the Fokker--Planck equation of
\eqref{eq:reverse_sde}. Uniqueness of smooth solutions implies \(\rho_t=\tilde p_t\) for all \(t\).
\end{proof}

\subsection{KLSC tilt and the ideal guided reverse dynamics}

Fix an intrinsic score \(s:\cX\to\R\) and a guidance strength \(\beta\ge 0\).
Assume the tilt is normalizable, i.e., \(Z(\beta):=\E_{p_\theta}[e^{\beta s(X)}]<\infty\).

\begin{theorem}[Guided reverse SDE targets the KLSC tilt]
\label{thm:guided_sde}
Fix \(\beta\ge 0\) and define the KLSC tilt at \(t=0\) by
\begin{equation}
q_\beta(x)\ :=\ q^{\mathrm{KLSC}}_\beta(x)\ \propto\ p_\theta(x)\exp\!\big(\beta s(x)\big).
\label{eq:qbeta_repeat}
\end{equation}
Let \(q_{\beta,t}\) denote the density at time \(t\) obtained by evolving \(X_0\sim q_\beta\) through the
forward SDE~\eqref{eq:forward_sde}, and assume each \(q_{\beta,t}\) is \(C^2\) and strictly positive.
Define the time-dependent density-ratio potential
\begin{equation}
E_t(x):=-\frac{1}{\beta}\log\frac{q_{\beta,t}(x)}{p_{\theta,t}(x)}\quad(\beta>0),
\qquad\text{and}\qquad E_t(x)\equiv 0\ \ (\beta=0).
\label{eq:Et_def_ratio}
\end{equation}
Then for all \(t\in[0,T]\),
\begin{equation}
q_{\beta,t}(x)\ \propto\ p_{\theta,t}(x)\exp\!\big(-\beta E_t(x)\big),
\qquad
\nabla\log q_{\beta,t}(x)=\nabla\log p_{\theta,t}(x)-\beta\nabla E_t(x).
\label{eq:score_identity}
\end{equation}
Moreover, the reverse-time SDE associated with \(\nu=q_\beta\) in Lemma~\ref{lem:reverse_sde} can be
implemented using the \emph{modified score}
\begin{equation}
\nabla_x \log p_{\theta,t}(x)\quad\leadsto\quad
\nabla_x \log p_{\theta,t}(x)\ -\ \beta \nabla_x E_t(x).
\label{eq:guided_score}
\end{equation}
When initialized from \(Y_0\sim q_{\beta,T}\) and run to \(t=T\), it produces \(Y_T\sim q_\beta\).
\end{theorem}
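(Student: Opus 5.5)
The identity \eqref{eq:score_identity} is essentially a tautology of the definition \eqref{eq:Et_def_ratio}, so the plan is to dispose of it first by direct algebra. Then I will obtain the sampling claim by invoking Lemma~\ref{lem:reverse_sde} with \(\nu=q_\beta\).

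\textbf{Step 1 (well-posedness of \(E_t\)).} By Assumption~\ref{assump:forward_diffusion}, \(p_{\theta,t}>0\), and by hypothesis \(q_{\beta,t}>0\), both being \(C^2\). Hence the ratio \(q_{\beta,t}/p_{\theta,t}\) is a strictly positive \(C^2\) function, and \(E_t\) is a well-defined \(C^2\) function for \(\beta>0\).

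\textbf{Step 2 (the identity).} For \(\beta>0\), exponentiating \eqref{eq:Et_def_ratio} gives
\begin{equation*}
q_{\beta,t}(x)=p_{\theta,t}(x)\exp\!\big(-\beta E_t(x)\big),
\end{equation*}
which is in fact an equality, not merely a proportionality. Taking \(\nabla\log\) of both sides yields the score relation in \eqref{eq:score_identity}. For \(\beta=0\), the tilt is trivial: \(q_0=p_\theta\). Since both marginals evolve under the same forward SDE from the same initial law, \(q_{0,t}=p_{\theta,t}\) for all \(t\), which is consistent with \(E_t\equiv0\). The uniqueness of the forward Fokker--Planck evolution is what guarantees equal initial data gives equal marginals. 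As a sanity check at \(t=0\), \(E_0=-s+\beta^{-1}\log Z(\beta)\), so \(-\beta\nabla E_0=\beta\nabla s\), which recovers the guidance direction used in Algorithm~\ref{alg:eg_dps_gaussian}.

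\textbf{Step 3 (reverse dynamics).} Apply Lemma~\ref{lem:reverse_sde} with \(\nu=q_\beta\), so \(p^\nu_t=q_{\beta,t}\). Its hypotheses are exactly the assumed \(C^2\) regularity and positivity of \(q_{\beta,t}\). The reverse SDE \eqref{eq:reverse_sde} has drift
\begin{equation*}
-f(\cdot,T-t)+g(T-t)^2\nabla\log q_{\beta,T-t}.
\end{equation*}
Substituting Step 2 replaces \(\nabla\log q_{\beta,T-t}\) by \(\nabla\log p_{\theta,T-t}-\beta\nabla E_{T-t}\), which is precisely the modified score \eqref{eq:guided_score}. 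Lemma~\ref{lem:reverse_sde} then gives \(\mathrm{Law}(Y_t)=q_{\beta,T-t}\), and in particular \(Y_T\sim q_\beta\) when \(Y_0\sim q_{\beta,T}\).

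\textbf{Main obstacle.} No step is hard. The only point needing care is that the regularity and positivity of \(q_{\beta,t}\) must be genuinely assumed rather than derived, since Lemma~\ref{lem:reverse_sde} relies on uniqueness of smooth Fokker--Planck solutions. I will state this explicitly as the theorem's hypothesis. I will also remark that the exactness concerns only this ideal sampler: EnergyDPS replaces the intractable \(\nabla E_t\) by \(-\nabla_{x_t}s(\hat x_0)\), an approximation that Theorem~\ref{thm:guided_sde} does not cover.
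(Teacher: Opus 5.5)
Your proposal is correct and follows the paper's proof essentially step for step: the identity \eqref{eq:score_identity} is read off from the definition of $E_t$ by exponentiating and differentiating, and the sampling claim follows by applying Lemma~\ref{lem:reverse_sde} with $\nu=q_\beta$ and substituting the score identity into the reverse drift. Your treatment of the $\beta=0$ case is more explicit than the paper's ``trivial'': you note that $q_{0,t}=p_{\theta,t}$ because both marginals come from the same forward SDE with the same initial law. Your check that $-\beta\nabla E_0=\beta\nabla s$ is also a small but welcome addition.
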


\begin{proof}
The proportionality in~\eqref{eq:score_identity} is immediate from~\eqref{eq:Et_def_ratio}:
for \(\beta>0\), \(q_{\beta,t}(x)=p_{\theta,t}(x)\exp(-\beta E_t(x))\); for \(\beta=0\) it is trivial.
Taking logarithms and differentiating yields the score identity
\(\nabla\log q_{\beta,t}=\nabla\log p_{\theta,t}-\beta\nabla E_t\).

Now apply Lemma~\ref{lem:reverse_sde} with \(\nu=q_\beta\).
The lemma states that the reverse SDE~\eqref{eq:reverse_sde} whose drift uses
\(\nabla\log q_{\beta,T-t}\) generates marginals \(q_{\beta,T-t}\) and, in particular, returns
\(q_{\beta,0}=q_\beta\) at the terminal time.
By~\eqref{eq:score_identity}, \(\nabla\log q_{\beta,t}\) equals the modified score in~\eqref{eq:guided_score}.
Therefore implementing the reverse SDE using the baseline score \(\nabla\log p_{\theta,t}\) plus the
density-ratio correction \(-\beta\nabla E_t\) realizes the reverse process for \(\{q_{\beta,t}\}\) and outputs
samples from \(q_\beta\).
\end{proof}

\subsection{From the ideal statement to EnergyDPS}

Theorem~\ref{thm:guided_sde} clarifies the \emph{mathematically exact} route to sample from the KLSC tilt:
one must correct the base score \(\nabla\log p_{\theta,t}\) by the time-dependent term \(-\beta\nabla E_t\),
where \(E_t\) encodes the (unknown) density ratio \(q_{\beta,t}/p_{\theta,t}\).
EnergyDPS provides a scalable \emph{plug-in} approximation to this correction.

\begin{remark}[Connection to EnergyDPS, plug-in guidance, and contrastive energy prediction (CEP)]
\label{rem:energydps_plugin}
Theorem~\ref{thm:guided_sde} is an \emph{ideal} statement: exact sampling from the KLSC tilt
\(q_\beta\propto p_\theta e^{\beta s}\) can be realized by a reverse-time SDE if one has access to the
time-dependent correction \(-\beta\nabla E_t\), where
\(E_t(x)=-(1/\beta)\log\big(q_{\beta,t}(x)/p_{\theta,t}(x)\big)\).
In general, \(E_t\) is intractable because \(q_{\beta,t}\) is unknown.

EnergyDPS adopts a \emph{plug-in} approximation that avoids modeling \(E_t\) explicitly.
At each reverse step it forms a denoised estimate \(\hat x_0=\hat x_0(x_t,t)\) (Algorithm~\ref{alg:eg_dps_gaussian})
and applies guidance via \(\nabla_{x_t} s(\hat x_0)\).
This guidance can be viewed as a practical surrogate for the ideal correction in~\eqref{eq:guided_score}:
it injects the intrinsic signal directly into the reverse dynamics while retaining the diffusion prior.
A key advantage is that this construction is \emph{score-agnostic}: given any differentiable intrinsic
score \(s\), one can instantiate a sampler for the corresponding KLSC tilt without additional training.

This differs fundamentally from contrastive energy prediction (CEP)~\cite{cep}.
CEP addresses the same obstacle---the unknown intermediate-time density-ratio potential---by \emph{learning}
a time-dependent predictor that approximates the diffused energy/ratio term \(E_t(\cdot)\) (equivalently,
its score contribution) from noisy states \((x_t,t)\) via a contrastive objective.
Crucially, this predictor is \emph{energy-specific}: changing the target tilt (i.e., changing \(s\) or \(\beta\))
changes the entire family \(\{E_t\}_{t\in[0,T]}\), requiring retraining or re-fitting for each new energy.

Such a requirement is incompatible with our interpretability setting.
Here the ``energy'' is not a single fixed objective but an \emph{intrinsic score family} \(\{s_k\}\) indexed by
SAE features (often thousands), together with interactive choices of reductions and target moments.
Training a separate CEP-style predictor for each \((k,\beta)\) would be prohibitively expensive and would defeat
the goal of a \emph{post-hoc, plug-and-play} interpretability paradigm.
EnergyDPS therefore trades exactness for generality: it reuses the pretrained diffusion score
\(\nabla\log p_{\theta,t}\) and injects the intrinsic signal through \(\nabla s(\hat x_0)\), enabling scalable
visualization across many SAE features without training any additional energy models.
\end{remark}

\subsection{Discrete-time viewpoint (relation to Algorithm~\ref{alg:eg_dps_gaussian})}
In practice we implement reverse-time sampling by discretizing the reverse SDE and replacing the true base
score \(\nabla\log p_{\theta,t}\) with a learned score network (or an oracle score in toy experiments).
Algorithm~\ref{alg:eg_dps_gaussian} can be read as an Euler-type update of the base reverse dynamics,
augmented by the plug-in guidance term \(\zeta_i\,\beta\,\nabla_{x_i}s(\hat x_0)\).
Under exact scores and vanishing step size, the un-guided part recovers sampling from the base prior \(p_\theta\);
the additional guidance term realizes an explicit drift perturbation that steers the trajectory toward the KLSC tilt.
We refer to Appendix~\ref{appd:exp2} for the oracle-score construction used in the GMM experiments.

\section{Experimental Details for Section~\ref{sec:constraint}}\label{appd:exp1}

This appendix provides implementation details for the toy validation summarized in
Section~\ref{sec:constraint} (Table~\ref{tab:exp_1}, Fig.~\ref{fig:exp1}).
We compare hard truncation (Definition~\ref{prop:hard_kl_strong}) and KLSC (Definition~\ref{def:klsc})
under \emph{matched expected activation} $\E[s(X)]$, and evaluate boundary bias, threshold instability,
and interpretability.

\subsection{Baseline distribution and intrinsic score}

\paragraph{Baseline prior $p$.}
We use a symmetric 2D Gaussian mixture model with four modes:
\begin{equation}
p(x)=\frac{1}{4}\sum_{\mu\in\{(\pm2,\pm2)\}}\mathcal{N}(x;\mu,\sigma^2 I_2),
\qquad \sigma=0.8.
\end{equation}
Sampling from $p$ is straightforward by first sampling a mode uniformly, then sampling from the
corresponding Gaussian.

\paragraph{Intrinsic score $s(x)$.}
Let $x=(x_1,x_2)\in\R^2$ and define the polar angle $\theta(x)=\mathrm{atan2}(x_2,x_1)\in(-\pi,\pi]$.
Fix $\theta_0=\pi/4$ and define the angular deviation score
\begin{equation}
s(x)\coloneqq 4\big[1-\cos(\theta(x)-\theta_0)\big].
\end{equation}
Thus $s(x)$ is minimized at $\theta(x)=\theta_0$ and increases as the direction deviates from the diagonal.

\subsection{Induced distributions}

\paragraph{Hard constraint (truncation).}
For a truncation threshold $m'\in\R$, define the event
\(
E_{m'}:=\{x\in\cX:\ s(x)>m'\}
\)
and the induced hard-constraint distribution
\begin{equation}
q_{m'}^{\mathrm{hard}}(x)=p(x\mid E_{m'})=\frac{p(x)\mathbf{1}_{E_{m'}}(x)}{p(E_{m'})}.
\end{equation}
In practice, we sample $q_{m'}^{\mathrm{hard}}$ via rejection sampling from $p$.

\paragraph{KLSC (exponential tilting).}
For $\beta\ge0$, the KLSC solution has density
\begin{equation}
q_\beta^{\mathrm{KLSC}}(x)=\frac{p(x)\exp(\beta s(x))}{Z(\beta)},
\qquad
Z(\beta)=\E_{X\sim p}\big[\exp(\beta s(X))\big].
\end{equation}
We sample from $q_\beta^{\mathrm{KLSC}}$ using self-normalized importance sampling with proposal $p$.

\subsection{Matched-moment calibration protocol}

To isolate differences due to the constraint formulation (rather than different faithfulness levels),
we sweep a set of target expected activations $\{m\}$ (the same values reported in Table~\ref{tab:exp_1}),
and for each $m$ calibrate the hard and soft families to satisfy
\begin{equation}
\E_{q_{m'}^{\mathrm{hard}}}[s(X)]=\E_{q_\beta^{\mathrm{KLSC}}}[s(X)]=m.
\label{eq:moment_match_appx}
\end{equation}

\paragraph{Calibrating the hard threshold $m'$.}
Define
\begin{equation}
\mu_{\mathrm{hard}}(m')\coloneqq \E_{X\sim p}\!\left[s(X)\mid s(X)>m'\right].
\end{equation}
We solve $\mu_{\mathrm{hard}}(m')=m$ by bisection over $m'$ using Monte Carlo estimates under $p$.
(We use a bracket $[m'_{\min},m'_{\max}]$ that covers the score range observed under $p$.)

\paragraph{Calibrating the tilt strength $\beta$.}
Define
\begin{equation}
\mu_{\mathrm{KLSC}}(\beta)\coloneqq \E_{X\sim q_\beta^{\mathrm{KLSC}}}[s(X)].
\end{equation}
We solve $\mu_{\mathrm{KLSC}}(\beta)=m$ by bisection over $\beta\ge0$.
Using samples $\{x_i\}_{i=1}^N\sim p$, we estimate $\mu_{\mathrm{KLSC}}(\beta)$ via importance weights
$w_i(\beta)=\exp(\beta s(x_i))$:
\begin{equation}
\widehat{\mu}_{\mathrm{KLSC}}(\beta)
=
\frac{\sum_{i=1}^N w_i(\beta)\,s(x_i)}{\sum_{i=1}^N w_i(\beta)}.
\end{equation}

\subsection{Metrics}

\paragraph{Boundary bias.}
We report boundary mass (Definition~\ref{def:boundary_mass}) with margin $\delta=0.05$:
\begin{equation}
\mathrm{BM}_{m',\delta}(q)
=
\frac{q(B_{m',\delta})}{q(E_{m'})},
\qquad
B_{m',\delta}:=\{x:\ m'<s(x)<m'+\delta\}.
\end{equation}

\paragraph{Threshold instability.}
For a small perturbation $\epsilon>0$ (we use a fixed $\epsilon=0.08$ throughout all runs), we re-calibrate
each constraint family to match $m$ and $m+\epsilon$ in \eqref{eq:moment_match_appx}, producing
$q_m$ and $q_{m+\epsilon}$. We measure the induced shift by
\begin{equation}
\mathrm{TV}(q_m,q_{m+\epsilon})=\frac{1}{2}\int_{\cX}\big|q_m(x)-q_{m+\epsilon}(x)\big|\,dx.
\end{equation}
(We use TV as a finite proxy because the KL divergence between truncated distributions can become infinite under small threshold perturbations.)

\section{Implementation Details for Toy Sampling Validation}\label{appd:exp2}

This appendix provides implementation details for the toy sampling comparison shown in
Fig.~\ref{fig:exp2}.
The goal is to compare samplers under a \emph{matched-faithfulness} protocol, i.e., all methods are
calibrated to match the same target moment $\E[s]\in\{5,6,7\}$, and then evaluated by their
deviation from the reference prior $p$.

\paragraph{Baseline prior $p$ and intrinsic score $s$.}
We use the same 2D GMM prior as in the toy constraint validation.
Let $x\in\R^2$ and define
\begin{equation}
p(x)=\frac{1}{K}\sum_{k=1}^{K}\cN(x;\mu_k,\sigma^2 I_2),
\qquad
K=4,\ \ \mu_k\in\{(\pm2,\pm2)\},\ \ \sigma=0.8.
\end{equation}
The intrinsic score is the angular deviation score
\begin{equation}
s(x)=4\Bigl(1-\cos\bigl(\theta(x)-\tfrac{\pi}{4}\bigr)\Bigr),
\qquad
\theta(x)=\arctan2(x_2,x_1)\in(-\pi,\pi].
\end{equation}

\paragraph{Matched-faithfulness calibration.}
For each target level $m\in\{5,6,7\}$, each method is calibrated to satisfy
$\widehat{\E}[s]\approx m$ using the \emph{same sampling budget} across methods.
We use a tolerance $\varepsilon_m$ and stop calibration when
\(
|\widehat{\E}[s]-m|\le \varepsilon_m
\).
Each method has a method-specific knob:
top-$k$ size for retrieval, regularization weight $\lambda$ for OR, and guidance strength
$\gamma$ for EnergyDPS
(Exact budgets and tolerances are fixed across all $m$; see the released code).

\subsection{Methodology I: Top-activation retrieval (finite pool)}
We draw a candidate pool ${\bf D}_n=\{x_i\}_{i=1}^{n}$ i.i.d.\ from $p$ and compute $\{s(x_i)\}$.
We sort the pool by $s(x_i)$ in descending order and select the top-$k$ points with largest scores.
The integer $k$ is chosen (via a prefix search over the sorted list) such that the empirical mean
of $s$ over the selected set matches the target:
\begin{equation}
\widehat{\E}_{\mathrm{top}\text{-}k}[s]
=\frac{1}{k}\sum_{i=1}^{k} s\bigl(x_{(i)}\bigr)\approx m,
\end{equation}
where $x_{(i)}$ is the $i$-th largest-scoring element.
We treat the selected set as samples from retrieval. This finite-pool procedure mimics a finite
dataset or bounded search budget used in practice.

\subsection{Methodology II: Optimization with regularization (OR)}
OR generates samples via multi-start optimization from random initializations in a bounded box
(e.g., $[-5,5]^2$), using the objective in Eq.~\eqref{eq:opt_obj}.
In this toy experiment, we use the regularizer
\begin{equation}
R(x)=-x|_2,
\end{equation}
where $x|_2$ is the $2$-nd index of $x$, i.e., the index in y-axis.
And, we tune the regularization weight $\lambda$ so that the resulting optimized solutions (aggregated
over random restarts) satisfy $\widehat{\E}[s]\approx m$.
We treat the collection of optimized solutions as samples from OR.

\subsection{Methodology III: EnergyDPS with an oracle diffusion score}
EnergyDPS requires a diffusion score model $u_\theta(x_i,i)\approx \nabla_{x_i}\log p_i(x_i)$ at
each noise level $i$.
In the toy GMM setting, we use a closed-form oracle score.

\subparagraph{Forward noising and the noised prior $p_i$.}
We adopt a standard DDPM forward process:
\begin{equation}
x_i=\sqrt{\bar\alpha_i}\,x_0+\sqrt{1-\bar\alpha_i}\,\varepsilon,
\qquad
x_0\sim p,\ \ \varepsilon\sim\cN(0,I_2),
\end{equation}
where $\alpha_i=1-\beta_i^{\mathrm{ddpm}}$ and $\bar\alpha_i=\prod_{j=1}^{i}\alpha_j$.
Since Gaussian convolution preserves mixture structure, the marginal $p_i$ remains a $K$-component GMM:
\begin{equation}
p_i(x)=\frac{1}{K}\sum_{k=1}^{K}\cN\!\bigl(x;\ \sqrt{\bar\alpha_i}\mu_k,\ \Sigma_i\bigr),
\qquad
\Sigma_i:=\bigl(\bar\alpha_i\sigma^2+(1-\bar\alpha_i)\bigr)I_2.
\label{eq:noised_gmm}
\end{equation}

\subparagraph{Closed-form oracle score $u_i(x)=\nabla\log p_i(x)$.}
Let $\phi_{i,k}(x):=\cN\!\bigl(x;\sqrt{\bar\alpha_i}\mu_k,\Sigma_i\bigr)$ and define responsibilities
\begin{equation}
w_{i,k}(x):=\frac{\phi_{i,k}(x)}{\sum_{j=1}^{K}\phi_{i,j}(x)},
\qquad
\sum_{k=1}^{K}w_{i,k}(x)=1.
\end{equation}
Then the score admits the closed form
\begin{equation}
u_i(x):=\nabla_x\log p_i(x)
=\sum_{k=1}^{K} w_{i,k}(x)\,\nabla_x\log \phi_{i,k}(x)
=\Sigma_i^{-1}\Bigl(\sum_{k=1}^{K}w_{i,k}(x)\,\sqrt{\bar\alpha_i}\mu_k-x\Bigr).
\label{eq:oracle_score}
\end{equation}
In Algorithm~\ref{alg:eg_dps_gaussian} we set $u_\theta(\cdot,i)\equiv u_i(\cdot)$.

\subparagraph{Intrinsic guidance and calibration.}
At each reverse step, EnergyDPS forms a predicted clean image $\hat x_0=\hat x_0(x_i,i)$ and injects
intrinsic guidance using $\nabla_{x_i}s(\hat x_0)$ with step size $\{\zeta_i\}$, as in
Algorithm~\ref{alg:eg_dps_gaussian}.
The guidance strength $\gamma$ is calibrated so that the resulting samples satisfy
$\widehat{\E}[s]\approx m$ for each $m\in\{5,6,7\}$.

\subsection{Estimating $\KL(q\|p)$ from samples}
For sample-based methods, the induced distribution is represented by a finite set of samples and is
not absolutely continuous with respect to $p$, making $\KL(q\|p)$ ill-defined if interpreted
literally.
To obtain a well-defined and comparable metric, we compute KL on a \emph{smoothed density estimate}
shared across methods.
Given samples $\{x_j\}_{j=1}^{N}$ from a method, we form a KDE
\begin{equation}
\tilde q(x)=\frac{1}{N}\sum_{j=1}^{N}\cN(x;x_j,h^2 I_2),
\end{equation}
with a fixed bandwidth $h$ (the same $h$ for all methods).
We then approximate $\KL(\tilde q\|p)$ by numerical integration on a fixed grid over the displayed
domain (same grid and integration rule for all methods).
This estimator is used to produce the values reported in Figure~\ref{fig:exp2}.

\section{Implementation Details for DINOv3 Experiments}
\label{appd:exp3}

This appendix specifies the exact experimental configurations used in
Section~\ref{sec:exp}, including (i) transcoder training on DINOv3 activations,
(ii) intrinsic-score construction and stabilization, and (iii) method-specific
settings for activation maximization (MACO \cite{maco}), proxy-guided generation (DiffExplainer \cite{diffexplainer}),
and EnergyDPS.

\subsection{Transcoder (SAE Variant) Training}
\paragraph{Backbone and hook point.}
We use DINOv3 ResNeXt Large~\citep{dinov3} and extract activations at the
$20$-th layer of Stage~2.
For an input image $x\in\mathcal{X}$, the hooked feature map has shape
$d\times H\times W = 768\times 16\times 16$.
We sample $1{,}000{,}000$ images from ImageNet-1K, resized to
$256\times 256$.

\paragraph{Training objective and hyperparameters.}
We train a transcoder (Appendix~\ref{appd:sae}) with {top-$64$} sparsification
and {reconstruction loss only} (no auxiliary losses).
We use learning rate $1\times 10^{-4}$.
Training is continued until the explained-variance target
$\mathrm{EV}= 0.85$ is reached.

\subsection{Intrinsic Score from Transcoder Activations: Mix Reduction}
Let $A_k(x)\in\mathbb{R}^{H\times W}$ denote the spatial activation map of transcoder
feature $k$ on input $x$ (after encoding/decoding as used in our implementation).
Unless otherwise stated, we reduce $A_k(x)$ to a scalar intrinsic score via a
\emph{mix} operator:
\begin{equation}
\label{eq:mix_reduce}
s_k(x)
=
\alpha \cdot \max_{u,v} A_k(x)_{u,v}
+
(1-\alpha)\cdot \frac{1}{HW}\sum_{u,v}\max\!\big(A_k(x)_{u,v},0\big),
\end{equation}
with $\alpha=0.3$.
This mixed reduction empirically improves stability compared to pure max-reduction,
and helps suppress repetitive texture artifacts.

\paragraph{Guidance control via cap strategy.}
To avoid occasional extreme activations dominating guidance, we adopt a
\texttt{cap\_value} control: when the intrinsic score exceeds a pre-specified
cap (relative to the target activation level used for faithfulness matching),
we \emph{disable} the intrinsic-score supervision term for that step (i.e., we
set the guidance contribution from $\nabla s_k(\cdot)$ to zero).
This prevents over-steering once the constraint is already met.

\subsection{Baselines}

\subsubsection{MACO: Masked Augmentation--Consistency Optimization}
\label{sec:maco}

We instantiate the optimization-based paradigm using \textbf{MACO} \cite{maco}, which synthesizes an input by
directly optimizing pixels to increase the target intrinsic score while enforcing stability under
simple transformations. Concretely, starting from random initialization, we run {$1000$
optimization steps} of gradient ascent on a stabilized objective, using the same intrinsic score
$s(x)$ as in our framework.

\subsubsection{Proxy-Guided Generation: DiffExplainer}
We implement the soft-prompt optimization variant of DiffExplainer~\citep{diffexplainer}.
We use {$5$ soft tokens} with a fixed prefix, optimizing the prompt
\begin{equation}
\texttt{``photo of <SOFT PROMPT> <SOFT PROMPT> <SOFT PROMPT> <SOFT PROMPT> <SOFT PROMPT>''}.
\end{equation}
All remaining settings follow the original DiffExplainer protocol \cite{diffexplainer}.

\begin{figure}[h]
    \scriptsize
    
	\setlength{\tabcolsep}{3pt}
    \begin{center}
	\begin{tabular}{ccccc}

        \includegraphics[width=0.15\textwidth]{figures/exp3/9683/top_samples_feature_9863_act_14.66.png} &
        \includegraphics[width=0.15\textwidth]{figures/exp3/9683/maco_feature_9863_reduce_mix_act_11.65.png}&
        \includegraphics[width=0.15\textwidth]{figures/exp3/9683/diffexplainer_feature_9863_act_16.53_best.png} &
        \includegraphics[width=0.15\textwidth]{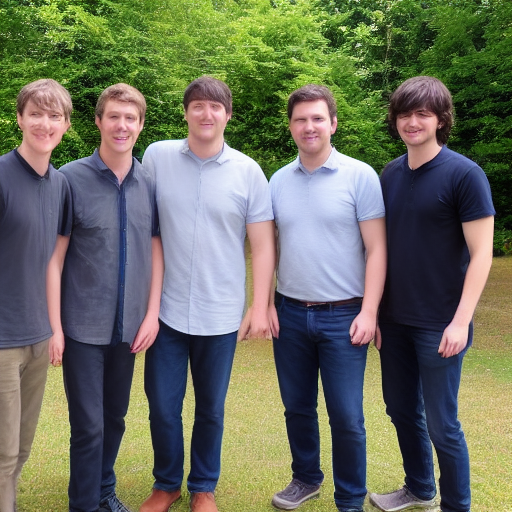} &
        \includegraphics[width=0.15\textwidth]{figures/exp3/9683/dps_feature_9863_act_15.81.png} \\

        Top-$4$ samples & 
        \makecell[c]{MACO} & 
        \makecell[c]{DiffExplainer} & 
        {DEXTER (fail)} &
        EnergyDPS (ours)\\

	\end{tabular}
	\caption{\textbf{DEXTER fails on SAE feature visualization.}
Qualitative comparison for a DINOv3 SAE feature (index $9863$). We report representative outputs
from Top-$K$ dataset search, MACO, DiffExplainer, DEXTER (hard-prompt optimization), and EnergyDPS.
While other methods can produce samples with high feature activation, DEXTER frequently collapses
to low-activation outputs (final activation $s(x)= 1.4$ in our run), indicating a loss of
faithfulness under feature-level objectives.}

    \label{fig:dexter} 
    \end{center}
\end{figure}

\paragraph{Note on DEXTER (hard-prompt optimization).}
DEXTER \cite{dexter} optimizes \emph{discrete} (hard) prompts. In our feature-level setting, hard-prompt
optimization was unstable and often failed to maintain the target SAE activation.
In Fig. \ref{fig:dexter}, for feature $9863$, the best output we obtained reaches only $s(x)= 1.4$, where others reach $\sim15$.
This behavior is consistent with a \emph{stronger
information bottleneck} of hard prompts: restricting the search to a discrete prompt family
can make it difficult to realize fine-grained, non-linguistic feature semantics, and the optimizer
may converge to prompt-induced generations that are visually plausible but not faithful to the
intrinsic feature objective.

\subsection{EnergyDPS}
We follow the DPS procedure~\citep{dps} without modification to the diffusion
schedule and sampler details.
The hyperparameters in Algorithm \ref{alg:eg_dps_gaussian} (e.g., $N$, $\{\beta^\mathrm{ddpm}_i\}_{i=1}^N$, $\{\tilde\sigma_i\}_{i=1}^{N}$, and $\{\zeta_i\}_{i=1}^{N}$) follow the DPS.
We use an ImageNet-$256$ diffusion model checkpoint and set the EnergyDPS guidance
learning rate to $0.1$.
The intrinsic guidance is applied through $s_k(\hat x_0)$ (with the mix reduction
in Eq.~\eqref{eq:mix_reduce}).

\section{Blinded Interpretability Study Settings by Different Raters}\label{appd:exp_human}

We conduct a blinded interpretability study to evaluate whether the generated visualization sets convey clear and consistent shared semantics. For each SAE feature and each compared method, we arrange four generated images into a single $2\times 2$ grid. Raters are shown only the grid image, without access to the method name or feature index. Each rater is asked to identify the most salient concept shared by the four images and assign an interpretability score.

\paragraph{Scoring rubric.}
The interpretability score is defined on a $1$--$4$ scale:
\begin{itemize}
    \item $1$: the shared concept cannot really be seen, or the images are
    too unclear or uninformative;
    \item $2$: there is only a weak or vague hint of a shared concept;
    \item $3$: the shared concept is reasonably visible and supported by
    useful visual evidence;
    \item $4$: the shared concept is very clear and strongly supported by
    visually informative images.
\end{itemize}
A higher score indicates that the generated image set provides clearer and
more visually informative evidence for a shared semantic concept.

\paragraph{Human raters.}
Human raters follow the same scoring rubric. For each $2\times 2$ grid,
they are asked to provide: (i) a short free-form description of the shared
semantic meaning, and (ii) an interpretability score according to the above
rubric.

\paragraph{AI raters.}
For AI raters, we use the same visual input and scoring rubric. The system
prompt is:

\begin{quote}
\small
You are a visual concept summarizer. 

The user will provide one image arranged as a 2x2 grid containing four sub-images. Identify the most salient
shared concept across all provided images. Focus only on what is common to
all images, and ignore image-specific details. Then rate how clearly this
shared concept is supported by the image on a scale from 1 to 4, where
1 means the concept cannot really be seen or the images are too unclear or
uninformative, 2 means there is only a weak or vague hint of a shared
concept, 3 means the shared concept is reasonably visible and supported by
useful visual evidence, and 4 means the shared concept is very clear and
strongly supported by visually informative images. Return exactly two lines
in the following format: concept: \textless short sentence under 20
words\textgreater{} score: \textless 1-4 integer\textgreater. Do not provide
any explanation or extra text.
\end{quote}

The user prompt is:

\begin{quote}
\small
This is a single 2x2 grid image containing four related sub-images. Identify
the shared concept they point to, and rate how clearly that concept is
visible from 1 to 4. Return exactly in this format:

concept: \textless short sentence\textgreater

score: \textless 1-4 integer\textgreater
\end{quote}

We parse the returned concept description and score from the two-line output.
The concept descriptions are further used to compute the weighted variance
in CLIP text-embedding space, while the scores are used as the corresponding
weights.

Detailed results for Table~\ref{tab:exp_4} are reported in Table~\ref{tab:appd_exp_4}.

\begin{table}[t]
    \caption{Detailed quantitative evaluation on DINOv3 by the blinded interpretability study (interpretability score $\uparrow$, weighted variance of description $\downarrow$).}
    \label{tab:appd_exp_4}
    \setlength{\tabcolsep}{5 pt}
    \centering
    \begin{tabular}{lccccccc}
        \toprule
        \multirow{2}{*}{Methods} & \multicolumn{6}{c}{Interpretability score} & \multirow{2}{*}{Weighted var.}\\
         & Human 1 & Human 2 & Human 3 & Human 4 & GPT & Gemini & \\
        \midrule
        MACO & 2.77 & 1.97 & 1.02 & 2.31 & 2.09 & 2.57 & 0.2917\\
        DiffExplainer & 1.77 & 1.85 & 2.24 & 1.31 & 2.14 & 1.90 & 0.2643\\
        EnergyDPS & \textbf{2.90} & \textbf{2.31} & \textbf{3.57} & \textbf{2.73} & \textbf{2.90} & \textbf{3.09} & \textbf{0.2505}\\
        \bottomrule
    \end{tabular}
    \vspace{-0.5cm}
\end{table}

\section{Additional Experiments for DINOv3}
\label{appd:exp_dinov3}
\subsection{Superposition in Neurons}
\label{appd:exp_dinov3_neuron}
\begin{figure}[h]
    \scriptsize
    
	\setlength{\tabcolsep}{3pt}
    \begin{center}
	\begin{tabular}{cccc}

        \includegraphics[width=0.20\textwidth]{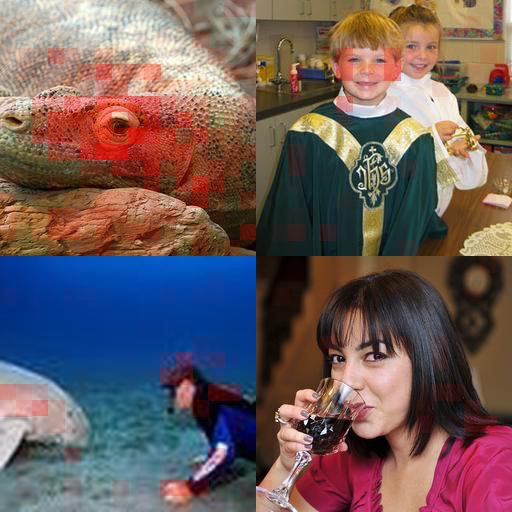} &
        \includegraphics[width=0.20\textwidth]{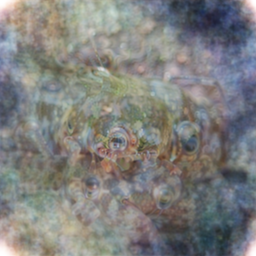}&
        \includegraphics[width=0.20\textwidth]{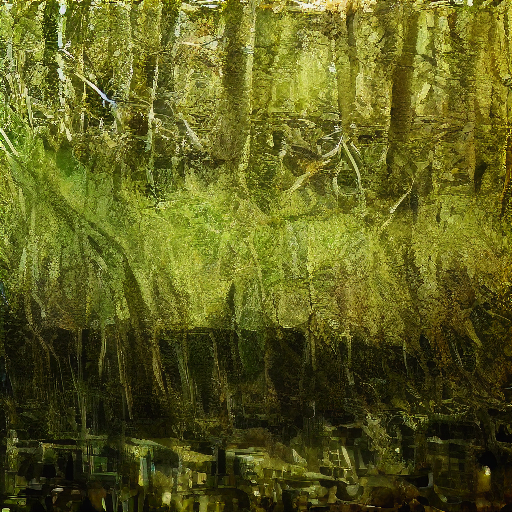} &
        \includegraphics[width=0.20\textwidth]{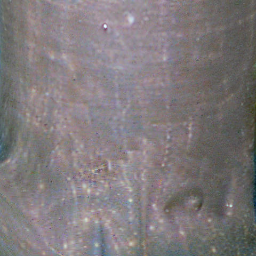} \\

        Top-$4$ samples & 
        \makecell[c]{MACO} & 
        \makecell[c]{DiffExplainer} & 
        EnergyDPS (ours)\\

	\end{tabular}
	\caption{\textbf{Neuron-level interpretation is unstable.}
    Top-$4$ retrieval and three synthesis methods (MACO, DiffExplainer, EnergyDPS) for the same neuron yield heterogeneous or texture-like results, consistent with superposition and a lack of a single coherent semantic.}

    \label{fig:neuron} 
    \end{center}
\end{figure}

\begin{figure}[h]
    \scriptsize
    
	\setlength{\tabcolsep}{3pt}
    \begin{center}
	\begin{tabular}{ccccc}

        \includegraphics[width=0.18\textwidth]{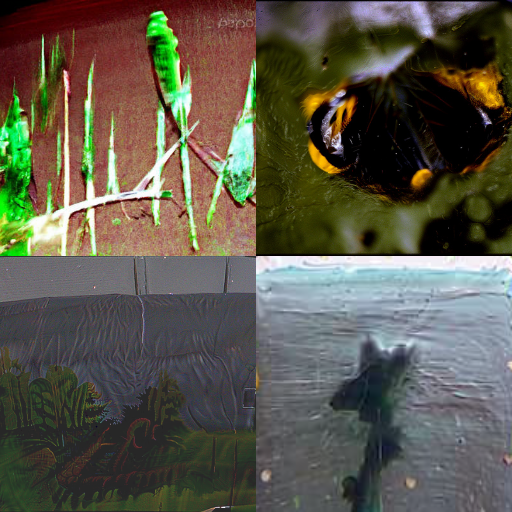} &
        \includegraphics[width=0.18\textwidth]{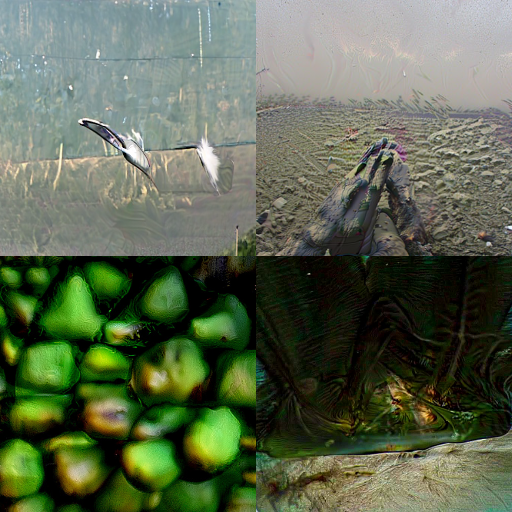}&
        \includegraphics[width=0.18\textwidth]{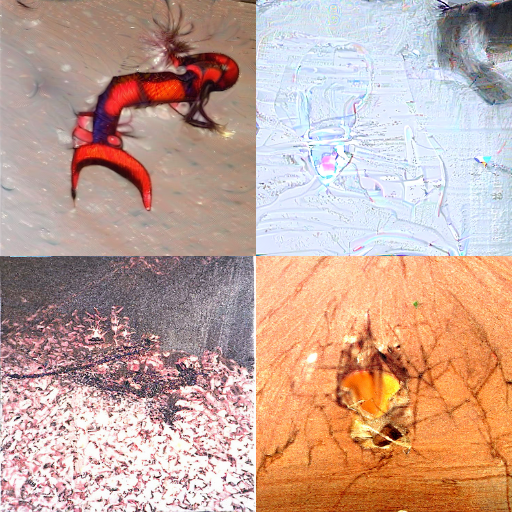} &
        \includegraphics[width=0.18\textwidth]{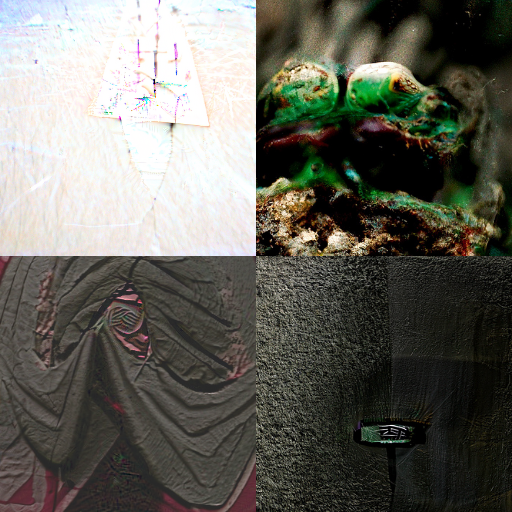} &
        \includegraphics[width=0.18\textwidth]{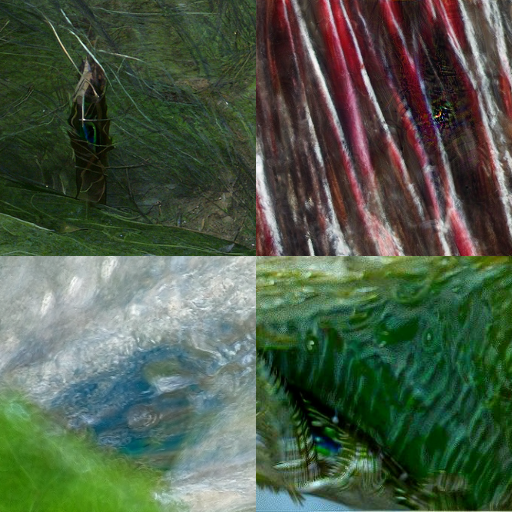} \\
        Neuron 3 & 
        Neuron 61 & 
        Neuron 214 & 
        Neuron 549 & 
        Neuron 685\\

        \includegraphics[width=0.18\textwidth]{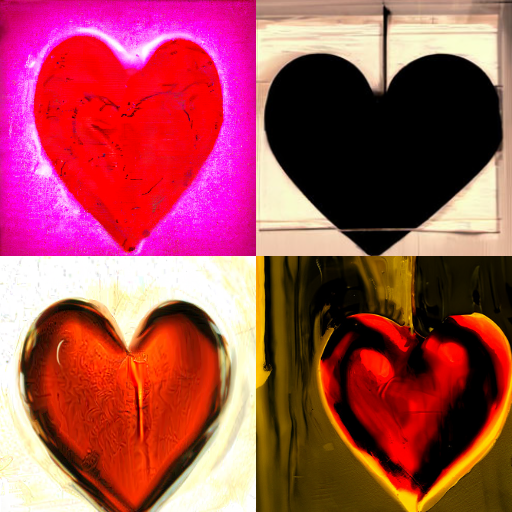} & 
        \includegraphics[width=0.18\textwidth]{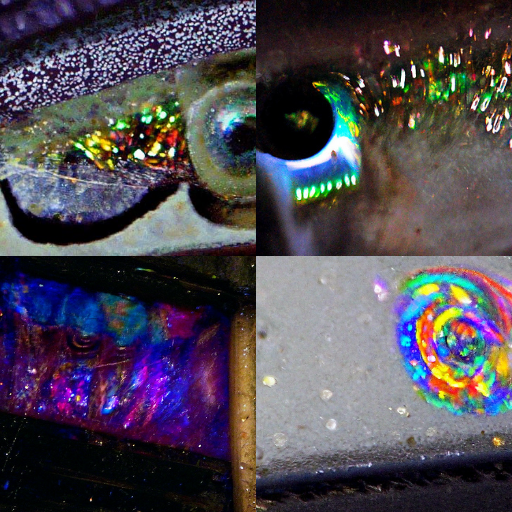} &
        \includegraphics[width=0.18\textwidth]{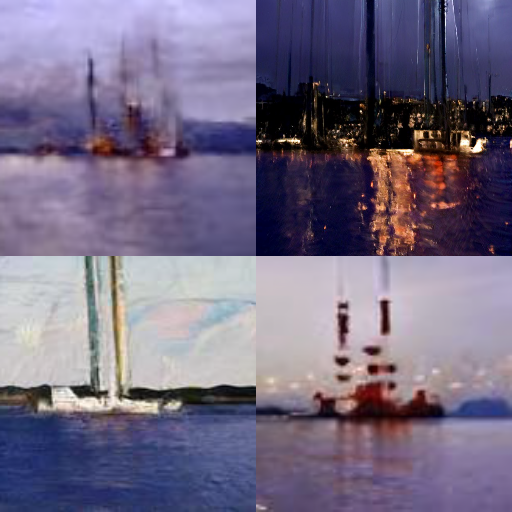}&
        \includegraphics[width=0.18\textwidth]{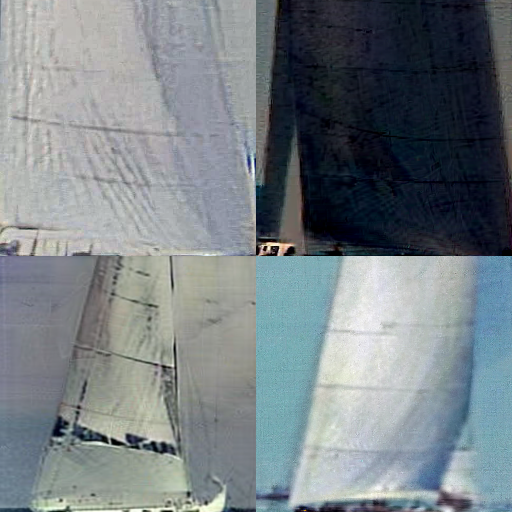} &
        \includegraphics[width=0.18\textwidth]{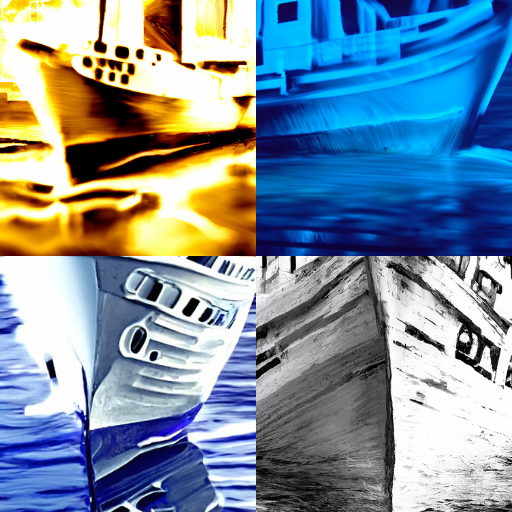} \\
        Feature 81 & 
        Feature 413 & 
        Feature 4252 & 
        Feature 5047 & 
        Feature 7680\\

	\end{tabular}
	\caption{\textbf{Randomly selected neurons vs. SAE features under EnergyDPS.} Each unit is visualized by four images generated using the same EnergyDPS sampler. The top row shows randomly selected individual neurons, and the bottom row shows randomly selected SAE features.}

    \label{fig:neuron_all} 
    \end{center}
\end{figure}

\begin{table}[h]
    \centering
    \caption{Pilot automated evaluation results (interpretability score $\uparrow$, success rate $\uparrow$).}
    \begin{tabular}{lcc}
        \toprule
        Methods & Mean interpretability score & Success rate (\%)\\
        \midrule
        Neurons & 2.24 & 28 \\
        SAE Features & 2.90 & 74 \\
        \bottomrule
    \end{tabular}
    \label{tab:neuron}
\end{table}

Fig. \ref{fig:neuron} visualizes a representative neuron using retrieval and three generation/optimization baselines.
The Top-$4$ highest-activating dataset samples span heterogeneous semantics, and the synthesized images vary drastically across methods, frequently collapsing to texture-like patterns.
This lack of a consistent, single-mode semantic explanation is \emph{suggestive of neuron-level superposition}, where one unit mixes multiple underlying factors.

As shown in Fig.~\ref{fig:neuron_all}, neuron-level visualizations (first row) often exhibit weaker cross-sample semantic consistency: the four generated images for the same neuron may emphasize different local textures, object fragments, or mixed visual cues. This suggests that individual neurons do not always provide a stable
single-concept object for mechanistic interpretation, which is consistent with the superposition view that one neuron may entangle multiple latent factors. 
In contrast, SAE feature visualizations tend to show more coherent shared visual evidence across samples. This comparison supports our choice of sparse dictionary features, rather than individual neurons, as the primary
object of study.

To further evaluate the superposition in neurons, we conducted a small pilot automated evaluation on generated visualization sets for 100 randomly sampled SAE features and 100 randomly sampled neurons.
For each object (feature or neuron), GPT-5.4 was shown a set of four generated images via EnergyDPS and asked to assign an interpretability score on a 1–4 scale (details in Appendix \ref{appd:exp_human}), where higher means that the shared concept is clearer and the images are more visually informative. 
We use 2.5 as a conservative threshold to calculate the success rate below. The results (shown in Table \ref{tab:neuron}) indicate that SAE features are substantially more interpretable than neurons under this protocol. 

Consequently, neuron activations can be an unreliable object of study for mechanistic interpretation, motivating the use of sparse dictionary features (e.g., SAE features) that better isolate individual factors.

\subsection{More Ablation Experiments}
\subsubsection{Ablation Experiment on the SAE layer} \label{appd:exp_ablation}
\begin{table}[h]
    \centering
    \caption{Quantitative evaluation across different layers (QualiCLIP $\uparrow$, NRQM $\uparrow$).}
    \begin{tabular}{clcc}
        \toprule
        Layer & Methods & QualiCLIP & NRQM \\
        \midrule
        \multirow{3}{*}{5} 
        & MACO & 0.4440 & 5.4322\\
        & DiffExplainer & 0.3527 & 6.8261\\
        & EnergyDPS & \textbf{0.7187} & \textbf{8.0481}\\
        \midrule
        \multirow{3}{*}{20} 
        & MACO & 0.4228 & 4.4869\\
        & DiffExplainer & 0.3259 & 6.7460\\
        & EnergyDPS & \textbf{0.5791} & \textbf{6.7821}\\
        \midrule
        \multirow{3}{*}{25} 
        & MACO & 0.4274 & 4.4635\\
        & DiffExplainer & 0.3170 & 5.8579\\
        & EnergyDPS & \textbf{0.4283} & \textbf{7.8664}\\
        \bottomrule
    \end{tabular}
    \label{tab:layer}
\end{table}
For the main study, we selected stage-2 layer-20 as a representative intermediate layer in DINOv3 ResNeXt Large and used it as the primary testbed to study EnergyDPS. To examine robustness to layer choice, we additionally trained transcoders on other layers and evaluated the corresponding features. The additional results (shown in Table \ref{tab:layer}) show that EnergyDPS remains competitive on both a lower layer (e.g., layer 5) and a higher layer (e.g., layer 25), suggesting that the main conclusions are not tied to the specific choice of layer 20.

\subsubsection{Ablation Experiment on Different Diffusion Priors}
\begin{table}[h]
    \centering
    \caption{Quantitative evaluation (QualiCLIP $\uparrow$, NRQM $\uparrow$) on different diffusion priors.}
    \begin{tabular}{lcc}
        \toprule
        Methods & QualiCLIP & NRQM \\
        \midrule
        MACO & 0.4228 & 4.4869\\
        DiffExplainer & 0.3259 & 6.7460\\
        EnergyDPS-FFHQ & 0.4790 & 6.7421\\
        EnergyDPS & 0.5791 & 6.7821\\
        
        \bottomrule
    \end{tabular}
    \label{tab:exp_3_ffhq}
\end{table}
The diffusion prior does affect the interpretation results, and the generated explanations should be understood relative to the chosen prior. To examine this dependence, we additionally evaluated a DDPM pretrained on FFHQ \cite{ffhq} (denoted EnergyDPS-FFHQ), which is trained only on 70k face images, as an alternative prior for the experiment corresponding to Table \ref{tab:exp_3} of the main paper. Compared with the ImageNet-pretrained prior, the explanation quality drops noticeably (shown in Table \ref{tab:exp_3_ffhq}). This suggests that a narrower or more biased prior can indeed distort the posterior samples and degrade explanatory quality. 

\subsubsection{Ablation Experiment on Strength of the Diffusion Prior}
\begin{table}[h]
    \centering
    \caption{\textbf{Induced distribution distance under different guidance strengths.} For feature $81$, we generate an unguided reference set of $2$k prior samples, an independent unguided evaluation set, and $2$k EnergyDPS-guided samples under different guidance strengths. We report mean activation and compute KID against the unguided reference set.}
    \begin{tabular}{lcc}
        \toprule
        Sample source & Mean activation & KID \\
        \midrule
        Unguided set & 1.23e-03 & 0.32 \\
        \midrule
        \multirow{3}{*}{EnergyDPS} & 6.17 & 29.7 \\
        & 10.28 & 49.47 \\
        & 16.17 & 64.66 \\
        \bottomrule
    \end{tabular}
    \label{tab:kid}
\end{table}
To examine how guidance strength changes the induced distribution, we generate an unguided reference set of $2$k prior samples, an independent unguided evaluation set, and $2$k EnergyDPS-guided samples for feature $81$. We report mean activation and compute KID against the unguided reference set; the unguided row therefore measures the finite-sample discrepancy between two independent prior draws. Table~\ref{tab:kid} shows that KID increases monotonically as the achieved mean activation increases. This indicates that stronger intrinsic guidance progressively moves the sampled distribution away from the unguided diffusion prior, which is consistent with the KLSC view that EnergyDPS samples from a feature-induced posterior rather than simply reproducing the prior.

\begin{figure}[h]
    \scriptsize
    
	\setlength{\tabcolsep}{0pt}
    \begin{center}
	\begin{tabular}{cccccc}

        \includegraphics[width=0.16\textwidth, height=0.16\textwidth]{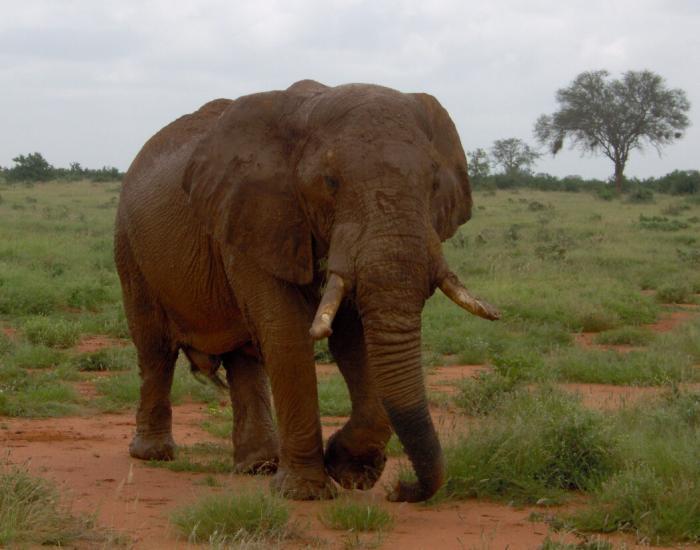} &
        \includegraphics[width=0.16\textwidth]{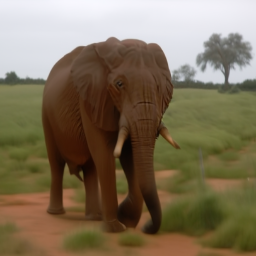} &
        \includegraphics[width=0.16\textwidth]{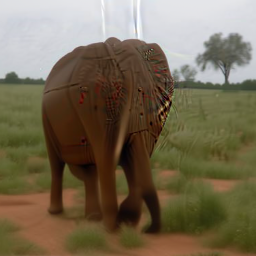} &
        \includegraphics[width=0.16\textwidth]{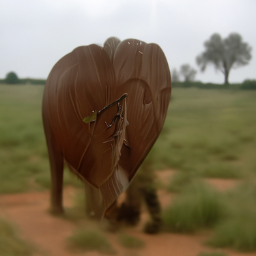}&
        \includegraphics[width=0.16\textwidth]{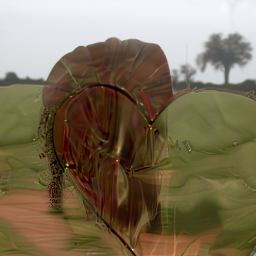} &
        \includegraphics[width=0.16\textwidth]{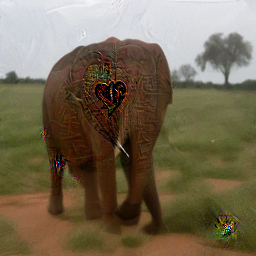}
         \\

        Reference & 
        \makecell[c]{$s(x)=4.84$\\$\mathrm{LPIPS}=0.23$} & 
        \makecell[c]{$s(x)=6.21$\\$\mathrm{LPIPS}=0.26$} & 
        \makecell[c]{$s(x)=12.75$\\$\mathrm{LPIPS}=0.31$} & 
        \makecell[c]{$s(x)=19.71$\\$\mathrm{LPIPS}=0.32$} & 
        \makecell[c]{$s(x)=26.46$\\$\mathrm{LPIPS}=0.35$} 
        \\
	\end{tabular}
    \caption{\textbf{Image-level deviation under increasing guidance.} For feature $81$, we compare EnergyDPS samples generated with different activation levels against a reference image and report LPIPS values.}
    \label{fig:lpips} 
    \end{center}
\end{figure}

We further examine how the generated samples change as the intrinsic guidance becomes stronger. Specifically, for feature $81$, we start from a reference image and generate EnergyDPS samples under different guidance strengths, resulting in samples with progressively increasing activation scores $s(x)$. We then compute LPIPS between each generated sample and the reference image to quantify the perceptual image-level deviation. The results are shown in Fig.~\ref{fig:lpips}. As the achieved activation increases, LPIPS also increases, indicating that stronger EnergyDPS guidance induces larger perceptual deviations from the reference while moving the sample toward higher feature activation.

\subsection{An Illustrative Case for Semantic Shift across Activation Levels}
\label{appd:exp_dinov3_shift}

\begin{figure}[h]
    \scriptsize
    
	\setlength{\tabcolsep}{0pt}
    \begin{center}
	\begin{tabular}{cccccc}

        \includegraphics[width=0.16\textwidth]{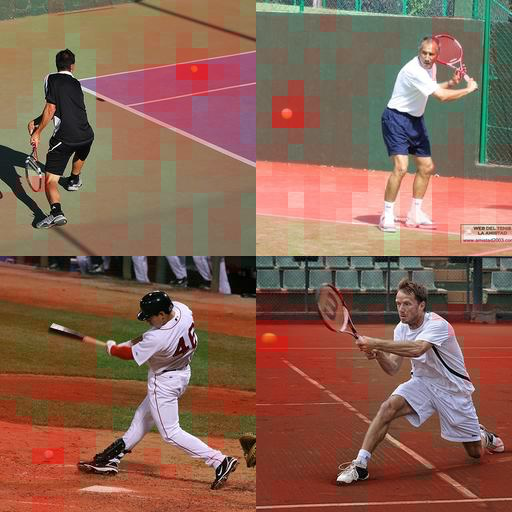}&
        \includegraphics[width=0.16\textwidth]{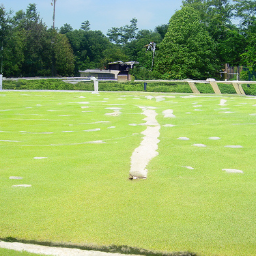}&
        \includegraphics[width=0.16\textwidth]{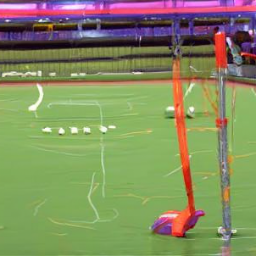}&
        \includegraphics[width=0.16\textwidth]{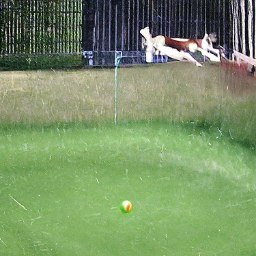}&
        \includegraphics[width=0.16\textwidth]{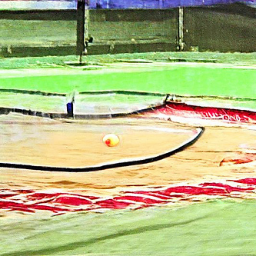}&
        \includegraphics[width=0.16\textwidth]{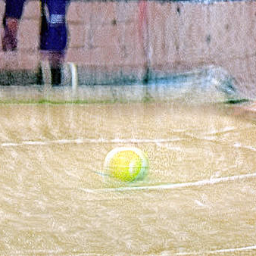}
         \\

        Top-$4$ sample ($s(x)=13.33$) &
        $s(x)=5.39$ & 
        $s(x)=13.24$ & 
        $s(x)=15.53$ & 
        $s(x)=20.62$ & 
        $s(x)=26.82$ 
        
	\end{tabular}
	\caption{\textbf{Semantic shift across activation levels.}
    Left: the Top-$4$ highest-activating dataset sample for the same feature (a retrieval baseline).
    Right: samples generated by our sampler, ordered by increasing intrinsic score $s_k(x)$ (annotated below each image).
    As $s_k(x)$ increases, the dominant evidence shifts from coarse, context-like court/field cues to a localized object cue (the tennis ball), illustrating that extreme activations may emphasize only one semantic mode.}

    \label{fig:semantic_shift} 
    \end{center}
\end{figure}

We observe that the \emph{dominant visual evidence} associated with a fixed feature can vary across activation levels.
\Cref{fig:semantic_shift} shows samples (generated by EnergyDPS) ordered by increasing intrinsic score $s(x)$.
At lower scores, the feature is primarily supported by \emph{contextual cues}, such as court-like green surfaces and line-marking textures, while an explicit ball cue is often absent or weak.
As $s(x)$ increases, the evidence progressively concentrates on a \emph{localized object cue}: high-score samples consistently contain a salient tennis ball, even when the surrounding context becomes more variable.
This case suggests that restricting interpretation to extreme activations can over-emphasize a single semantic mode (here, the ball) and under-represent other co-occurring modes (here, court-like context), motivating a distributional characterization of feature semantics rather than a single-threshold view.

\subsection{Efficiency Comparisons for Computation}
\label{appd:exp_dinov3_cost}

Table~\ref{tab:comp} reports end-to-end runtime and peak GPU memory per sample on a single NVIDIA
H100 (80GB) under each method's standard instantiation (all methods follow the same settings as in
our main experiments). Dataset Search is prohibitively expensive (4252 s/sample) because it
performs a full scan over ImageNet-1K to retrieve maximally activating examples (batch size $32$),
highlighting a scalability bottleneck that grows with the candidate pool size. DiffExplainer incurs
substantially higher memory (26.5GB) and runtime (217 s/sample), consistent with prompt-space proxy
optimization through a text-conditioned diffusion model.

EnergyDPS achieves a more favorable accuracy--efficiency trade-off in practice. Notably, EnergyDPS
uses an unconditional DDPM prior, whereas DiffExplainer relies on a stronger text-conditioned
backbone (Stable Diffusion v1.5); prior evidence suggests unconditional diffusion often fits the
data worse than conditional generation, making this comparison conservative for EnergyDPS in terms
of generation quality. Despite this disadvantage, EnergyDPS remains faster and more memory-efficient
than DiffExplainer (96 vs.\ 217 s/sample; 14.6GB vs.\ 26.5GB) while producing substantially more
interpretable and faithful visualizations in Table \ref{tab:exp_3} and Fig. \ref{fig:exp3}. Compared to MACO, EnergyDPS
introduces only a moderate overhead (96 vs.\ 48 s/sample) but yields a marked improvement in faithfulness and semantic coherence, indicating that the gains do not come at an excessive computational cost.

\begin{table}[h]
    \centering
    \caption{\textbf{Efficiency comparisons.} Processing time and peak GPU memory per sample on a single
    NVIDIA H100 80GB under the same settings as our main experiments.}
    \begin{tabular}{ccc}
        \toprule
        Methods & Processing time (s/sample) & GPU memory cost (MiB) \\
        \midrule
        Dataset Search & 4252 & 3865\\
        MACO \cite{maco} & 48 & 7765\\
        DiffExplainer \cite{diffexplainer} & 217 & 26471\\
        EnergyDPS (ours) & 96 & 14567\\
        \bottomrule
    \end{tabular}

    \label{tab:comp}
\end{table}

\subsection{Object Drift under Task Injection} 

\label{appd:exp3_dinov3_task_injection_control}

This section provides an empirical diagnostic of \emph{object drift} under task injection, which is discussed in Sec. \ref{sec:object}.
We start from the intrinsic EnergyDPS target
$q_\beta(x)\propto p_\theta(x)\exp(\beta s(x))$
and add an external guidance term $f_c$ with weight $\eta\ge 0$, yielding
\begin{equation}
q_{\beta,\eta,c}(x)\ \propto\ p_\theta(x)\exp\!\big(\beta s(x)+\eta f_c(x)\big).
\label{eq:appd_task_injected_target}
\end{equation}
Our goal is not to argue that task-guided explanations are invalid---they are appropriate when one
aims to explain \emph{task-conditioned} behavior (e.g., spurious features for a target label).
Rather, we use these experiments to show that, under \emph{intrinsic} MI, introducing $f_c$ can
change what is being visualized: even when the intrinsic activation level is held fixed, increasing
task alignment can qualitatively alter the semantics of samples.

\paragraph{Protocol (matched intrinsic activation).}
Across all settings, we generate samples using EnergyDPS with both intrinsic and external guidance.
We then compare samples while keeping the intrinsic score approximately matched,
$s(x)\approx \bar{s}$, but allowing the external alignment $f_c(x)$ to vary.
If the visualization were purely driven by the intrinsic score, then samples with matched $s(x)$
should exhibit broadly consistent semantics; large semantic changes under matched $s(x)$ indicate
object drift toward the injected task objective.

\subsubsection{Semantic task injection via CLIP prompt alignment}
\label{appd:obj_drift_clip}

We instantiate $f_c$ using CLIP text--image alignment for a fixed prompt $c$.
In this experiment, the prompt is:
\emph{``photorealistic underwater wildlife photo, great white shark swimming, side view slightly
from below, close to camera, mouth slightly open, sharp teeth visible, clear deep blue ocean,
sunlight rays and surface ripples above, small fish around, natural colors, high detail, sharp
focus, 8k''}.
We report the intrinsic score $s(x)$ (SAE feature activation) and the CLIP alignment score $f_c(x)$.
Fig.~\ref{fig:mix} shows that samples can exhibit markedly different semantics as $f_c(x)$ increases,
even when $s(x)$ remains nearly unchanged, indicating that the injected prompt objective can
dominate the resulting visualization.

\begin{figure}[h]
    \scriptsize
    
	\setlength{\tabcolsep}{0pt}
    \begin{center}
	\begin{tabular}{cccccc}

        \includegraphics[width=0.16\textwidth]{figures/exp3/additional/81/dps.png} &
        \includegraphics[width=0.16\textwidth]{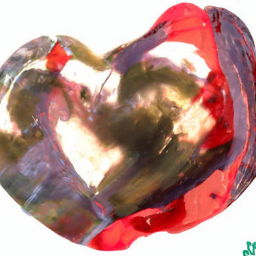} &
        \includegraphics[width=0.16\textwidth]{figures/exp3/mix/16.8788_19.0615.png} &
        \includegraphics[width=0.16\textwidth]{figures/exp3/mix/17.7417_31.3358.png}&
        \includegraphics[width=0.16\textwidth]{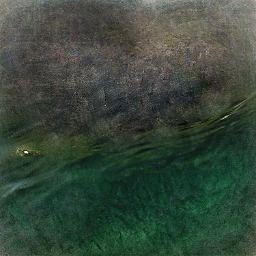} &
        \includegraphics[width=0.16\textwidth]{figures/exp3/mix/16.0728_40.8028.png}
         \\

        \makecell[c]{$s(x)=16.8034$\\$f_c(x)=4.2130$} & 
        \makecell[c]{$s(x)=16.6693$\\$f_c(x)=10.1400$} & 
        \makecell[c]{$s(x)=16.8788$\\$f_c(x)=19.0615$} & 
        \makecell[c]{$s(x)=17.7417$\\$f_c(x)=31.3358$} & 
        \makecell[c]{$s(x)=15.0999$\\$f_c(x)=35.9034$} &
        \makecell[c]{$s(x)=16.0728$\\$f_c(x)=40.8028$}\\

	\end{tabular}
	\caption{\textbf{Object drift under semantic task injection (CLIP prompt).}
    EnergyDPS is guided by an intrinsic SAE score $s(x)$ and an external CLIP alignment score $f_c(x)$
    for a fixed prompt $c$.
    Although the intrinsic activations are nearly matched across samples, increasing $f_c(x)$ induces
    large semantic changes, indicating that prompt-level task injection can dominate the visualization
    and shift the explanation away from the intrinsic feature.}

    \label{fig:mix} 
    \end{center}
\end{figure}

\subsubsection{Reference injection via pixel-level \texorpdfstring{$\ell_2$}{L2} guidance}
\label{appd:obj_drift_l2}

We further consider a stronger, non-linguistic task objective defined by a reference image $y$.
To keep the ``larger-is-better'' convention consistent with Eq.~\eqref{eq:appd_task_injected_target},
we define
\begin{equation}
f_y(x)\ :=\ -\|x-y\|_2^2,
\end{equation}
which is equivalent to minimizing the pixel MSE to $y$.
Fig.~\ref{fig:mix2} shows the same phenomenon: under approximately matched intrinsic activations
$s(x)$, decreasing MSE (stronger reference alignment) drives samples toward reconstructing the
reference, overriding feature-specific variation. This provides a complementary instance of object
drift where the injected objective is purely pixel-level rather than language-mediated.

\paragraph{Takeaway.}
Across both semantic (CLIP) and pixel-level ($\ell_2$) objectives, we observe a consistent pattern:
\emph{matched intrinsic activation does not guarantee matched semantics once an external objective is
introduced}. This supports the distributional view in Section~\ref{sec:object}: task injection
changes the effective object of study and can therefore be misaligned with intrinsic feature
interpretation.

\begin{figure}[h]
    \scriptsize
    
	\setlength{\tabcolsep}{0pt}
    \begin{center}
	\begin{tabular}{cccccc}

        \includegraphics[width=0.16\textwidth, height=0.16\textwidth]{figures/exp3/mix2/ref.JPEG} &
        \includegraphics[width=0.16\textwidth]{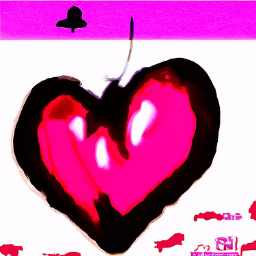} &
        \includegraphics[width=0.16\textwidth]{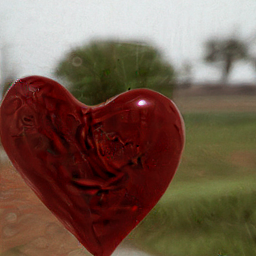} &
        \includegraphics[width=0.16\textwidth]{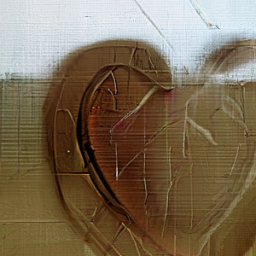}&
        \includegraphics[width=0.16\textwidth]{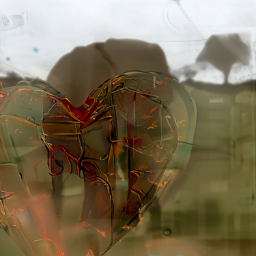} &
        \includegraphics[width=0.16\textwidth]{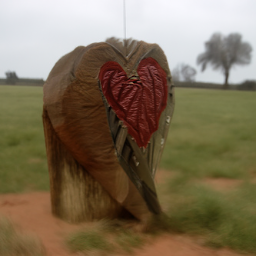}
         \\

        Reference & 
        \makecell[c]{$s(x)=21.17$\\$f_y(x)=-1.0868$} & 
        \makecell[c]{$s(x)=20.71$\\$f_y(x)=-0.0553$} & 
        \makecell[c]{$s(x)=20.79$\\$f_y(x)=-0.0308$} & 
        \makecell[c]{$s(x)=21.00$\\$f_y(x)=-0.0137$} & 
        \makecell[c]{$s(x)=20.95$\\$f_y(x)=-0.0088$} 
        \\

	\end{tabular}
    	\caption{\textbf{Object drift under reference injection (pixel $\ell_2$ guidance).}
    EnergyDPS is guided by the intrinsic SAE score $s(x)$ and a reference objective
    $f_y(x)=-\|x-y\|_2^2$ for a fixed reference image $y$.
    Even when $s(x)$ is approximately matched, reducing MSE increasingly forces samples to resemble $y$,
    showing that pixel-level task injection can override intrinsic feature semantics.}

    \label{fig:mix2} 
    \end{center}
\end{figure}

\subsection{More Comparisons in Features}
\label{appd:exp_dinov3_demo}

We provide additional qualitative comparisons for SAE features in DINOv3.
Fig.~\ref{fig:exp3_more_summary} shows a $10\times10$ grid of visualizations for 100 randomly
selected features using MACO, DiffExplainer, and EnergyDPS.
Fig.~\ref{fig:exp3_more} further reports several representative features with clearer semantics.

Qualitatively, DiffExplainer exhibits a clear semantic mismatch relative to the other references
(Top-$4$ dataset samples, MACO, and EnergyDPS): across many features, its generations drift toward
different high-level semantics rather than preserving feature-specific patterns suggested by the
intrinsic baselines (Fig.~\ref{fig:exp3_more_summary}--\ref{fig:exp3_more}).
This observation is consistent with our distributional view that proxy/prompt-guided optimization
restricts the search to a prompt-parameterized family, which can be poorly aligned with
fine-grained, non-verbal, or polysemantic internal features.

\begin{figure}[h]
    
	\setlength{\tabcolsep}{3pt}
    \begin{center}
	\begin{tabular}{ccc}

        \includegraphics[width=0.30\textwidth]{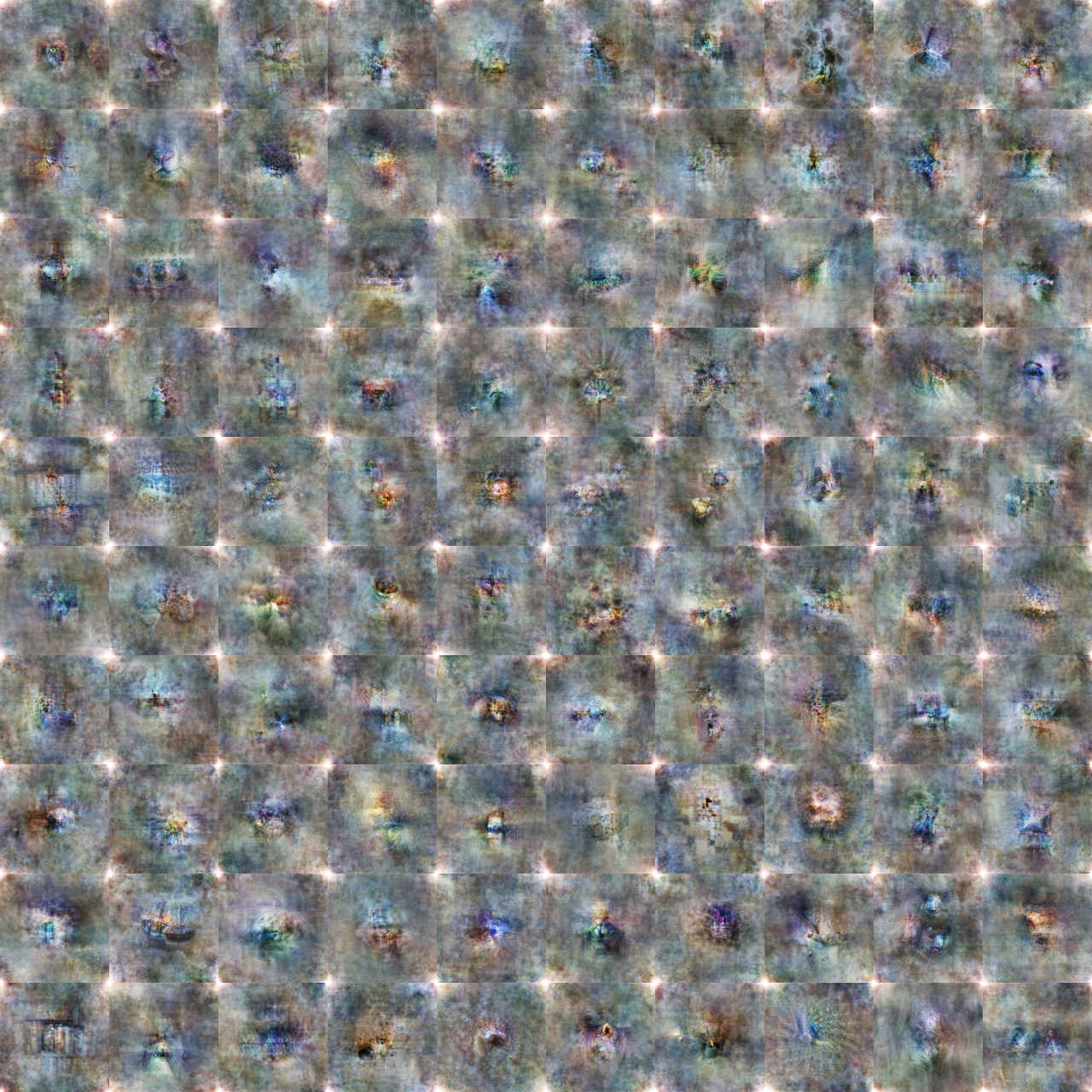} &
        \includegraphics[width=0.30\textwidth]{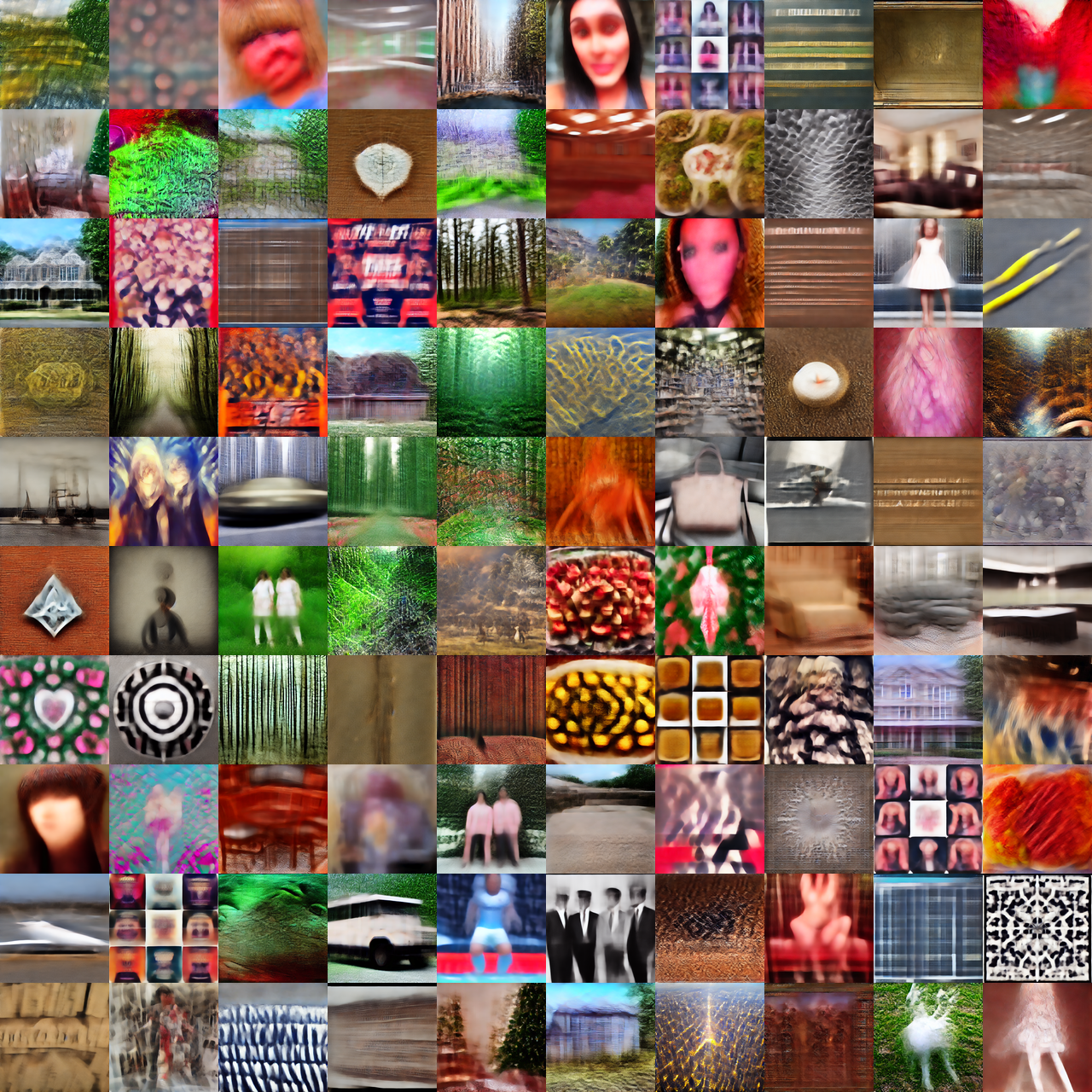} &
        \includegraphics[width=0.30\textwidth]{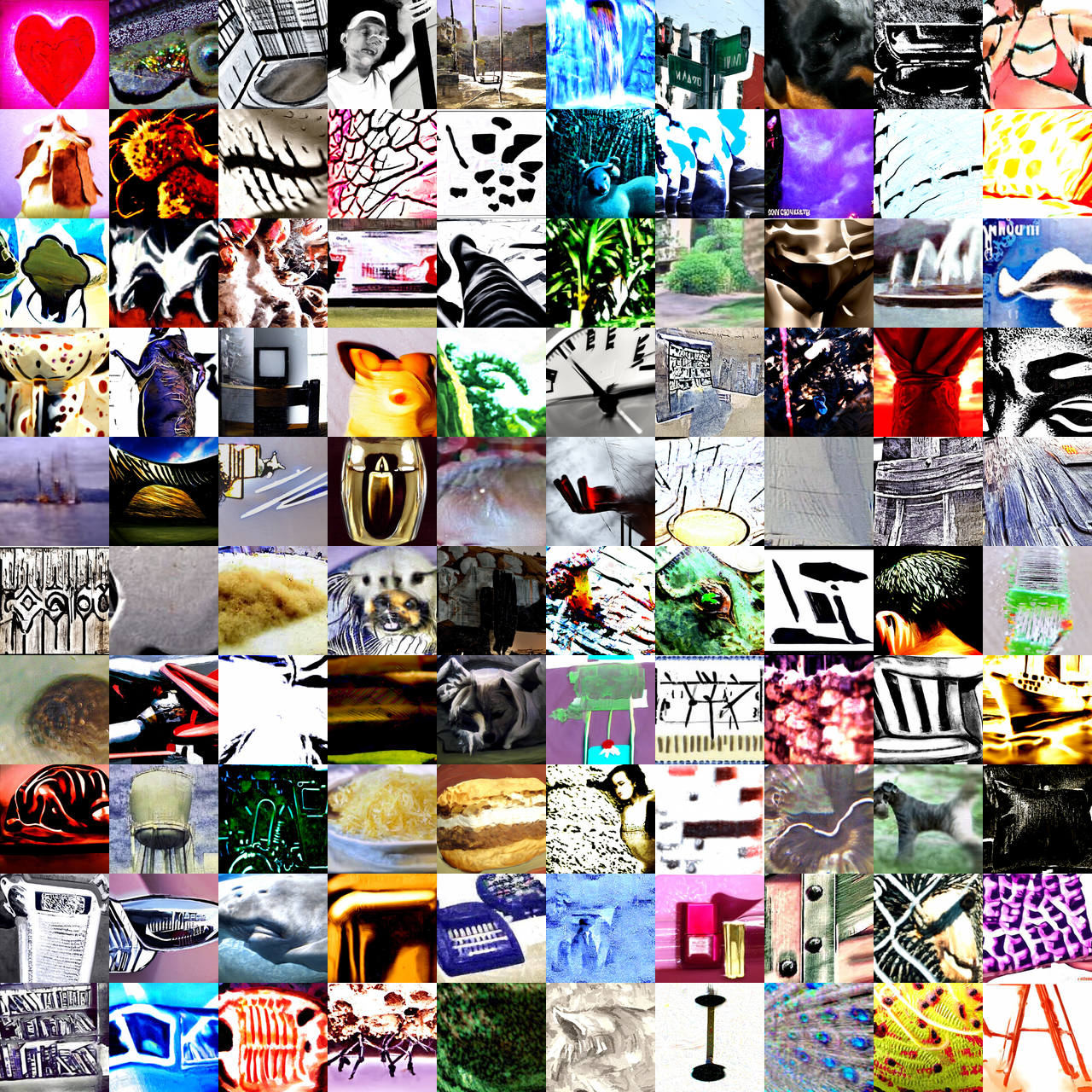}
        \\

        \makecell[c]{MACO} & 
        \makecell[c]{DiffExplainer} & 
        EnergyDPS (ours)\\

	\end{tabular}
	\caption{\textbf{Additional feature visualizations on 100 random SAE features.}
	Each panel shows a $10\times10$ grid of feature visualizations produced by the corresponding
	method under the same evaluation protocol as in the main experiment.}

    \label{fig:exp3_more_summary} 
    \end{center}
\end{figure}

\begin{figure}[h]
    
	\setlength{\tabcolsep}{3pt}
    \begin{center}
	\begin{tabular}{cccc}

        
        \includegraphics[width=0.20\textwidth]{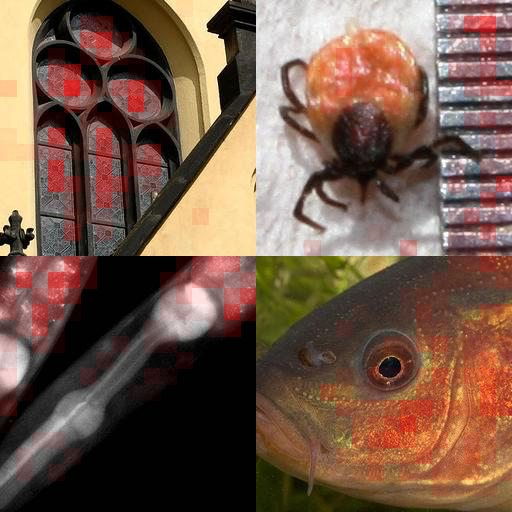} &
        \includegraphics[width=0.20\textwidth]{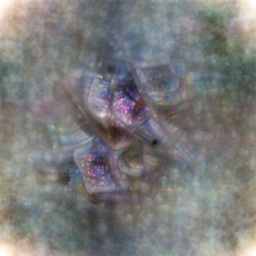}&
        \includegraphics[width=0.20\textwidth]{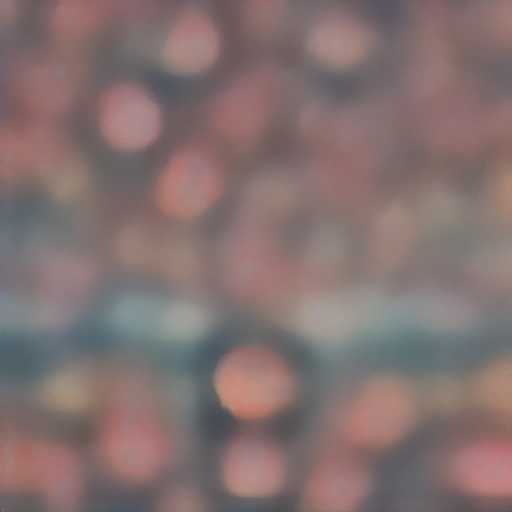} &
        \includegraphics[width=0.20\textwidth]{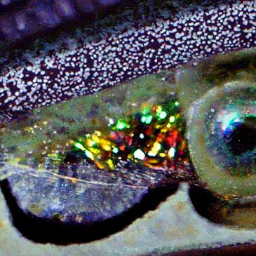} \\
        \multicolumn{4}{c}{Feature $413$} \\ 
        
        
        
        \includegraphics[width=0.20\textwidth]{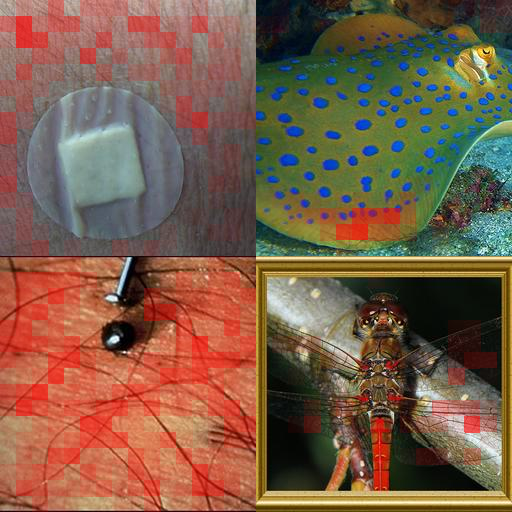} &
        \includegraphics[width=0.20\textwidth]{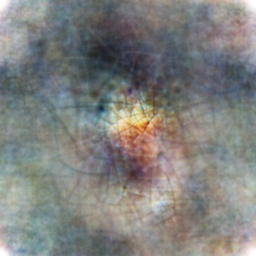}&
        \includegraphics[width=0.20\textwidth]{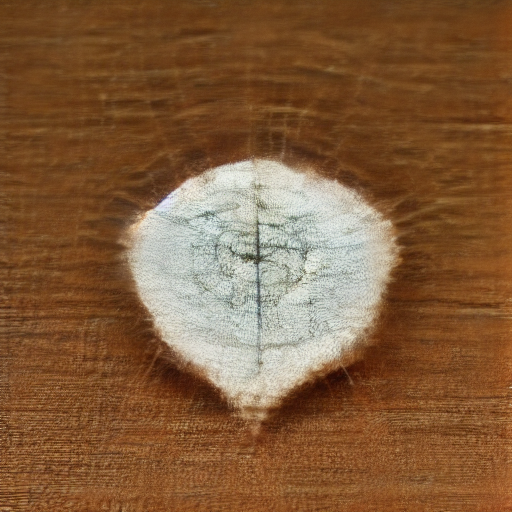} &
        \includegraphics[width=0.20\textwidth]{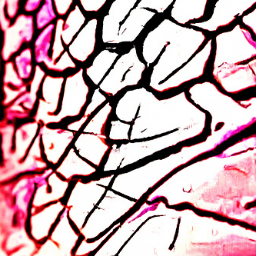} \\
        \multicolumn{4}{c}{Feature $1404$} \\ 

        \includegraphics[width=0.20\textwidth]{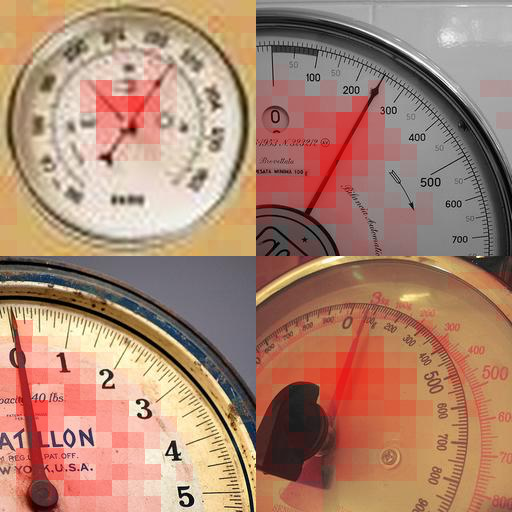} &
        \includegraphics[width=0.20\textwidth]{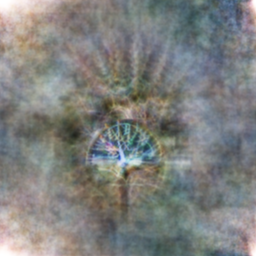}&
        \includegraphics[width=0.20\textwidth]{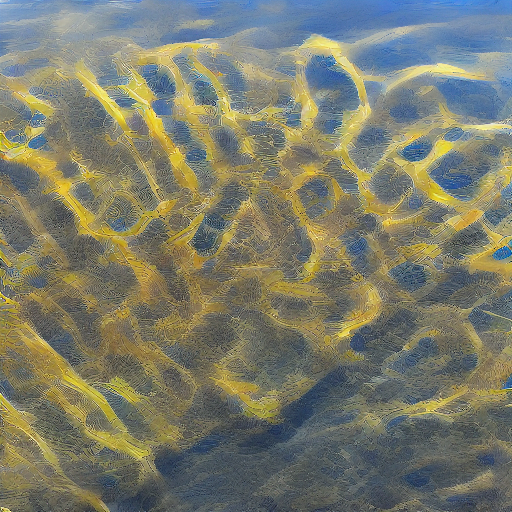} &
        \includegraphics[width=0.20\textwidth]{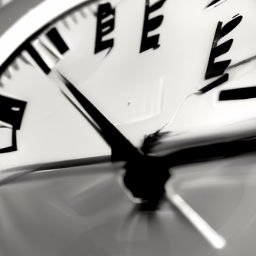} \\
        \multicolumn{4}{c}{Feature $4043$} \\ 

        \includegraphics[width=0.20\textwidth]{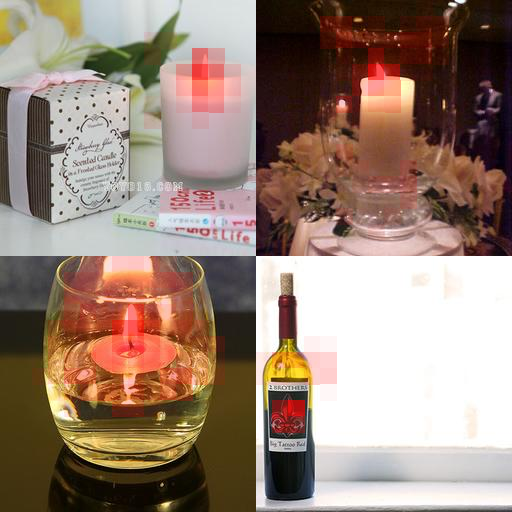} &
        \includegraphics[width=0.20\textwidth]{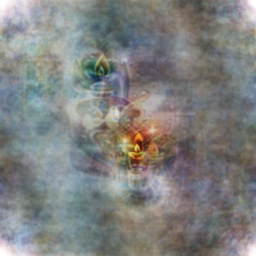}&
        \includegraphics[width=0.20\textwidth]{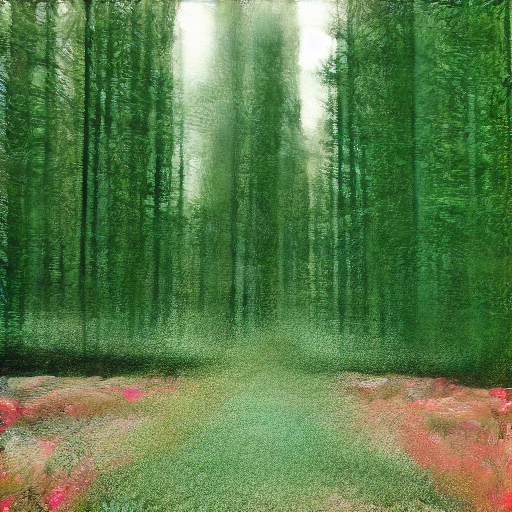} &
        \includegraphics[width=0.20\textwidth]{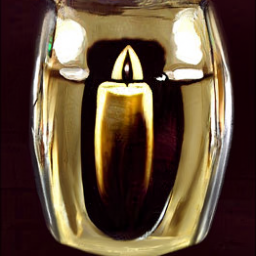} \\
        \multicolumn{4}{c}{Feature $4559$} \\

        \includegraphics[width=0.20\textwidth]{figures/exp3/4831/top_samples_feature_4831_act_13.33.png} &
        \includegraphics[width=0.20\textwidth]{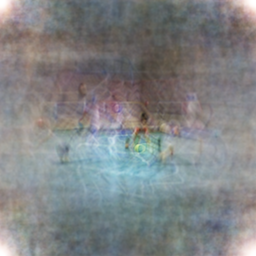}&
        \includegraphics[width=0.20\textwidth]{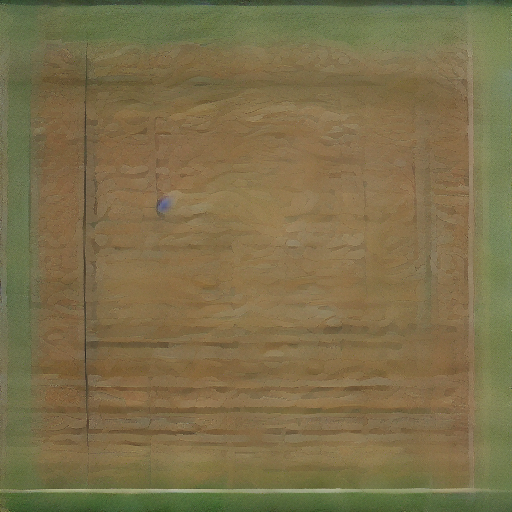} &
        \includegraphics[width=0.20\textwidth]{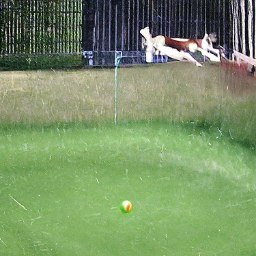} \\
        \multicolumn{4}{c}{Feature $4831$}\\
        
        Top-$4$ samples & 
        \makecell[c]{MACO} & 
        \makecell[c]{DiffExplainer} & 
        EnergyDPS (ours)\\

	\end{tabular}
	\caption{\textbf{Representative feature examples.}
	For each feature, we show Top-$4$ dataset samples (retrieval baseline), MACO, DiffExplainer,
	and EnergyDPS. DiffExplainer often produces qualitatively different semantics from the other
	references, while EnergyDPS remains closer to the dataset exemplars and intrinsic-guided
	optimization under the same protocol.
    These features visually appears to capture eye-shaped, mesh texture, panel-shaped, candle, tennis structures, respectively. 
    These textual descriptions are only post-hoc visual summaries for reader guidance; we do not assume or enforce any one-to-one alignment between word labels and SAE features, nor treat them as ground-truth labels.
    }

    \label{fig:exp3_more} 
    \end{center}
\end{figure}




\end{document}